\documentclass[11pt,letterpaper]{article}
\usepackage[margin=1in]{geometry}
\usepackage[T1]{fontenc}
\usepackage{lmodern}
\usepackage{amsmath,amssymb,mathtools,amsthm}
\usepackage{microtype}
\usepackage{graphicx,booktabs}
\usepackage{algorithm,algpseudocode}
\usepackage[round]{natbib}
\usepackage{xcolor}
\usepackage[hidelinks]{hyperref}
\newtheorem{lemma}{Lemma}
\newtheorem{theorem}[lemma]{Theorem}
\newtheorem{corollary}[lemma]{Corollary}
\newtheorem{proposition}{Proposition}[section]
\newtheorem{claim}[proposition]{Claim}
\newcommand{\E}{\mathbb E}
\newcommand{\R}{\mathbb R}
\newcommand{\Law}{\operatorname{Law}}
\newcommand{\norm}[1]{\lVert#1\rVert}
\newcommand{\arxivdir}{}
\IfFileExists{arXiv version/figures/gaussian-contours.pdf}{\renewcommand{\arxivdir}{arXiv version/}}{}
\hypersetup{
  pdftitle={Two-way Guidance: Simultaneous Learning of Summary Statistics for Generative Bayes},
  pdfauthor={Percy S. Zhai, Max Pachman, Peverel D. Shipley, Jungeum Kim}
}
\title{Two-way Guidance: Simultaneous Learning of\\ Summary Statistics for Generative Bayes}
\author{Percy S.~Zhai\textsuperscript{1}\quad Max Pachman\textsuperscript{2}\quad
Peverel D.~Shipley\textsuperscript{2}\quad Jungeum Kim\textsuperscript{2}\\[0.7em]
\small \textsuperscript{1}University of Chicago\\
\small \textsuperscript{2}North Carolina State University}
\date{}
\begin{document}
\maketitle
\begin{abstract}
Bayesian simulation-based inference (SBI) can struggle to recover increasingly concentrated posteriors as observation size grows.
We consider learning sufficient summary statistics \emph{jointly} with the posterior as a general principle for scalable inference in various SBI methods.
A common measure taken in SBI is dimension reduction through predetermined summaries, which may partially discard posterior information.
Our approach instead imposes structure on how the posterior learner depends on a learned summary (e.g., affinity), and learns the summary simultaneously with the posterior.
We identify an important mechanism named two-way guidance: under a structural restriction on the summary, the posterior sufficiency of the learned summary simplifies posterior learning, whereas a suitable posterior objective guides the learned summary towards posterior sufficiency.
This mechanism explains a counterintuitive finding that the structural restriction on the summary statistics improves posterior learning, rather than limit its performance.
We establish theoretical guarantees on both posterior estimation error and the learned summary's departure from sufficiency, thus identifying conditions on the simulation budget for recovering increasingly concentrated posteriors as observation size grows.

\end{abstract}
\section{Introduction}
\label{sec:introduction}

Bayesian simulation-based inference (SBI) aims to recover the posterior distribution $\pi(\theta\mid X)$ when data can be simulated from a parameter $\theta$, but the likelihood is unavailable for evaluation.
A challenging setting arises when the observation $X$ is high-dimensional and only a handful of parameters govern its distribution.
Such problems arise in dynamical systems and population models \citep{cranmer2020frontier, gloeckler2025compositional}.
For instance, observing a time series for longer should improve inference about its governing parameters.
In models where information accumulates, the posterior becomes increasingly concentrated as the observation size $n$ grows.
The goal is a posterior learner that recovers this improved precision and correctly describes the remaining uncertainty.

We consider learning summaries that retain posterior information simultaneously with the posterior.
Our approach imposes structure on how the posterior learner depends on a summary $S=f(X)$, while allowing the summary itself to be learned.
Affinity assumption is one example: the posterior-learning function is linear in the learned summary coordinates; see \cite{kim2025quantile}.
The summary statistics organize posterior learning, and the posterior objective guides the learned summary not to lose the information needed for inference.
This two-way guidance is the mainline of this paper, and we study when this principle supports posterior accuracy even with a growing $n$.

Neural likelihood estimation, quantile learning, adversarial inference, and flow matching provide different ways to perform posterior inference from simulated data \citep{papamakarios2019snl, kim2025quantile, wang2026generative, zhai2026cfm}.
In these methods, an increasing observation dimension remains a challenge.
Compositional methods address it by exploiting independence or Markov structure to combine local inference problems \citep{geffner2023compositional, gloeckler2025compositional}.
On the theoretical side, \citet{frazier2024accuracy} characterize simulation requirements for posterior accuracy with a specified summary statistic as the observation size grows.
When the summary is learned, its information loss and learning cost must also enter this analysis.

The value of learning informative summaries has been discussed in the literature.
In approximate Bayesian computation (ABC), prefixed summaries facilitate data comparisons, but require model knowledge and can discard posterior information \citep{hoffmann2026unifying}.
Automatic approaches instead learn summaries for an inferential objective.
Semi-automatic ABC targets posterior means \citep{fearnhead2012semi}.
BayesFlow jointly trains summary and posterior networks \citep{radev2020bayesflow}.
Information-based methods learn representations that can be used in ABC and neural likelihood inference \citep{chen2021neural}.
BayesFlow also establishes an ideal-convergence result connecting joint learning to correct posterior sampling, while \citet{hoffmann2026unifying} organize summary selection through expected posterior entropy.

A particularly useful example is quantile learning. Building on affine dependence in learned coordinates \citep{rosenberg2023nonlinear}, \citet{kim2025quantile} jointly learn a summary and posterior quantiles under an affinity assumption, i.e., the true quantile map being linear to the summary statistic.
A meaningful follow-up question is why an apparent restriction on the class of summary statistics can improve posterior learning compared with using the raw data.
This question motivates a broader study of structured summary learning across SBI methods, an understudied topic in the literature.
Our central contribution is to establish a connection between structured summary learning and SBI scalability.
We explain when imposing a simple dependence on a learned summary can make posterior learning more efficient, and how the imposed structure can be automatically learned.

Throughout this paper, the notion \emph{posterior sufficiency} of summary $f$ means preservation of the posterior,
\begin{equation}
\pi(\cdot\mid X)=\pi(\cdot\mid f(X))
\quad\text{almost surely}.
\label{eq:posterior-sufficiency}
\end{equation}
Note that classical definition of sufficiency is characterized by Neyman-Fisher factorization theorem.
The definition in \eqref{eq:posterior-sufficiency} is under one specified prior, whereas classical sufficiency implies posterior sufficiency for every prior.
In this paper, an important property of such sufficient summary is what we call \emph{two-way guidance}.
On the one direction, sufficient summary simplifies posterior learning by preserving the information needed for inference and organizing it into low-dimensional coordinates in which the posterior-learning function has a simpler structure.
A less intuitive finding is that the posterior objective, in turn, guides the summary towards sufficiency.
To see this, note that simultaneous learning of summary and posterior incurs two extra components in posterior error: loss of information from replacing raw $X$ by summary $f(X)$, and a posterior learning gap based on $f(X)$.
We shall verify that the first gap vanishes exactly when \eqref{eq:posterior-sufficiency} holds under mild conditions.
Therefore, an insufficient summary incurs a penalty that cannot be removed by improving the posterior network alone.
Joint learning can reduce that penalty by guiding the summary toward losing less information.
We shall see that this two-way guidance intuition applies to multiple SBI methods.

At first sight, imposing a structural constraint (like aforementioned affinity assumption) appears to cause an additional limitation on summary learning.
Note, however, that a sufficient summary is not unique.
Sufficiency concerns the posterior information retained by the summary's $\sigma$-algebra.
Posterior learning also depends on how that information is represented.
An invertible change of summary coordinates preserves the information but can change the difficulty of how it guides posterior learning.
When the coordinates are learned jointly, nonlinear features of $X$ can be absorbed into $f$, allowing a simpler dependence on observation via summary.
We would like to point out that we do not need to recover a specific predetermined sufficient statistic, but rather any informative representation in which the posterior-learning function is simple with respect to the summary.
This is exactly what explains the counterintuitive benefit of a structural restriction.
The benefit of saving simulation budget can outweigh the cost of learning the summary and its structural error.
We shall quantify these requirements relative to posterior uncertainty as $n$ grows.

Our main contributions are as follows.
\begin{itemize}
\item We establish a theoretical connection between structured summary learning and SBI scalability.
We explain how the two-way guidance work, i.e., how the sufficiency of summary statistic and the accuracy of posterior learning reinforce each other.
We also explain how jointly learned coordinates counterintuitively make structural restrictions such as affinity beneficial for posterior learning.
\item
For Bayesian SBI objectives whose excess loss controls posterior error, we establish finite-simulation guarantees for both full-posterior accuracy and the learned summary's departure from sufficiency.
The bounds account for joint approximation, estimation, and optimization errors, quantifying how approximate sufficiency emerges under explicit conditions.
\item We identify sufficient simulation budgets for recovering increasingly concentrated posteriors. Comparing raw conditioning with our proposed method, we count the cost of summary learning and give sufficient conditions on the simulation budget $N(n)$ needed to correctly recover the shrinking posterior, which exhibits the benefit of summary learning.
\end{itemize}

\section{Structured Summary Learning}
\label{sec:methods}

Let $\theta\in\R^d$ denote the parameter and $X=(x_1,\ldots,x_n)$ the observation, where $x_i\in\R^{d_X}$.
Training pairs $(\theta,X)$ are generated from the prior and simulator.
We learn a summary $S=f(X)\in\R^q$ together with a posterior sampler.
Write $U\sim F_U$ for a source variable independent of $X$, and $G(u,s)$ for a map from source noise and a summary to the parameter space.
The desired map retains the full posterior through
\begin{equation}
\Law\{G(U,f(X))\mid X\}=\pi(\cdot\mid X).
\label{eq:source-posterior}
\end{equation}
Thus, posterior sampling consists of evaluating $G$ at independent source draws and the summary of the observed dataset.
The fitted map approximates this equality; its accuracy depends on both the information retained by $f$ and the posterior learned from it.

We describe how summary learning helps three SBI methods: Deep Generative Quantile Bayes (DGQB) \citep{kim2025quantile}, conditional flow matching \citep{zhai2026cfm}, and Bayesian GAN \citep{wang2026generative}.
The summary statistics enter different functions: the quantile potential, the joint velocity, and the conditional generator, respectively.
In each case, the summary is simultaneously learned through an objective involving the simulated parameter.
We write $f_\eta$ when the encoder parameters need to be distinguished from those of the posterior learner.
Appendix~\ref{app:original-methods} introduces the original methods and their relation to the objectives below.

\subsection{Deep Generative Quantile Bayes}
\label{sec:qr}

DGQB represents a posterior sampler as $G(u,f(X))=\nabla_u\psi(u,f(X))$, where the conditional potential $\psi$ is convex in its source argument $u$ \citep{kim2025quantile}.
The dependence on the summary can be specified through the architecture of $\psi$.
Once the networks are trained, differentiating the potential at fresh source draws gives posterior samples without a Markov chain or numerical integration.

For a minibatch of $m$ simulated pairs $(\theta_i,X_i)$ and independent source draws $U_j$, the conditional quantile objective takes the form
\begin{equation}
\widehat{\mathcal L}_Q(\psi,f)
=\frac1{m^2}\sum_{i=1}^m\sum_{j=1}^m\psi(U_j,f(X_i))
+\frac1m\sum_{i=1}^m\max_{1\le j\le m}
\big\{U_j^\top\theta_i-\psi(U_j,f(X_i))\big\}.
\label{eq:qnn-general-loss}
\end{equation}
This is a sample approximation of the conditional transport objective \citep{carlier2016vqr,sun2022conditional}, with its conjugate supremum replaced by a sample maximum.
The first term averages over source--summary pairs, reflecting independence of $U$ and $X$.
The summary $f$ and potential $\psi$ are learned simultaneously.

\begin{samepage}
Specifically, \citet{kim2025quantile} use the affinity assumption
\begin{equation}
\psi(u,f(X))=\varphi(u)+b(u)^\top f(X),
\label{eq:qr-potential}
\end{equation}
\end{samepage}
where $\varphi$ is scalar-valued and $b$ takes values in $\R^q$.
With centered batch summaries, $\sum_i f(X_i)=0$, \eqref{eq:qnn-general-loss} reduces to their Equation~(3.4),
\begin{equation}
\widehat{\mathcal L}_Q
=\frac1m\sum_{i=1}^m\bigg[\varphi(U_i)
+\max_{1\le j\le m}\big\{U_j^\top\theta_i-\varphi(U_j)
-b(U_j)^\top f(X_i)\big\}\bigg].
\label{eq:qr-loss}
\end{equation}
Centering leaves only the intercept outside the maximizing term, and the encoder, intercept, and slope functions are updated together.
Affinity is only one choice within \eqref{eq:qnn-general-loss}.
Other choices include nonnegative summary-dependent combinations of convex potentials and partially input-convex neural networks \citep{amos2017input}, which permit nonlinear summary dependence while preserving convexity in $u$.
We use the affine special case to explain the two-way guidance and present its training procedure in Algorithm~\ref{alg:qr}.

The two-way guidance intuition, taking affinity assumption as an example, can be seen directly in \eqref{eq:qr-loss}.
Once we obtain a sufficient summary admitting this affine dependence, the task of learning $\psi$ jointly in $(u,s)$ reduces to learning $\varphi$ and $b$ as functions of $u$ alone, without losing posterior information.
In the reverse direction, the maximizing term compares the potential with the simulated parameter $\theta_i$ and sends its gradient to $f(X_i)$ through $b(U_j)$.
If $f$ discards a feature of $X$ that changes the posterior, this missing dependence incurs a population loss that cannot be eliminated by improving $\varphi$ and $b$ while keeping $f$ fixed.
To remove this part of the loss, joint optimization must change $f$ so that the potential $\psi$ can distinguish those posterior differences.
Thus, the objective guides the summary to sufficiency.

\begin{algorithm}[t]
\caption{DGQB with a learned affine summary}
\label{alg:qr}
\begin{algorithmic}[1]
\raggedright
\State Initialize $f$, $\varphi$, and $b$.
\State Draw a minibatch of $(\theta_i,X_i)$ and $U_i\sim F_U$.
\State Evaluate and center the summaries $f(X_i)$.
\State Update $(f,\varphi,b)$ jointly using \eqref{eq:qr-loss}.
\State Repeat Steps 2--4 until training ends.
\State Freeze the fitted normalization and compute $s^*=\widehat f(X^*)$ for the observed dataset $X^*$.
\State Draw fresh $U$ and return $\nabla_u\widehat\psi(U,s^*)$.
\end{algorithmic}
\end{algorithm}

The earlier specification of \citet{sun2022conditional} uses a potential that is affine in centered $X$.
When the true posterior quantile depends nonlinearly on $X$, this specification is violated.
Learning $f$, on the contrary, allows nonlinear dependence on $X$ to enter through the summary while retaining an affine dependence on the learned summary.

\subsection{Conditional Flow Matching}
\label{sec:sfm}

Flow matching learns a velocity field that transports a source distribution at $t=0$ to a target distribution at $t=1$ through an ordinary differential equation \citep{lipman2023flow}.
For posterior inference, \citet{zhai2026cfm} construct a joint map in observation and parameter space.
Replacing the observation block by $S=f_\eta(X)$, we aim to transport independent source variables $S_0\sim\mathcal N(0,I_q)$ and $U=\theta_0\sim\mathcal N(0,I_d)$ to the target joint law of $(f_\eta(X),\theta)$.
We learn the joint velocity
\begin{equation}
\dot S_t=v^S_\alpha(t,S_t),\qquad
\dot\theta_t=v^\theta_\beta(t,S_t,\theta_t).
\label{eq:sfm-flow}
\end{equation}
The block-triangular structure means that the summary trajectory evolves independently of the parameter state and guides the parameter trajectory.

To express the posterior map $G$ explicitly, let $F_t(s_0)$ denote the summary flow, defined by
\begin{equation}
\frac{\mathrm d}{\mathrm dt}F_t(s_0)
=v^S_\alpha\big(t,F_t(s_0)\big),\qquad F_0(s_0)=s_0.
\label{eq:sfm-summary-map}
\end{equation}
For a well-posed invertible flow, a target summary $s$ determines its source point $F_1^{-1}(s)$ by solving \eqref{eq:sfm-summary-map} backward.
Along the corresponding trajectory $S_t=F_t(F_1^{-1}(s))$, the parameter equation in \eqref{eq:sfm-flow} defines
\begin{equation}
G(u,s)=u+\int_0^1 v^\theta_\beta(t,S_t,\theta_t)\,\mathrm dt,
\qquad \theta_0=u.
\label{eq:sfm-conditional-map}
\end{equation}
Thus, $G(u,s)$ is the parameter endpoint of a joint flow in parameter and summary space.
If the joint endpoint transports the independent source to the target joint law, its triangular form implies $\Law\{G(U,s)\}=\pi(\cdot\mid S=s)$ for almost every target summary $s$ \citep{baptista2024conditional}.
When $f$ is sufficient, this is the full-data posterior in \eqref{eq:source-posterior}.

For training, set $S_1=f_\eta(X)$ for each simulated pair $(\theta_1,X)$, and independently draw a source pair $(S_0,\theta_0)$ and $t\sim\operatorname{Unif}[0,1]$.
Training uses the straight-line interpolants
\begin{equation}
S_t=(1-t)S_0+tS_1,\qquad
\theta_t=(1-t)\theta_0+t\theta_1,
\label{eq:sfm-interpolants}
\end{equation}
and the regression losses
\begin{equation}
\begin{aligned}
\mathcal L_\theta(\eta,\beta)
&=\E\norm{v^\theta_\beta(t,S_t,\theta_t)-(\theta_1-\theta_0)}^2,\\
\mathcal L_S(\alpha;\eta)
&=\frac1q\E\norm{v^S_\alpha(t,\operatorname{sg}(S_t))
-\operatorname{sg}(S_1-S_0)}^2.
\end{aligned}
\label{eq:sfm-losses}
\end{equation}
Here $\operatorname{sg}(z)$ is the stop-gradient operation: it returns $z$ but contributes zero gradient during backpropagation.
The parameter loss updates $(\eta,\beta)$ together, whereas the summary loss updates only $\alpha$.

To demonstrate the two-way guidance in conditional flow matching, we may assume, for example, that the velocity functions in \eqref{eq:sfm-flow} are affine to $S_t$.
Besides reducing dimension, this structure prescribes how the velocities depend on the summary, leaving the networks parametrized by $\alpha$ and $\beta$ to learn the corresponding intercept and slope functions unrelated to observation, which is easier.
Conversely, if $f$ discards any feature that improves the prediction of parameter displacement $\theta_1 - \theta_0$, adjusting the velocity coefficients alone cannot recover the missing information.
Backpropagating the prediction error through $S_t$ allows joint optimization to reduce this loss by changing $f$ to retain such features, thereby directing summary learning toward sufficiency.

Note that in \eqref{eq:sfm-losses} the summary-flow loss is detached from the encoder.
Since the summary-flow loss only asks how easily source noise can be transported to the summary distribution, allowing it to update the summary $f$ could encourage the latter to discard information simply to make the distribution easier to transport, potentially collapsing the summary.
Detaching this loss leaves the encoder updates to the parameter-displacement objective.

\begin{algorithm}[t]
\caption{Summary flow matching}
\label{alg:sfm}
\begin{algorithmic}[1]
\raggedright
\State Initialize $f_\eta$, $v^S_\alpha$, and $v^\theta_\beta$.
\State Draw simulated pairs, independent source noise and times; form \eqref{eq:sfm-interpolants}.
\State Cache $\operatorname{sg}(S_1)$ and update $(\eta,\beta)$ using $\widehat{\mathcal L}_\theta$.
\State Update $\alpha$ using $\widehat{\mathcal L}_S$, cached summaries, and fresh summary noise and times.
\State Repeat Steps 2--4 until training ends.
\State Solve \eqref{eq:sfm-summary-map} backward from $f_{\hat\eta}(X^*)$ to $s_0^*$ and draw fresh $U$.
\State Integrate \eqref{eq:sfm-flow} forward from $(s_0^*,U)$ and return the parameter endpoint.
\end{algorithmic}
\end{algorithm}

The raw-data method learns a joint velocity in dimension $nd_X+d$ \citep{zhai2026cfm}.
Even though its parameter velocity has only $d$ outputs, it remains a regression function of the full observation state and the parameter state.
Increasing $n$ therefore increases the conditional input dimension and the size of the observation flow that must be learned and integrated.
Errors in that observation trajectory can also affect the parameter trajectory used for posterior sampling.
SFM learns the joint velocity in the summary and parameter space, whose dimension is $q+d$.
The encoder must still process the full dataset, and its information loss and learning cost remain part of the posterior-accuracy problem.

\subsection{Bayesian GAN}
\label{sec:sbgan}

B-GAN learns a conditional generator by distinguishing simulated pairs $(X,\theta)$ from generated pairs $(X,g_\beta(U,X))$ through an adversarial critic \citep{wang2026generative}.
Once trained, the generator provides independent draws from its fitted conditional distribution by direct evaluation at fresh source noise.
In SB-GAN, we introduce a learned summary through
\begin{equation}
G(u,s)=g_\beta(u,s)=a_\beta(u)+B_\beta(u)s,
\qquad s=f_\eta(X),
\label{eq:sbgan-generator}
\end{equation}
where $a_\beta(u)\in\R^d$ and $B_\beta(u)\in\R^{d\times q}$.
The encoder and the coefficient networks are learned together.
A separate critic $D_\omega(X,\theta)$ receives the original observation and may have its own encoder, with no parameters shared with $f_\eta$.

The population objective is
\begin{equation}
\min_{\eta,\beta}\sup_{D\in\mathcal D}
\E\big[D(X,\theta)-D\{X,g_\beta(U,f_\eta(X))\}\big].
\label{eq:sbgan-objective}
\end{equation}
For its ideal interpretation, $\mathcal D$ is the class of all jointly measurable critics that are 1-Lipschitz in $\theta$ for each fixed $X$.
With finite first moments and the usual measurability conditions, the inner supremum equals the average conditional $W_1$ distance between the full posterior and the generated distribution.
In practice, neural critic updates alternate with joint encoder and generator updates, using a gradient penalty in the parameter argument as described in Appendix~\ref{app:original-bgan}.

The two-way guidance also applies to this method.
The affine generator lets the summary organize posterior sampling through $q$ learned coordinates.
The critic supplies the reverse guidance by detecting discrepancies between simulated and generated parameters at the same observation.
Its gradient passes through $B_\beta(U)$ to the encoder, changing the summary according to errors in posterior generation.
Note that retaining the full observation in the critic is essential for this interpretation.
If the critic saw only the same learned summary, matching $\pi(\cdot\mid f_\eta(X))$ could achieve zero discrepancy even when the summary had discarded posterior information and collapsed to a point mass.
With the ideal full-data critic, an insufficient summary cannot achieve zero discrepancy.
A finite neural critic approximates this criterion, so the strength of this guidance depends on its ability to detect the relevant posterior differences.

\begin{algorithm}[t]
\caption{SB-GAN with a learned affine summary}
\label{alg:sbgan}
\begin{algorithmic}[1]
\raggedright
\State Initialize $f_\eta$, $(a_\beta,B_\beta)$, and a separate critic $D_\omega(X,\theta)$.
\State Take several critic steps on simulated and generated pairs, keeping $(\eta,\beta)$ fixed.
\State Draw a fresh simulated minibatch and source draws; set $\tilde\theta_i=g_\beta(U_i,f_\eta(X_i))$.
\State Update $(\eta,\beta)$ jointly to decrease $-m^{-1}\sum_iD_\omega(X_i,\tilde\theta_i)$, keeping $\omega$ fixed.
\State Repeat Steps 2--4 until training ends.
\State Freeze the fitted normalization and compute $s^*=f_{\hat\eta}(X^*)$.
\State Draw fresh $U$ and return $a_{\hat\beta}(U)+B_{\hat\beta}(U)s^*$.
\end{algorithmic}
\end{algorithm}

For a raw-data generator $g_\beta(U,X)$, both generation and discrimination must resolve dependence on a growing observation vector.
This can require greater network capacity and more simulated pairs, even when the parameter dimension is small.
SB-GAN separates the generator's data processing from its source-dependent coefficients, allowing one learned statistic to be reused across posterior draws.
The original B-GAN also considers DeepSets architectures; its pooled generator features can depend on both $U$ and $X$, unlike the statistic $f_\eta(X)$ used here.
The critic must still process the full observation, and its learning cost must be counted when evaluating the benefit of the summary.

\section{Posterior Accuracy and Learned Sufficiency}
\label{sec:learning}

We study whether joint learning recovers both the posterior and the sufficiency of the summary.
Let $N$ be the number of simulated training pairs and $(\widehat f,\widehat G)$ the fitted summary and posterior map.
Here $\E_N$ averages over the $N$ simulated training pairs and all training randomness, including auxiliary source draws and optimization.
The expectation $\E_X$ is over an independent observation from the prior predictive distribution.
With the fitted networks held fixed, write $\widehat\pi(\cdot\mid X)=\Law\{\widehat G(U,\widehat f(X))\mid X\}$ and define
\begin{align}
R_p&=\E_N\E_XW_2^2\{\widehat\pi(\cdot\mid X),\pi(\cdot\mid X)\},
\label{eq:v3-posterior-risk}\\
R_f&=\E_N\E_XW_2^2\{\pi(\cdot\mid X),\pi(\cdot\mid\widehat f(X))\}.
\label{eq:v3-summary-risk}
\end{align}
Here $W_2$ is the 2-Wasserstein distance, and the summary posterior also conditions on the fitted function held fixed.
We assume measurability and finite integrated second moments throughout.
Thus, $R_f=0$ means posterior sufficiency almost surely; candidate summaries are not assumed sufficient.

\begin{samepage}
\begin{lemma}[Posterior error and summary insufficiency]
\label{lem:v3-risk-decomposition}
For any fitted sampler depending on $X$ only through $\widehat f(X)$,
\begin{equation}
\begin{aligned}
R_p\leq{}&2R_f+2\E_N\E_XW_2^2\big\{\widehat\pi(\cdot\mid X),
\pi(\cdot\mid\widehat f(X))\big\},\\
R_f\leq{}&2R_p.
\end{aligned}
\label{eq:v3-risk-decomposition}
\end{equation}
\end{lemma}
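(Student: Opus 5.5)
The first inequality follows from the triangle inequality for $W_2$, inserted at the intermediate measure $\pi(\cdot\mid\widehat f(X))$. The second needs a different idea, namely that the summary posterior is a mixture of full-data posteriors; I expect the constant $2$ there to be the main obstacle.

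\textbf{First inequality.} Fix the fitted networks and the observation $X$, and write $\pi_X=\pi(\cdot\mid X)$, $\pi_{\widehat f}=\pi(\cdot\mid\widehat f(X))$, and $\widehat\pi_X=\widehat\pi(\cdot\mid X)$. By the triangle inequality,
\[
W_2(\widehat\pi_X,\pi_X)\le W_2(\widehat\pi_X,\pi_{\widehat f})+W_2(\pi_{\widehat f},\pi_X).
\]
Squaring and using $(a+b)^2\le 2a^2+2b^2$ gives a pointwise bound. Taking $\E_X$ and then $\E_N$ yields the first line of \eqref{eq:v3-risk-decomposition}. Two points need checking: the finite integrated second-moment assumption keeps every term finite, and the map $X\mapsto W_2^2(\cdot,\cdot)$ is measurable. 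No property of the sampler is used in this step.

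\textbf{Second inequality: mixture representation.} Here the hypothesis that $\widehat\pi_X=\nu(\cdot\mid\widehat f(X))$ depends on $X$ only through $S=\widehat f(X)$ is essential. By the tower property, $\pi_{\widehat f}=\E[\pi_X\mid S]$ is a mixture of the full-data posteriors over the conditional law of $X$ given $S$. Since $W_2^2$ is jointly convex under mixtures (mix the optimal couplings), for any measure $\nu$ that is fixed given $S$,
\[
W_2^2(\nu,\pi_{\widehat f})\le \E\big[W_2^2(\nu,\pi_X)\mid S\big].
\]

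\textbf{Second inequality: combining, and the constant.} Combining this with the triangle inequality $W_2(\pi_X,\pi_{\widehat f})\le W_2(\pi_X,\nu)+W_2(\nu,\pi_{\widehat f})$ and averaging, the two terms each contribute at most $R_p$. With $(a+b)^2\le 2a^2+2b^2$, or with Minkowski, this gives only $R_f\le 4R_p$. To sharpen this to $2R_p$, I would first try a direct gluing construction, conditional on $S$:
\begin{itemize}
\item draw $X$, then $(\theta,Z)$ from the optimal coupling of $\pi_X$ and $\nu$;
\item independently draw $X'$ from the law of $X$ given $(Z,S)$, and $\theta'$ from the conditional of the $X'$-coupling given $Z$.
\end{itemize}
The pair $(\theta,\theta')$ then couples $\pi_X$ with $\pi_{\widehat f}$. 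Given $(Z,S)$, the variables $\theta$ and $\theta'$ are conditionally i.i.d., so
\[
\E|\theta-\theta'|^2=2\,\E\big|\theta-\E[\theta\mid Z,S]\big|^2\le 2\,\E|\theta-Z|^2=2R_p,
\]
where the inequality holds because the conditional mean minimizes squared error. This is the step I expect to be delicate. It requires regular conditional distributions, and the identity $\E|\theta-\theta'|^2=2\,\E\operatorname{Var}(\theta\mid Z,S)$ for conditionally i.i.d.\ pairs. If that identity fails to go through as stated, I would fall back to the constant $4$.
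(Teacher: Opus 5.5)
Your proposal is correct, and once you reach the gluing construction it is essentially the paper's proof: drawing $X'$ from $\Law(X\mid Z,S)$ and then $\theta'$ from the coupling just samples $\theta'\sim\Law(\theta\mid S,Z)$ conditionally independently of $(X,\theta)$, and the paper uses exactly the identity $\E\norm{\theta-\theta'}^2=2\E\norm{\theta-\E(\theta\mid S,Z)}^2$ followed by the conditional-mean bound, so the identity goes through with finite second moments and you do not need to fall back to the constant $4$. The one step you should write out is that $\Law(Z\mid X)=\nu(\cdot\mid\widehat f(X))$ gives $Z\perp X\mid S$, which yields $\Law(\theta'\mid X)=\pi(\cdot\mid\widehat f(X))$ by the tower property; you also need a measurable-in-$X$ choice of optimal couplings (e.g.\ Villani, Corollary~5.22).
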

\end{samepage}

The first bound separates summary insufficiency from fitting the oracle posterior given the summary.
The reverse bound is the cornerstone of two-way guidance: accurate posterior learning requires the summary to retain posterior information.

Let $\mathcal F$ be the summary class and $\mathcal G_f$ the admissible posterior learners for $f$.
Write $\mathcal L(f,G)$ and $\mathcal L_N(f,G)$ for their population and empirical objectives, and $\mathcal L_0$ for the common full-data population optimum.
Define the approximation error
\begin{equation}
\Delta_{\mathcal F}
=\inf_{f\in\mathcal F}\left\{
\inf_{G\in\mathcal G_f}\mathcal L(f,G)-\mathcal L_0\right\}\geq0.
\label{eq:v3-approximation}
\end{equation}
For fixed $n$ and classes, $\Delta_{\mathcal F}$ is fixed; $\Delta_{\mathcal F}=0$ requires a summary and compatible posterior structure that approach the full-data optimum.

Write $\pi_{f,G}(\cdot\mid X)=\Law\{G(U,f(X))\mid X\}$ and assume the calibration condition
\begin{equation}
\E_XW_2^2\{\pi_{f,G}(\cdot\mid X),\pi(\cdot\mid X)\}
\leq\Gamma\{\mathcal L(f,G)-\mathcal L_0\}.
\label{eq:v3-calibration}
\end{equation}
Here $\Gamma$ is a nondecreasing concave function on $[0,\infty)$, continuous at zero with $\Gamma(0)=0$.
It converts population loss error into posterior error under the method-specific conditions in Appendix~\ref{app:v3-calibration}.
Let the empirical optimization gap be
\begin{equation}
\xi_N=\mathcal L_N(\widehat f,\widehat G)
-\inf_{f\in\mathcal F,\,G\in\mathcal G_f}\mathcal L_N(f,G)\geq0.
\label{eq:v3-optimization}
\end{equation}
For adversarial objectives, $\mathcal L_N$ uses the critic supremum.

\begin{theorem}[Two-way guidance]
\label{thm:v3-two-way}
Fix the classes before training, with $(\widehat f,\widehat G)$ in these classes, and suppose \eqref{eq:v3-calibration} holds uniformly.
Assume the displayed suprema, infima and expectations are measurable and finite.
Then
\begin{equation}
R_p\leq\Gamma\Big[\Delta_{\mathcal F}+\E_N\xi_N
+2\E_N\sup_{f\in\mathcal F,\,G\in\mathcal G_f}
|\mathcal L_N(f,G)-\mathcal L(f,G)|\Big].
\label{eq:v3-end-to-end}
\end{equation}
For fixed $n$ and classes, if $\Delta_{\mathcal F}=0$, $\E_N\xi_N\to0$, and
\begin{equation}
\E_N\sup_{f\in\mathcal F,\,G\in\mathcal G_f}
|\mathcal L_N(f,G)-\mathcal L(f,G)|\longrightarrow0,
\label{eq:v3-uniform-consistency}
\end{equation}
then $R_p\to0$ and $R_f\to0$ as $N\to\infty$.
\end{theorem}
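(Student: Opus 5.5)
The plan is the standard excess-risk decomposition for empirical risk minimization, composed with the calibration condition \eqref{eq:v3-calibration} and Jensen's inequality. Then Lemma~\ref{lem:v3-risk-decomposition} transfers posterior accuracy to summary sufficiency.

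First, fix a realization of the training data and training randomness, and apply \eqref{eq:v3-calibration} to the fitted pair $(\widehat f,\widehat G)$. This is legitimate because the condition is assumed uniformly over the classes and the fitted pair lies in them. It gives
\[
\E_XW_2^2\{\widehat\pi(\cdot\mid X),\pi(\cdot\mid X)\}\le\Gamma\{\mathcal L(\widehat f,\widehat G)-\mathcal L_0\}.
\]
Next, bound the excess population loss deterministically. Write $Z=\sup_{f,G}|\mathcal L_N(f,G)-\mathcal L(f,G)|$. Fix any $\varepsilon>0$ and pick $(f_\varepsilon,G_\varepsilon)$ with $\mathcal L(f_\varepsilon,G_\varepsilon)\le\mathcal L_0+\Delta_{\mathcal F}+\varepsilon$; this choice is nonrandom since the classes are fixed before training. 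Then
\[
\mathcal L(\widehat f,\widehat G)\le\mathcal L_N(\widehat f,\widehat G)+Z=\inf\mathcal L_N+\xi_N+Z\le\mathcal L_N(f_\varepsilon,G_\varepsilon)+\xi_N+Z\le\mathcal L(f_\varepsilon,G_\varepsilon)+\xi_N+2Z .
\]
Combining the two displays yields $\mathcal L(\widehat f,\widehat G)-\mathcal L_0\le\Delta_{\mathcal F}+\varepsilon+\xi_N+2Z$. Note that $\mathcal L(\widehat f,\widehat G)-\mathcal L_0\ge0$, because $\mathcal L_0$ is the full-data optimum and the learner is a special case of a full-data learner; so $\Gamma$ is evaluated on its domain.

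Then take $\E_N$. Since $\Gamma$ is nondecreasing, the bound on the excess loss passes through $\Gamma$. Since $\Gamma$ is concave, Jensen's inequality gives $\E_N\Gamma(Y)\le\Gamma(\E_NY)$. This step is the main point to check: it needs finiteness of the expectations, which is assumed, and concavity on $[0,\infty)$ with the argument nonnegative. Letting $\varepsilon\downarrow0$ then yields \eqref{eq:v3-end-to-end}. Passing to the limit uses continuity of $\Gamma$ on the interior of $[0,\infty)$, which follows from concavity. At the point $0$ it uses the assumed continuity at zero, or alternatively right-continuity, which a nondecreasing concave function with finite values at positive arguments has away from $0$.

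For the limit statement, set $\Delta_{\mathcal F}=0$ and use $\E_N\xi_N\to0$ together with \eqref{eq:v3-uniform-consistency}. The argument of $\Gamma$ then tends to $0$, and continuity of $\Gamma$ at zero with $\Gamma(0)=0$ gives $R_p\to0$. Finally, the fitted sampler depends on $X$ only through $\widehat f(X)$ by construction, so the reverse bound $R_f\le2R_p$ of Lemma~\ref{lem:v3-risk-decomposition} gives $R_f\to0$.

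The only delicate points are measurability of the random infimum and supremum and the approximate-minimizer device. Both are handled by the stated assumptions and by the $\varepsilon$ argument.
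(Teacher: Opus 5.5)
Your proposal is correct and follows essentially the same route as the paper: the ERM chain $\mathcal L(\widehat f,\widehat G)\le\mathcal L_N(\widehat f,\widehat G)+Z\le\mathcal L_N(f,G)+\xi_N+Z\le\mathcal L(f,G)+\xi_N+2Z$ over comparison pairs, then uniform calibration, monotonicity and Jensen for the concave $\Gamma$, and finally Lemma~\ref{lem:v3-risk-decomposition} for $R_f\le2R_p$. One small simplification: your deterministic bound holds for every $\varepsilon>0$, so you can let $\varepsilon\downarrow0$ before applying $\Gamma$, as the paper does by taking the infimum directly; then \eqref{eq:v3-end-to-end} needs no continuity of $\Gamma$, and continuity at zero is used only for the limit statement.
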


Under calibration, a summary that discards posterior information cannot approach the full-data optimum by improving its posterior learner alone.
Thus, sufficiency follows from successful learning.
Appendix~\ref{app:v3-theory-proofs} gives the proofs; sufficient conditions for statistical and optimization consistency appear in Appendix~\ref{app:v3-consistency}.

We now specialize to affinity: the function being learned has the form $a(w)+B(w)f(X)$, where $w\in\R^p$ contains its arguments other than the observation or summary.
For DGQB, $w=u$ and $p=d$; for SB-GAN, $p$ is the source dimension; for an affine SFM parameter velocity, $w=(t,\theta_t)$ and $p=d+1$.
The raw-data comparison learns a function of $(w,X)$, with $nd_X+p$ arguments.
Let $P_n$ count the encoder and auxiliary-network parameters; the affine coefficient functions are counted separately through $p$.

Assume bounded Lipschitz coefficient functions on bounded domains, bounded losses Lipschitz in the fitted functions, and bounded parameter ranges and uniform parameter sensitivity for the finite-dimensional networks.
Let $A_n\geq e$ collect these regularity bounds and the output and summary dimensions, as specified in Appendix~\ref{app:v3-affine-proof}.
Write $\Gamma_n$ for the calibration function with its dependence on $n$ retained, after normalizing the loss contributions to a common interval of length one.
The normalization scale remains in $\Gamma_n$.

\begin{corollary}[The learning cost under affinity]
\label{cor:v3-affine-rate}
Under Theorem~\ref{thm:v3-two-way}, assume $\Delta_{\mathcal F}=0$, the complete loss-cover conditions of Appendix~\ref{app:v3-affine-proof}, and the native reference-criterion conditions of Appendix~\ref{app:v3-native-rates}.
Require $\E_N\xi_N$ and the total objective-approximation error in \eqref{eq:v3-auxiliary-criterion-error} each to be no larger than the sum in square brackets below.
For $N\geq2$,
\begin{equation}
R_p\leq\Gamma_n\Bigg(10\Bigg[
\sqrt{\frac{(P_n+1)\log(2A_nN)}{N}}
+\left\{\frac{A_n^p\log(2A_nN)}{N}\right\}^{1/(p+2)}
\Bigg]\Bigg),
\label{eq:v3-affine-rate}
\end{equation}
and $R_f \leq 2R_p$.
For the raw-data class, replace $p$ by $nd_X+p$ and use that class's $P_n$ and $A_n$.
\end{corollary}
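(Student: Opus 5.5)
The plan is to feed a uniform-deviation bound into Theorem~\ref{thm:v3-two-way}. With $\Delta_{\mathcal F}=0$, the bound \eqref{eq:v3-end-to-end} reduces to
\[
R_p\le\Gamma_n\bigl[\E_N\xi_N+2\,\E_N\sup_{f,G}|\mathcal L_N(f,G)-\mathcal L(f,G)|\bigr].
\]
The main step is therefore to show that the expected uniform deviation is at most a constant multiple of
\[
\epsilon_N=\sqrt{(P_n+1)\log(2A_nN)/N}+\{A_n^p\log(2A_nN)/N\}^{1/(p+2)}.
\]
The optimization gap and the total objective-approximation error in \eqref{eq:v3-auxiliary-criterion-error} are each assumed to be at most $\epsilon_N$. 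Since $\Gamma_n$ is nondecreasing, the accumulated constant (two copies of the deviation bound plus these two terms) only needs to stay below $10$. The statement $R_f\le2R_p$ then follows directly from Lemma~\ref{lem:v3-risk-decomposition}.

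To bound the uniform deviation, I will use a covering argument for the loss class. The loss is bounded (normalized to an interval of length one) and Lipschitz in the fitted function $a(w)+B(w)f(X)$. A sup-norm cover of the fitted functions therefore gives a cover of the loss class at comparable resolution, with constants absorbed into $A_n$. I will cover the fitted functions in two parts.
\begin{itemize}
\item \emph{Encoder and auxiliary networks.} These are finite-dimensional with bounded parameter ranges and uniform parameter sensitivity, so a grid on parameter space at resolution $\delta/A_n$ gives log-covering number $\lesssim P_n\log(A_n/\delta)$.
\item \emph{Coefficient functions $a,B$.} These are bounded Lipschitz functions of $w\in\R^p$ on a bounded domain. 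The classical Kolmogorov--Tikhomirov entropy bound for such classes gives log-covering number $\lesssim A_n^p\delta^{-p}$, with the output dimensions $d$ and $q$ folded into $A_n$.
\end{itemize}
Combining the two covers at resolution $\delta$ uses $\|B f-B'f'\|\le\|B-B'\|\,\|f\|+\|B'\|\,\|f-f'\|$ together with the boundedness of summaries. The bound on the finite cover comes from Hoeffding's inequality plus a union bound, giving
\[
\E\sup|\mathcal L_N-\mathcal L|\lesssim 2\delta+\sqrt{\{P_n\log(A_n/\delta)+A_n^p\delta^{-p}+\log N\}/N}.
\]

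Next I optimize over $\delta$. I take $\delta$ to be the larger of two choices.
\begin{itemize}
\item The parametric choice $\delta=N^{-1/2}$, which produces $\sqrt{(P_n+1)\log(2A_nN)/N}$.
\item The nonparametric choice balancing $\delta$ against $\sqrt{A_n^p\delta^{-p}/N}$. This gives $\delta^{p+2}\asymp A_n^p/N$, i.e.\ the term $\{A_n^p\log(2A_nN)/N\}^{1/(p+2)}$, with the logarithm retained for safety.
\end{itemize}
Using $\sqrt{x+y}\le\sqrt x+\sqrt y$ and $A_n\ge e$, $N\ge2$, the lower-order logarithms can be dominated so that the combined constant fits inside the factor $10$. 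The raw-data statement is the same computation with $w$ replaced by $(w,X)$, so $p$ becomes $nd_X+p$, using that class's own $P_n$ and $A_n$.

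I expect the main obstacle to be the handling of the non-i.i.d.\ objectives, where the loss is not a simple empirical mean of a fixed function. For DGQB, the sample maximum over $j$ couples the batch, and the product average over $(i,j)$ is a U-statistic. For SB-GAN, the critic supremum enters. The appendices' ``complete loss-cover'' and ``native reference-criterion'' conditions presumably exist for exactly this reason: each such objective is compared with an i.i.d.\ reference criterion, and the discrepancy is exactly the objective-approximation error assumed to be at most the bracketed sum. My first attempt would be to apply the covering-plus-Hoeffding step to that reference criterion, which is an average of a bounded Lipschitz functional (including the critic class in the cover for GAN). The difference between the reference criterion and $\mathcal L_N$ would then be charged to the assumed approximation term. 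The remaining work is bookkeeping so that all the contributions sum to at most $10$ times the bracket.
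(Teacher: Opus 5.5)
\paragraph{Overall route.} Your skeleton matches the paper's proof. With $\Delta_{\mathcal F}=0$, the oracle inequality of Theorem~\ref{thm:v3-two-way} reduces the corollary to bounding the expected uniform deviation. The paper does this in three steps: it grids the encoder and auxiliary parameters and covers $A(w)=[a(w),B(w)]$ as Lipschitz functions of $w\in[-1,1]^p$ alone (Proposition~\ref{prop:d-affine-complexity}); it applies a maximal inequality over the finite cover (Claim~\ref{claim:b-uniform-deviation}); and it balances at $\delta=\{A_n^s\log(2A_nN)/N\}^{1/(s+2)}$ (Claim~\ref{claim:b-cover-rate}). You can invoke Proposition~\ref{prop:d-affine-complexity} directly instead of Kolmogorov--Tikhomirov. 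It is what produces the stated $A_n=32r(q+1)L_n^2$, and its extra $\log(A_n/\delta)$ is absorbed into $\log(2A_nN)$ because $\delta\ge N^{-1/(s+2)}$.

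\paragraph{Non-i.i.d.\ objectives.} This is the step that fails as written. The allowance \eqref{eq:v3-auxiliary-criterion-error} is a hypothesis about the paper's fixed native reference criterion $\overline{\mathcal L}_N$, namely the objective before source discretization, critic approximation, or numerical error. It is not a hypothesis about an i.i.d.\ surrogate of your choosing. For DGQB that reference still contains the two-sample product average $N^{-2}\sum_{i,j}\psi(U_j,f(X_i))$; only the sample maximum is replaced, by the exact conjugate. For NRE it contains negative pairs $\theta_{\sigma(i)}$ that reuse records. The difference between such a criterion and a diagonal i.i.d.\ version is a statistical fluctuation of order $N^{-1/2}$ that must be controlled uniformly over the class. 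The corollary does not let you charge it to the allowance, and Hoeffding's lemma does not apply to the native criterion itself.

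The missing observation is that the finite-cover maximal inequality needs only the per-candidate sub-Gaussian bound \eqref{eq:b-reference-mgf}, not independent summands:
\begin{itemize}
\item for DGQB, bounded differences over the $2N$ independent inputs gives $c=1/\sqrt8$;
\item for NRE, replacing one record in the shifted-pair construction gives $c=3/4$;
\item for SB-GAN, apply the bound before the critic supremum, with the critic included in the cover.
\end{itemize}
Since \eqref{eq:b-reference-mgf} is assumed only for candidates, you must also take admissible cover representatives (one candidate per nonempty cell, as the paper does) rather than abstract net centers.

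\paragraph{The constant.} Your count is short by one copy of the allowance. The allowance sits inside $\E_N\sup|\mathcal L_N-\mathcal L|$, which the oracle inequality doubles. Let $b_N$ be the bracketed sum and let $Cb_N$ bound the reference deviation. Then the argument of $\Gamma_n$ is at most $\{2(C+1)+1\}b_N$. Reaching the factor $10$ therefore requires $C\le7/2$, not the $C\le4$ implied by ``two copies of the deviation bound plus these two terms''. The paper gets $C=7/2$ from the bound $\sqrt2c\sqrt{(P_n+1)\log(2A_nN)/N}+(2+\sqrt2c)\delta$ with $c\le3/4$. When the balancing $\delta$ exceeds one, it uses the unit bound on the normalized reference discrepancy instead. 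With these repairs your argument is the paper's, and $R_f\le2R_p$ follows from Lemma~\ref{lem:v3-risk-decomposition} as you say.
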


The two terms represent learning the summary and auxiliary networks, and learning the coefficient functions of $w$.
Affinity replaces the growing functional dimension $nd_X+p$ by $p$, while retaining the summary-learning cost in $P_n$ and $A_n$.
These are sufficient bounds for the stated classes, not a lower bound for every raw-data architecture.
For the implemented moving-summary SFM sampler, posterior calibration remains an additional requirement; regression convergence alone does not establish it.

\section{Experiments}
\label{sec:new-experiments}
We examine whether jointly learning a summary and a posterior learner improves inference as the observation size $n$ increases.
We compare DGQB, SFM, and SB-GAN with their raw-data counterparts.
Within each pair, we keep the simulation budget and training schedule fixed.

\subsection{Gaussian Conjugate Model}
\label{sec:new-gaussian}

We first consider a Gaussian conjugate model, whose closed-form posterior provides an explicit target for posterior learning.
Following \citet{kim2025quantile}, let $X_i\mid\mu,\sigma^2\stackrel{\mathrm{iid}}{\sim}N(\mu,\sigma^2)$, with priors
\[
\frac{\nu_0\sigma_0^2}{\sigma^2}\sim\chi^2_{\nu_0},
\qquad
\mu\mid\sigma^2\sim N(\mu_0,\sigma^2/\kappa).
\]
We set $(\mu_0,\sigma_0,\kappa,\nu_0)=(0,1,2,25)$ and generate observations at the fixed truth $(\mu,\sigma^2)=(0,1)$.
We increase $n$ over $\{64,128,256,512,1024,2048,4096\}$ and use a DeepSets summary with $q=32$.
As $n$ increases, the exact posterior becomes more concentrated, so we can examine whether the learned posterior follows this concentration.

\begin{figure}[!ht]
    \centering
    \includegraphics[width=0.95\linewidth]{\arxivdir 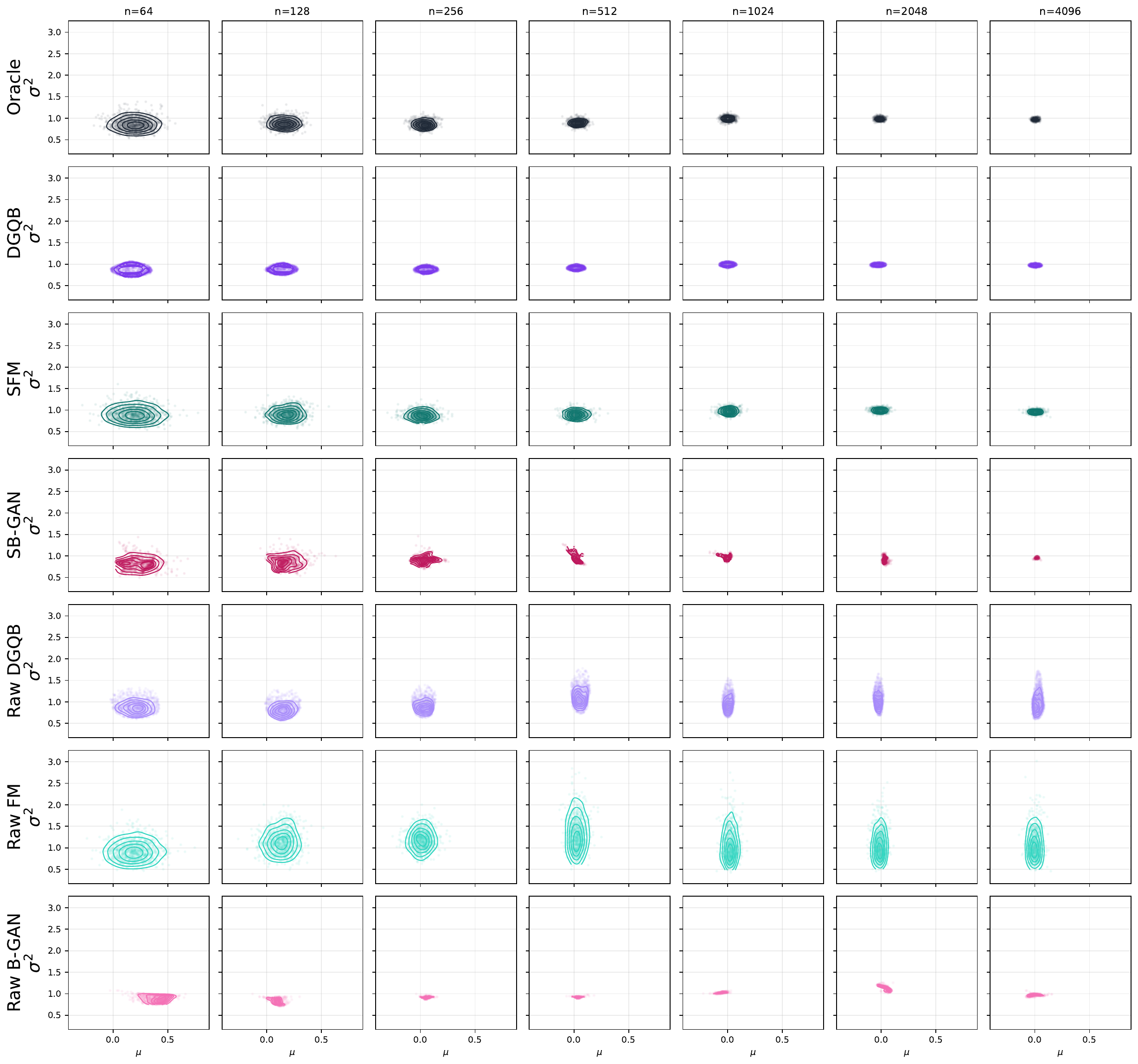}
    \caption{Gaussian posterior draws and contours, with the oracle posterior in the first row and increasing $n$ across columns.
    All panels use common $(\mu,\sigma^2)$ axes.}
    \label{fig:new-gaussian-contours}
\end{figure}

Figure~\ref{fig:new-gaussian-contours} displays the learned posterior contours as $n$ increases.
The summaries help the posterior learners follow the shrinking exact posterior, with a particularly visible improvement for DGQB and SFM.
To quantify this comparison, we compute the sliced Wasserstein-2 distance between samples from the learned and exact posteriors in the parameter space $(\mu,\sigma^2)$.
This distance averages squared one-dimensional Wasserstein-2 distances over random projection directions and takes the square root.
Note that the exact posterior conditions on all $n$ observations, so the comparison retains the full-data posterior as its target.
We use $1{,}000$ draws from each posterior and $256$ projection directions.

\begin{figure}[!t]
    \centering
    \includegraphics[width=0.96\textwidth]{\arxivdir 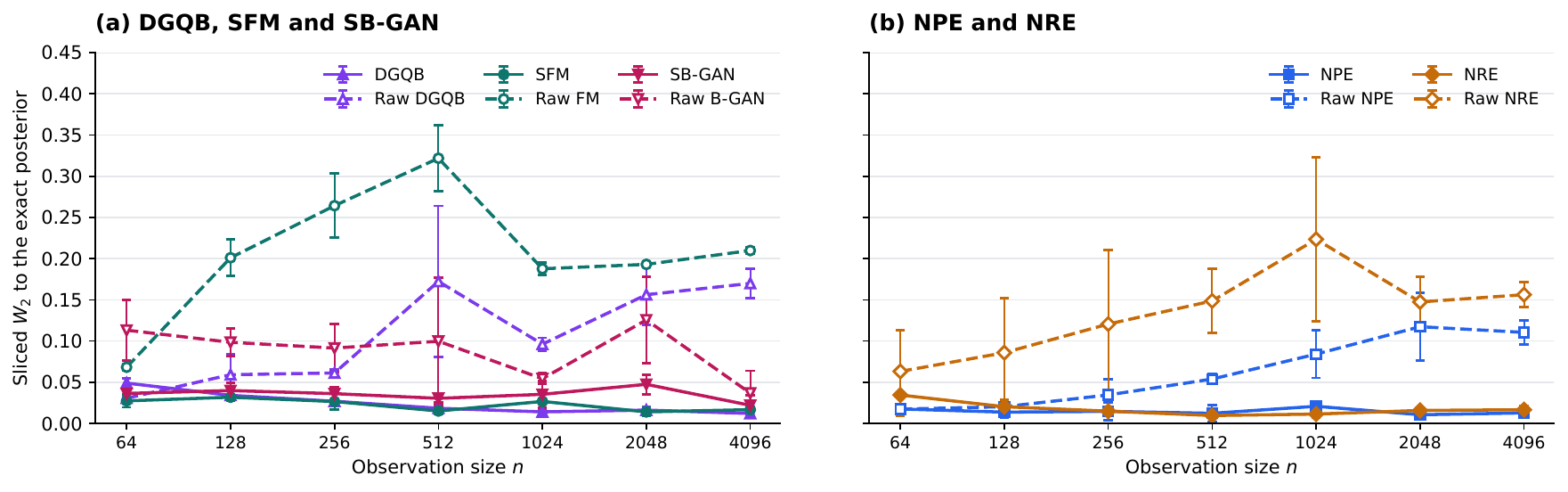}
    \caption{Gaussian posterior error: mean and SD across three seeds.
    Solid: summaries; dashed: raw data.}
    \label{fig:new-gaussian-swd}
\end{figure}

Figure~\ref{fig:new-gaussian-swd}(a) shows that joint summary learning improves posterior approximation at larger observation sizes, with larger gains for DGQB and SFM and a smaller gain for SB-GAN.
At $n=4096$, the mean distances are $0.012$, $0.017$, and $0.022$ for DGQB, SFM, and SB-GAN, compared with $0.170$, $0.210$, and $0.037$ for their raw-data counterparts.
The smaller improvement for SB-GAN is accompanied by remaining errors in posterior spread and location: its learned posterior is overly concentrated and its center is displaced from that of the exact posterior.
Thus, in this model, joint summary learning improves posterior approximation across different SBI methods and helps the learned posterior follow the concentration of the exact posterior as the observation size grows.

We demonstrate the wide applicability of joint summary learning mechanism by experiments on neural posterior estimation (NPE) and neural ratio estimation (NRE).
Their details are described in Appendices~\ref{app:original-npe} and~\ref{app:original-nre}.
At larger observation sizes, joint summary learning also yields smaller posterior approximation errors for both methods; see Figure~\ref{fig:new-gaussian-swd}(b).

\subsection{Brock--Hommes Model}
\label{sec:new-brock-hommes}

The Brock--Hommes model describes asset-price dynamics generated by traders with heterogeneous forecasting strategies \citep{brock1998heterogeneous}.
It provides a financial application in which the posterior is unavailable in closed form, and we train the posterior learners using simulations from the model.
Following the setup in \citet{kim2025quantile}, the price deviation $x_t$ from its fundamental value evolves as
\begin{equation}
\begin{aligned}
    x_{t+1}&=\frac{1}{R}\sum_{h=1}^{H}w_{h,t+1}(g_hx_t+b_h)
        +\epsilon_{t+1},\\
    w_{h,t+1}&=\frac{\exp(\beta U_{h,t})}{\sum_{j=1}^{H}\exp(\beta U_{j,t})},\\
    U_{h,t}&=(x_t-Rx_{t-1})(g_hx_{t-2}+b_h-Rx_{t-1}),
\end{aligned}
\label{eq:new-bh-model}
\end{equation}
where $g_h$ and $b_h$ are the trend and bias coefficients of strategy $h$, $U_{h,t}$ measures its past performance, and $w_{h,t+1}$ is its share of traders.
The constants $R$ and $\beta$ denote the gross risk-free return and the intensity of switching between strategies, respectively, and $\epsilon_t\stackrel{\mathrm{iid}}{\sim}N(0,\sigma^2)$.
We use $H=4$, $R=1.01$, $\beta=120$, and $\sigma=0.04$, with $g_1=b_1=b_4=0$ and $g_4=1.01$.
The unknown parameter is $\theta=(g_2,b_2,g_3,b_3)$, with independent uniform priors on $[0,1]$ for the first three coordinates and $[-1,0]$ for $b_3$.

We fix the truth at $\theta_0=(0.9,0.2,0.9,-0.2)$ and consider series lengths $n\in\{100,\allowbreak250,\allowbreak500,\allowbreak1000,\allowbreak2000\}$, starting from $x_{-2}=x_{-1}=x_0=0$.
For these time-series observations, the encoder combines $48$ DeepSets outputs computed from lag windows with $16$ LSTM outputs, giving a summary of dimension $q=64$.
For the comparison in Figure~\ref{fig:new-bh-energy}, we use three series generated at this fixed truth and three training seeds.
For Raw DGQB, we directly learn a partially input-convex conditional potential $\psi(u,X)$, allowing nonlinear dependence on $X$ while preserving convexity in $u$.

Since the posterior is unavailable in closed form, we evaluate the learned posterior draws against the fixed truth using the energy score \citep{gneiting2007strictly}.
For $M$ posterior draws $\theta^{(1)},\ldots,\theta^{(M)}$, its sample estimate is
\begin{equation}
\widehat{\operatorname{ES}}
=\frac{1}{M}\sum_{m=1}^{M}\norm{\theta^{(m)}-\theta_0}
-\frac{1}{2M(M-1)}\sum_{m\ne\ell}\norm{\theta^{(m)}-\theta^{(\ell)}},
\label{eq:new-energy-score}
\end{equation}
where $\norm{\cdot}$ is the Euclidean norm and we use $M=2000$.
The first term measures distance to the truth, while the second accounts for the spread of the posterior draws.
Lower scores indicate better performance under this criterion.
We first average scores over the three observed series for each training seed, then report the mean and standard deviation across all three seeds.

\begin{figure}[!t]
    \centering
    \includegraphics[width=0.48\linewidth]{\arxivdir 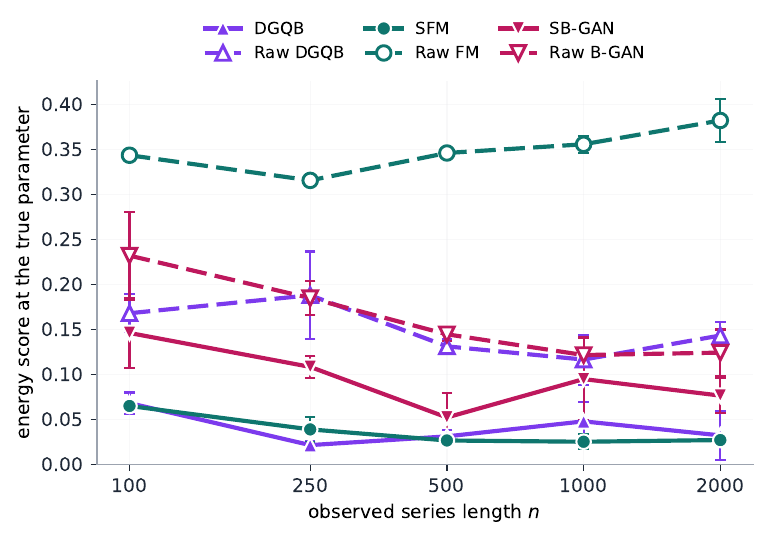}
    \caption{Brock--Hommes energy score: mean and SD across three seeds. Solid: summaries; dashed: raw data.
    }
    \label{fig:new-bh-energy}
\end{figure}

Figure~\ref{fig:new-bh-energy} shows that all three methods with jointly learned summaries achieve lower mean energy scores than their raw-data counterparts at every observation size considered.
The reductions are particularly pronounced for DGQB and SFM.
At $n=2000$, the mean scores are $0.032$, $0.027$, and $0.076$ for DGQB, SFM, and SB-GAN, compared with $0.143$, $0.382$, and $0.124$ for their raw-data counterparts.
Thus, jointly learned summaries improve inference performance under the energy score across different posterior learners in the Brock--Hommes model.
Jointly learned summaries also yield lower mean energy scores for NPE and NRE than their raw-data counterparts; see Appendix~\ref{app:new-bh-contours}.

\section{Discussion}
\label{sec:discussion}

This paper studies joint summary and posterior learning as a general principle for scalable Bayesian SBI.
The key mechanism is two-way guidance: a structured summary simplifies posterior learning, while a suitable posterior objective guides the summary toward posterior sufficiency.
We explain how structural restrictions on learned summaries can counterintuitively improve posterior approximation.
In practice, performance depends on tuning the summary dimension, network architecture, and training settings.
These choices affect both learning complexity and how much posterior information the summary retains.
An important direction for future research is to select the summary dimension adaptively according to the inference problem and available simulation budget.

\section*{Acknowledgements}
The authors are grateful to Veronika Ro\v{c}kov\'a for invaluable discussions and insightful suggestions that helped shape this work.

\clearpage
\appendix
\section{Methods and Implementation Details}
\label{app:original-methods}

We review the five methods used in this paper and describe how we incorporate learned summaries into their training and posterior sampling procedures.
We retain the notation $S=f_\eta(X)$ for the learned summary and $U$ for independent source noise.
The expectations in the training objectives average over the prior, simulator, and any auxiliary draws, with the candidate networks held fixed.

\subsection{Deep Generative Quantile Bayes (DGQB)}
\label{app:original-qr}

\citet{kim2025quantile} learn a conditional vector quantile map for posterior sampling.
In our notation, this map is $G(u,s)=\nabla_u\psi(u,s)$, where the potential $\psi$ is convex in $u$ for each fixed $s$.
The conditional vector quantile transports the reference distribution $F_U$ to $\pi(\cdot\mid S=s)$ \citep{carlier2016vqr}.
Their True Quantile Affinity assumption posits a sufficient summary for which the potential has the representation $\psi(u,f(X))=\varphi(u)+b(u)^\top f(X)$.
The functions $\varphi$ and $b$ are the functional coefficients of this representation.
Their Equation~(3.4), reproduced in \eqref{eq:qr-loss}, trains these functions and the summary network jointly, with batch normalization used to maintain centered summaries.

For the general formulation in Section~\ref{sec:qr}, we use the conditional Monge--Kantorovich dual objective, corresponding to Equation~(2.3) of \citet{kim2025quantile}.
For a reference domain $\mathcal U$, define the conjugate in $u$ and the population objective by
\begin{equation}
\begin{aligned}
\psi^*(\theta,s)&=\sup_{u\in\mathcal U}\{u^\top\theta-\psi(u,s)\},\\
\mathcal L_Q(\psi,f)&=\E_{U,X}\psi(U,f(X))+\E_{\theta,X}\psi^*(\theta,f(X)),
\qquad U\perp X.
\end{aligned}
\label{eq:qr-population}
\end{equation}
The empirical objective \eqref{eq:qnn-general-loss} averages the first term over all pairs of source draws and observations, and replaces the conjugate supremum by a maximum over the sampled source points.
The finite maximum approximates the source supremum; its error is considered in Appendix~\ref{app:v3-qnn-rate}.

For the affine potential in \eqref{eq:qr-potential}, centered batch summaries cancel the summary-dependent part of the first term, reducing that term to the average of $\varphi(U_i)$ and giving \eqref{eq:qr-loss}.
The functional coefficients determine the quantile map given the summary, while gradients of this same objective update the summary network through $b(U_j)^\top f_\eta(X_i)$.
The quantile representation requires the total potential $\varphi(u)+b(u)^\top s$ to be convex in $u$.

After training, we hold the encoder and its normalization fixed and compute $s^*=f_{\hat\eta}(X^*)$ once for the observed dataset.
Each fresh source draw $U$ then gives a parameter draw $\nabla_u\widehat\psi(U,s^*)$.
Algorithm~\ref{alg:qr} summarizes these steps.

Our nonlinear Raw DGQB baseline directly learns the conditional potential $\psi(u,X)$ from raw observations using the general objective in \eqref{eq:qnn-general-loss}.
We use a partially input-convex neural network \citep{amos2017input}, which allows nonlinear dependence on $X$ while preserving convexity in $u$ for every fixed observation.
This extends the affine specification of \citet{kim2025quantile} to a potential with nonlinear dependence on the raw observation.
The affine-in-$X$ specialization discussed in Section~\ref{sec:qr} is a separate choice of potential class.

\subsection{Summary flow matching (SFM)}
\label{app:original-fm}

\citet{zhai2026cfm} construct a dynamic block-triangular map from a source distribution to the joint distribution of observations and parameters.
At $t=0$, the observation source $X_0$ and parameter source $\theta_0=U$ are independent; at $t=1$, the target is the simulated joint law of $(X,\theta)$.
The corresponding block-triangular velocity field has the form
\begin{equation}
\dot X_t=v^X_\alpha(t,X_t),\qquad
\dot\theta_t=v^\theta_\beta(t,X_t,\theta_t).
\label{eq:raw-fm-flow}
\end{equation}
The observation velocity depends only on $(t,X_t)$, whereas the parameter velocity depends on $(t,X_t,\theta_t)$.
They fit this velocity field using the conditional flow matching loss.
For the straight-line interpolation paths in their Algorithm~1, the regression targets are $X_1-X_0$ and $\theta_1-\theta_0$.

Let $F_t(x_0)$ be the observation flow, so that $F_0(x_0)=x_0$.
For a well-posed invertible flow and a fixed observation $x$, first solve the observation ODE backward to obtain $F_1^{-1}(x)$.
Then draw $U$ and solve
\begin{equation}
\dot\theta_t=v^\theta_\beta\big(t,F_t(F_1^{-1}(x)),\theta_t\big),
\qquad \theta_0=U,
\qquad G(U,x)=\theta_1.
\label{eq:raw-fm-conditional}
\end{equation}
If the joint flow transports the product source to the correct target joint law, this procedure samples the conditional distribution of $\theta$ given $X=x$ \citep{baptista2024conditional}.
Note that the observation input follows its learned trajectory during sampling.

Our SFM modification replaces the observation block by $S=f_\eta(X)$ and uses the summary and parameter ODEs in \eqref{eq:sfm-flow}.
During training, independent source and target pairs give the straight-line interpolants in \eqref{eq:sfm-interpolants}.
The parameter velocity is regressed on $\theta_1-\theta_0$, and the summary velocity on $S_1-S_0$, using \eqref{eq:sfm-losses}.
The parameter loss updates the encoder and parameter velocity together through $S_t$.
We apply the stop-gradient operation to the summary input and regression target in the summary loss, which updates only the summary velocity.
Thus, the summary network is trained through the regression loss for the parameter velocity.
The resulting summary is reused across posterior draws: solve its ODE backward once, then integrate the joint flow forward for fresh parameter sources, as in Algorithm~\ref{alg:sfm}.
The conditions linking these regression objectives to the required conditional transport are discussed in Appendix~\ref{app:v3-sfm-calibration}.

\subsection{Summary Bayesian GAN (SB-GAN)}
\label{app:original-bgan}

B-GAN trains a conditional generator and a critic through a Wasserstein minimax objective, comparing simulated pairs $(X,\theta)$ with generated pairs $(X,g_\beta(U,X))$ \citep{wang2026generative}.
SB-GAN uses the affine-summary generator in \eqref{eq:sbgan-generator}, with the encoder and generator coefficients trained together.
The critic $D_\omega(X,\theta)$ receives the observation through its own network, with no parameters shared with the summary encoder.

Training alternates critic updates with joint encoder and generator updates.
For a minibatch of size $m$, write $\tilde\theta_i=g_\beta(U_i,f_\eta(X_i))$ and $\bar\theta_i=\epsilon_i\theta_i+(1-\epsilon_i)\tilde\theta_i$, where $\epsilon_i\sim\operatorname{Unif}[0,1]$ independently.
With penalty weight $\lambda_{\mathrm{gp}}>0$, the losses are
\begin{equation}
\begin{aligned}
\widehat{\mathcal L}_D
&=\frac1m\sum_{i=1}^m\{D_\omega(X_i,\tilde\theta_i)-D_\omega(X_i,\theta_i)\}
+\frac{\lambda_{\mathrm{gp}}}{m}\sum_{i=1}^m
\big[\max\{0,\norm{\nabla_\theta D_\omega(X_i,\bar\theta_i)}-1\}\big]^2,\\
\widehat{\mathcal L}_G
&=-\frac1m\sum_{i=1}^mD_\omega(X_i,\tilde\theta_i).
\end{aligned}
\label{eq:bgan-practical-losses}
\end{equation}
The one-sided gradient penalty penalizes parameter-gradient norms above one at the interpolated parameters, with the observation held fixed, following Appendix~E.1 of \citet{wang2026generative}.
Critic updates hold the encoder and generator fixed; generator updates hold the critic fixed and change both $\eta$ and $\beta$.
During the joint encoder and generator updates, the gradient of the critic score passes through the generator to the summary network.
The critic's input is $X$, so it is not restricted to the generator's learned summary $f_\eta(X)$.
The ideal conditional Wasserstein interpretation and its approximation by a neural critic are given in Appendix~\ref{app:v3-bgan-calibration}.

For posterior sampling, compute $s^*=f_{\hat\eta}(X^*)$ and evaluate the fitted generator at independent source draws, as in Algorithm~\ref{alg:sbgan}.
Our Raw B-GAN baseline supplies the observation directly to its conditional generator.
In the DeepSets construction of Section~2.5 of \citet{wang2026generative}, the pooled generator features depend on both the noise draw and the individual observations.
SB-GAN computes $f_\eta(X)$ from the observation alone and reuses this statistic across source draws.

\subsection{Neural posterior estimation (NPE)}
\label{app:original-npe}
\label{app:npe-nre}

Neural posterior estimation fits a conditional density estimator to simulated parameter--observation pairs \citep{greenberg2019automatic}.
\citet{greenberg2019automatic} develop automatic posterior transformation for sequential proposals; our experiments use the prior-trained NPE formulation described in their Section~2.2.
We draw parameters from the prior throughout training and simulate observations conditional on those parameters.
In our summary-based implementation, the conditional density estimator $\pi_\beta(\theta\mid f_\eta(X))$ is a masked autoregressive flow \citep{papamakarios2018maskedautoregressiveflowdensity}.
Here $\beta$ denotes the density-network parameters.
This is a composition of invertible autoregressive transformations conditioned on the summary, so its density can be evaluated by change of variables.
The summary network and conditional density estimator are trained together by minimizing the negative log-likelihood
\begin{equation}
\mathcal L_{\mathrm{NPE}}(\eta,\beta)
=-\E\log\pi_\beta\{\theta\mid f_\eta(X)\}.
\label{eq:app-npe-loss}
\end{equation}
The density estimator conditions on the learned summary, while backpropagation of the negative log-likelihood updates the summary network jointly with the density estimator.
Appendix~\ref{app:npe-population} gives the corresponding information-loss decomposition.

For posterior sampling, compute the observed summary once, draw Gaussian base variables, and map them to parameter draws through the fitted conditional flow.
Raw NPE conditions the density network on $X$.
The summary-based implementation permits nonlinear dependence of the density network on $S$.

\subsection{Neural ratio estimation (NRE)}
\label{app:original-nre}

\citet{hermans2020likelihoodfreemcmcamortizedapproximate} learn an amortized estimator of the likelihood-to-evidence ratio by distinguishing dependent parameter--observation pairs from independent pairs.
By Bayes' rule, this ratio also equals the posterior-to-prior density ratio.
The positive class consists of jointly simulated pairs $(\theta,X)$, and the negative class pairs an observation with an independently drawn prior parameter $\theta'$.
Let $D_\beta(\theta,S)$ be the probability assigned to the positive class, with equal class probabilities.
Here $\beta$ denotes the classifier parameters.
The classifier and summary encoder minimize the binary cross-entropy
\begin{equation}
\mathcal L_{\mathrm{NRE}}(\eta,\beta)
=-\frac12\E\log D_\beta\{\theta,f_\eta(X)\}
-\frac12\E\log\big[1-D_\beta\{\theta',f_\eta(X)\}\big].
\label{eq:app-nre-loss}
\end{equation}
In our implementation, each minibatch constructs negative pairs by a random nonzero cyclic shift of the parameters, so each observation is paired with a parameter from another simulated pair.
Both the classifier and encoder receive gradients from this objective.

For a fixed summary and wherever the density ratio is defined, the optimal classifier satisfies
\begin{equation}
\frac{D^*(\theta,S)}{1-D^*(\theta,S)}
=\frac{\pi(\theta\mid S)}{\pi(\theta)}.
\label{eq:app-nre-ratio}
\end{equation}
\citet{hermans2020likelihoodfreemcmcamortizedapproximate} use the estimated ratio in MCMC samplers.
In the Gaussian experiment, we use sampling--importance--resampling with the prior as the proposal distribution.
We draw a finite pool of parameter candidates from the prior, evaluate their fitted classifier odds at the observed summary, and resample with replacement using the normalized weights to obtain approximate posterior draws.
For Brock--Hommes, we use parallel Metropolis chains targeting the learned ratio times the known prior, as detailed in Appendix~\ref{app:new-bh-contours}.

The classifier uses the learned summary as its observation input, while gradients of the binary cross-entropy update that summary using the distinction between dependent and independent pairs.
The population justification is given in Appendix~\ref{app:nre-population}.
Raw NRE supplies $X$ to the classifier; the summary-based version allows nonlinear dependence on $S$.

\section{General Theory for Posterior Accuracy and Scalability}
\label{app:v3-theory-proofs}

This appendix proves the general posterior-risk guarantees and explains when joint learning recovers a sufficient summary.
We then quantify the learning cost of affine summary dependence and derive sufficient simulation budgets for recovering increasingly concentrated posteriors.
\subsection{Proof of Lemma~\ref{lem:v3-risk-decomposition}}
\label{app:v3-lemma-proof}

Conditional distributions are regular conditional probabilities on Euclidean spaces, and all kernels, fitted maps, and displayed suprema are measurable.
The evaluation pair $(X,\theta)$ and source noise are independent of training.
A posterior conditional on the fitted summary always holds the fitted function fixed.
The integrated second-moment assumptions in Section~\ref{sec:learning} ensure the finiteness of the transport costs below.
The learner $G$ also indexes its native representation and sampling rule when these are needed to specify the objective.

Fix a training realization for which the conditional second moments are integrable, and write $S=\widehat f(X)$.
For almost every $X$, the triangle inequality gives
\[
W_2\{\widehat\pi(\cdot\mid X),\pi(\cdot\mid X)\}
\leq W_2\{\pi(\cdot\mid S),\pi(\cdot\mid X)\}
+W_2\{\widehat\pi(\cdot\mid X),\pi(\cdot\mid S)\}.
\]
Squaring, applying $(a+b)^2\leq2a^2+2b^2$, and averaging over $X$ and training gives the first inequality.

For the reverse inequality, keep training fixed and let $Q_s=\Law\{\widehat G(U,s)\}$.
For each $x$, choose an optimal coupling of $\pi(\cdot\mid x)$ and $Q_{\widehat f(x)}$, measurably in $x$.
Such a choice exists for the continuous squared Euclidean cost; see \citet[Corollary~5.22]{villani2009optimal}.
Draw $X$ from its prior predictive distribution and $(\theta,Y)$ from this coupling conditional on $X$.
This construction preserves the original joint law of $(X,\theta)$ and satisfies
\[
\Law(Y\mid X)=Q_S,
\qquad
\E\norm{\theta-Y}^2
=\E_XW_2^2\{\pi(\cdot\mid X),Q_S\}.
\]
Here and until the final averaging step, $\E$ averages over the fresh observation and the auxiliary coupling variables, with training held fixed.
Since the conditional law of $Y$ given $X$ depends only on $S$, we have $Y\perp X\mid S$.

On an enlarged probability space, draw $\theta'$ from $\Law(\theta\mid S,Y)$, conditionally independently of $(X,\theta)$ given $(S,Y)$.
The preceding conditional independence and the tower property imply
\[
\Law(\theta'\mid X=x)
=\int\Law\{\theta\mid S=\widehat f(x),Y=y\}\,Q_{\widehat f(x)}(dy)
=\pi\{\cdot\mid S=\widehat f(x)\}.
\]
Thus $(\theta,\theta')$ conditional on $X$ is a coupling of the full posterior and summary posterior.
Also, $\theta$ and $\theta'$ are conditionally independent copies given $(S,Y)$, so
\[
\E\norm{\theta-\theta'}^2
=2\E\norm{\theta-\E(\theta\mid S,Y)}^2.
\]
The conditional-mean decomposition gives
\[
\E\norm{\theta-Y}^2
=\E\norm{\theta-\E(\theta\mid S,Y)}^2
+\E\norm{\E(\theta\mid S,Y)-Y}^2.
\]
Combining these identities yields
\[
\E_XW_2^2\{\pi(\cdot\mid X),\pi(\cdot\mid S)\}
\leq\E\norm{\theta-\theta'}^2
\leq2\E\norm{\theta-Y}^2
=2\E_XW_2^2\{\pi(\cdot\mid X),Q_S\}.
\]
Averaging over training proves $R_f\leq2R_p$.
This argument does not require a training objective or a calibration condition.

\subsection{Proof of Theorem~\ref{thm:v3-two-way}}
\label{app:v3-theorem-proof}

For this proof, write
\[
D_N=\sup_{f\in\mathcal F,\,G\in\mathcal G_f}
|\mathcal L_N(f,G)-\mathcal L(f,G)|.
\]
The classes are fixed before training, and the fitted pair belongs to them.
For every admissible comparison pair $(f,G)$, the definition of $\xi_N$ gives, for each training realization,
\[
\begin{aligned}
\mathcal L(\widehat f,\widehat G)-\mathcal L_0
&\leq\mathcal L_N(\widehat f,\widehat G)-\mathcal L_0+D_N\\
&\leq\mathcal L_N(f,G)-\mathcal L_0+D_N+\xi_N\\
&\leq\mathcal L(f,G)-\mathcal L_0+2D_N+\xi_N.
\end{aligned}
\]
Taking the infimum over $(f,G)$ therefore yields
\[
0\leq\mathcal L(\widehat f,\widehat G)-\mathcal L_0
\leq\Delta_{\mathcal F}+2D_N+\xi_N.
\]
The left inequality follows from the definition of the common full-data optimum.
Neither the population nor the empirical infimum needs to be attained.
The assumed integrability of $D_N$ and $\xi_N$ also makes the population excess integrable.

Uniform calibration applies to the fitted pair with training held fixed.
Taking the training expectation, applying Jensen's inequality to the concave function $\Gamma$, and using its monotonicity gives
\[
\begin{aligned}
R_p
&\leq\E_N\Gamma\{\mathcal L(\widehat f,\widehat G)-\mathcal L_0\}\\
&\leq\Gamma\!\left(\E_N\{\mathcal L(\widehat f,\widehat G)-\mathcal L_0\}\right)\\
&\leq\Gamma\big(\Delta_{\mathcal F}+2\E_ND_N+\E_N\xi_N\big).
\end{aligned}
\]
This proves \eqref{eq:v3-end-to-end}, and Lemma~\ref{lem:v3-risk-decomposition} gives $R_f\leq2R_p$.
For fixed $n$ and classes, $\Delta_{\mathcal F}=0$, $\E_ND_N\to0$, and $\E_N\xi_N\to0$ imply $R_p\to0$ by continuity of $\Gamma$ at zero, and hence $R_f\to0$.
Since squared Wasserstein distances are nonnegative, $R_f=0$ is equivalent to equality of the full and summary posteriors for almost every training realization and almost every fresh observation.

\subsection{Sufficient conditions for statistical and optimization consistency}
\label{app:v3-consistency}

We give sufficient conditions for the statistical and optimization errors in Theorem~\ref{thm:v3-two-way} to vanish.
The gap $\Delta_{\mathcal F}$ is fixed when the summary and posterior classes are fixed.
Its being zero requires approximation of the full-data optimum in the native objective, including compatibility between the summary and posterior learner.
If classes are enlarged with $N$, convergence of the resulting sequence $\Delta_{\mathcal F_N}$ requires a separate approximation argument.

For an ordinary empirical criterion, let $\mathcal L_N(f,G)=N^{-1}\sum_i\ell_{f,G}(Z_i)$ and $\mathcal L(f,G)=\E\ell_{f,G}(Z_1)$ for independent identically distributed records.
Fix the candidate class before training and suppose its losses have an integrable envelope and a finite admissible uniform $\delta$-cover for every $\delta>0$.
Such a cover approximates every loss by one representative within $\delta$ on every possible record.
For optimization, choose a deterministic sequence $\delta_N\downarrow0$ and select the first empirical minimizer over a finite uniform $\delta_N$-cover of losses, using a fixed enumeration of its admissible representatives.
Assume the displayed suprema are measurable.

\begin{claim}[Sufficient conditions for consistency]
\label{claim:b-consistency}
Under these conditions,
\[
\E_N\sup_{f,G}|\mathcal L_N(f,G)-\mathcal L(f,G)|\longrightarrow0,
\qquad
0\le\xi_N\le\delta_N.
\]
If $\Delta_{\mathcal F}=0$ and the calibration condition of Theorem~\ref{thm:v3-two-way} holds, then $R_p\to0$ and $R_f\to0$ for fixed $n$ as $N\to\infty$.
\end{claim}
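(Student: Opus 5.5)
The proof splits into three parts: uniform convergence via a finite cover, the optimization gap via the cover-based selection rule, and then a direct appeal to Theorem~\ref{thm:v3-two-way}.

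\emph{Uniform convergence.} Fix $\delta>0$ and let $\{\ell_1,\ldots,\ell_K\}$ be a finite admissible uniform $\delta$-cover of the loss class. For any $(f,G)$, pick $k$ with $\sup_z|\ell_{f,G}(z)-\ell_k(z)|\le\delta$. Averaging this bound over the records and also taking its expectation gives
\[
|\mathcal L_N(f,G)-\mathcal L(f,G)|\le 2\delta+\max_{1\le k\le K}\Big|\frac1N\sum_{i=1}^N\ell_k(Z_i)-\E\ell_k(Z_1)\Big|.
\]
Each representative $\ell_k$ is integrable, because it lies within $\delta$ of a class member dominated by the integrable envelope. The strong law of large numbers therefore makes each of the $K$ deviations converge to zero almost surely. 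To upgrade this to convergence in $L^1$, I would use that for i.i.d.\ integrable summands the empirical mean converges in $L^1$; this follows by uniform integrability of $\{N^{-1}\sum_i|\ell_k(Z_i)|\}$, since these averages are dominated in the convex order by $|\ell_k(Z_1)|$. A maximum of finitely many $L^1$-null sequences is $L^1$-null, because its expectation is bounded by the sum of theirs. Hence
\[
\limsup_N\E_N\sup_{f,G}|\mathcal L_N(f,G)-\mathcal L(f,G)|\le2\delta,
\]
and letting $\delta\downarrow0$ gives the first display. The only delicate point is that $K=K(\delta)$ must be held fixed while $N\to\infty$, which the order of limits above respects.

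\emph{Optimization gap.} Let $(\widehat f,\widehat G)$ be the first empirical minimizer over the finite $\delta_N$-cover, taken in the fixed enumeration; it exists because the cover is finite. For any $(f,G)$ in the class, some cover element $(f',G')$ satisfies $\mathcal L_N(f',G')\le\mathcal L_N(f,G)+\delta_N$, since the cover is uniform over every possible record. Minimality gives $\mathcal L_N(\widehat f,\widehat G)\le\mathcal L_N(f',G')$, and taking the infimum over $(f,G)$ yields $\xi_N\le\delta_N$. Nonnegativity holds because the cover elements are admissible, so they belong to the class and the fitted pair lies in it. Measurability of $\widehat f,\widehat G$ follows from the fixed enumeration: the selection is a finite sequence of measurable comparisons.

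\emph{Conclusion.} We now have $\E_N\xi_N\le\delta_N\to0$ and uniform consistency \eqref{eq:v3-uniform-consistency}. Together with $\Delta_{\mathcal F}=0$ and calibration, the second part of Theorem~\ref{thm:v3-two-way} gives $R_p\to0$, and Lemma~\ref{lem:v3-risk-decomposition} then gives $R_f\le2R_p\to0$.

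The main obstacle is the $L^1$ (rather than almost-sure) convergence in the first part. I would handle it with the envelope and uniform-integrability argument above, with Scheff\'e's lemma as an alternative. A secondary point needing care is that the cover used for uniform convergence is chosen independently of the optimization cover, so that $\delta$ and $\delta_N$ do not interact.
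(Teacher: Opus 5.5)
Your proposal is correct and follows the paper's proof essentially step for step. The finite $\delta$-cover reduces the supremum deviation to $2\delta$ plus a maximum over finitely many representatives, and the $L^1$ law of large numbers controls that maximum before you let $\delta\downarrow0$; you justify the $L^1$ convergence more explicitly than the paper does, via uniform integrability or Scheff\'e's lemma. Comparison with a cover representative then gives $\xi_N\le\delta_N$, after which Theorem~\ref{thm:v3-two-way} and Lemma~\ref{lem:v3-risk-decomposition} yield both risk limits.
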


This supplies concrete sufficient conditions for both learning errors in Theorem~\ref{thm:v3-two-way}.
The finite-set optimizer establishes attainability of the tolerance; practical neural training must establish its own optimization accuracy.

\begin{proof}
Fix a finite uniform $\delta$-cover with representative losses $\ell_1,\ldots,\ell_M$.
Write $P_N\ell=N^{-1}\sum_i\ell(Z_i)$ and $P\ell=\E\ell(Z_1)$.
For every training realization, the supremum deviation is at most $\max_j|(P_N-P)\ell_j|+2\delta$.
The integrable envelope ensures the law of large numbers in $L^1$ for each representative, so
\[
\E_N\max_{j\le M}|(P_N-P)\ell_j|
\le\sum_{j=1}^M\E_N|(P_N-P)\ell_j|\longrightarrow0.
\]
Take the limit superior as $N\to\infty$, then let $\delta\downarrow0$.

The empirical minimizer over the finite candidate set is measurable.
Compare it to a representative of an arbitrarily accurate empirical minimizer over the full class.
This gives
\[
\mathcal L_N(\widehat f,\widehat G)
\le\inf_{f,G}\mathcal L_N(f,G)+\delta_N,
\]
and hence $\xi_N\le\delta_N$.
Theorem~\ref{thm:v3-two-way} now gives both risk limits.
\end{proof}

Exact global empirical minimization also suffices, since it gives $\xi_N=0$.
For product averages and adversarial objectives, the method-specific bounds in Appendix~\ref{app:v3-calibration} supply the corresponding statistical controls.
For an adversarial objective, the optimization gap is measured after maximizing over the critic, so errors in that inner optimization must also be controlled.
An optimization error tending to zero at an unspecified rate suffices for fixed-$n$ consistency but cannot be omitted from a finite-sample or growing-$n$ bound.
Corollary~\ref{cor:v3-affine-rate} requires the optimization error to be no larger than its displayed statistical rate.

\subsection{Learning complexity and finite-sample bounds}
\label{app:v3-affine-proof}

\subsubsection{Function class complexity under affinity}
\label{app:d-affine-complexity}

For the complexity comparison, we consider affine summary dependence as a concrete structural choice.
This also includes dependence that is affine in a fixed finite set of features of a summary, by absorbing those features into the encoder, provided the resulting class satisfies the conditions below.
We show how affinity reduces the complexity of posterior learning while accounting for the cost of learning the summary.

Work on bounded domains rescaled to $X\in[-1,1]^{nd_X}$ and $w\in[-1,1]^p$, retaining the rescaling in the regularity bounds.
Here $n$ is the observation size, $d_X$ is the dimension of each observation, and $p\ge1$ is the dimension of the arguments other than the observation or summary.
The affine class has the form
\[
h(w,X)=a(w)+B(w)f_\zeta(X)
=A(w)(1,f_\zeta(X)^\top)^\top,
\qquad A(w)=[a(w),B(w)].
\]
The function $h$ takes values in $\R^r$, and the learned summary $f_\zeta(X)$ takes values in $[-1,1]^q$.
The raw class instead consists of functions $h(w,X)$ of all $nd_X+p$ inputs.

Let $P_n$ count the encoder and auxiliary-network parameters, collected in $\zeta$.
The auxiliary networks include the critic for SB-GAN and the summary-velocity branch for SFM when these enter the learning objective.
For the raw class, $\zeta$ contains only auxiliary parameters, and $P_n=0$ when none are present.
All candidate classes are fixed before training.

For each class, use a common regularity bound $L_n\ge1$ and restrict $\zeta$ to $[-L_n,L_n]^{P_n}$.
For a matrix $A(w)$, write $\|A(w)\|_{\infty,1}=\max_{j\le r}\sum_{k=0}^q|A_{jk}(w)|$ for its maximum absolute row sum; the corresponding norm for a matrix-valued function also takes the supremum over $w$.
For the affine class, assume
\[
\sup_w\|A(w)\|_{\infty,1}\le L_n,
\qquad
\|A(w)-A(w')\|_{\infty,1}\le L_n\|w-w'\|_\infty.
\]
Let $\ell_{\zeta,A}$ denote the complete normalized loss contribution, including any auxiliary inputs, and assume these losses lie in a common interval of length one.
Require the loss stability bound
\begin{equation}
\|\ell_{\zeta,A}-\ell_{\zeta',A'}\|_\infty
\le L_n\big\{\|A-A'\|_{\infty,1}+\|\zeta-\zeta'\|_\infty\big\},
\label{eq:v3-native-loss-cover-transfer}
\end{equation}
where the norm on losses is uniform over all possible records.
Thus $L_n$ includes the sensitivity of the complete objective to the encoder and auxiliary parameters, including their effects on targets, interpolation states, or critic evaluations.
For the raw class, assume that each component of $h$ is bounded by $L_n$ and is $L_n$-Lipschitz in $(w,X)$ under the input maximum norm.
Require the same loss stability bound with $A$ replaced by $h$ and the matrix norm replaced by the componentwise uniform norm.
The two classes may have different $L_n$ and $P_n$.
For DGQB, also require the same stability bound for the normalized potentials defined in Appendix~\ref{app:v3-qnn-rate}.

For $\delta\in(0,1]$, let $M_n(\delta)$ be the smallest number of admissible representative losses needed to approximate every loss in the class uniformly within $\delta$.
In other words, every candidate loss differs from at least one representative by at most $\delta$ on every possible record.
This count measures the complexity of the complete loss class at accuracy $\delta$, including the jointly learned summary and auxiliary networks.
For the DGQB product criterion, use the corresponding count of normalized-potential representatives; conjugate stability transfers this cover to the objective as shown in Appendix~\ref{app:v3-qnn-rate}.
Define $A_n=32r(q+1)L_n^2$ for the affine class and $A_n=32rL_n^2$ for the raw class; any larger values also suffice.
These constants retain the output and summary dimensions and the regularity bounds, including the effects of rescaling.
Put $s=p$ for the affine class and $s=nd_X+p$ for the raw class.

\begin{proposition}[Loss complexity under affinity]
\label{prop:d-affine-complexity}
Under these conditions, for every $\delta\in(0,1]$,
\begin{equation}
\log M_n(\delta)
\le\left\{P_n+\left(\frac{A_n}{\delta}\right)^s\right\}
\log\left(\frac{A_n}{\delta}\right).
\label{eq:v3-complete-entropy}
\end{equation}
\end{proposition}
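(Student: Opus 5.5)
The plan is to build an explicit uniform $\delta$-cover of the complete loss class by covering its two ingredients separately: the finite-dimensional parameter $\zeta\in[-L_n,L_n]^{P_n}$, and the coefficient function $A$ (or $h$ for the raw class). The stability bound \eqref{eq:v3-native-loss-cover-transfer} then transfers the product cover to the losses. Concretely, we choose accuracies $\epsilon_1$ for $\zeta$ in $\|\cdot\|_\infty$ and $\epsilon_2$ for $A$ in $\|\cdot\|_{\infty,1}$ with $L_n(\epsilon_1+\epsilon_2)\le\delta/2$. The cover is then made admissible: for each cell of the product grid that contains an admissible candidate, we pick one admissible representative. This costs at most a factor $2$ in accuracy, which the slack $\delta/2$ absorbs. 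Hence $\log M_n(\delta)\le\log N_\zeta+\log N_A$.

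For the parameter part, a grid of mesh $2\epsilon_1$ in each coordinate of $[-L_n,L_n]^{P_n}$ gives $N_\zeta\le(\lceil L_n/\epsilon_1\rceil)^{P_n}$. Taking $\epsilon_1=\delta/(4L_n)$ gives $\log N_\zeta\le P_n\log(8L_n^2/\delta)\le P_n\log(A_n/\delta)$, because $A_n\ge32L_n^2$.

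For the coefficient part, I would use the classical Kolmogorov--Tikhomirov construction for bounded Lipschitz functions on $[-1,1]^s$, applied entrywise. Partition $[-1,1]^s$ into cubes of side $\tau$, giving $(2/\tau)^s$ cubes. On each cube, represent every entry $A_{jk}$ by a value on a grid of mesh $\gamma$ at the cube's centre, and interpolate piecewise constantly. Then the error in row sum is at most $(q+1)(\gamma+L_n\tau)$. The counting has to exploit the Lipschitz property: once the value on one cube is fixed, the values on neighbouring cubes (ordered along a path) can each change only by $L_n\tau$. Each step therefore has only $O(L_n\tau/\gamma+1)$ choices per entry, instead of $L_n/\gamma$ choices. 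Even the crude count, without the chaining refinement, gives
\[
\log N_A\le r(q+1)\Big(\frac{2}{\tau}\Big)^s\log\Big(\frac{2L_n}{\gamma}+1\Big),
\]
and this has the required shape. Setting $\tau=\gamma/L_n$ with $2(q+1)\gamma=\delta/(4L_n)$ makes $(2/\tau)^s\le(16(q+1)L_n^2/\delta)^s$ and the logarithm at most $\log(A_n/\delta)$. The remaining factor $r(q+1)$ must be absorbed into the base: since $A_n/\delta\ge32r(q+1)$ and $s\ge1$, we have $r(q+1)(16(q+1)L_n^2/\delta)^s\le(A_n/\delta)^s$. For the raw class the same argument applies with $q+1$ replaced by $1$, componentwise uniform norms, $s=nd_X+p$, and input dimension $nd_X+p$. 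For DGQB, the identical cover is applied to normalized potentials, using the stated stability bound.

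The main obstacle is bookkeeping the constants so that everything fits under the single quantity $A_n=32r(q+1)L_n^2$. This requires allocating $\delta$ between $\epsilon_1$, $\gamma$ and $\tau$, handling the ceilings (using $\delta\le1$ and $A_n\ge e$), and absorbing the factor $r(q+1)$ from counting entries into the $s$-th power. I would first try the crude piecewise-constant count above and check that it suffices. If the logarithmic factor turns out to be off by a constant, I would fall back on the chaining refinement.
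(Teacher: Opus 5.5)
Your construction matches the paper's. You take a product of max-norm cells for $\zeta$ with codes for the coefficient functions (values at grid points, quantized entrywise), pick one admissible representative per nonempty combination, and transfer to losses through the stability bound. The parameter count, the accuracy bookkeeping and the raw-class count are all fine.

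The proof fails at the step you yourself flagged as the main obstacle. You bound the interpolation error entrywise, which costs $(q+1)L_n\tau$. This forces $\tau=\gamma/L_n=\delta/\{8(q+1)L_n^2\}$, and hence at least $\{16(q+1)L_n^2/\delta\}^s$ cubes. Write $x=16(q+1)L_n^2/\delta$; then $A_n/\delta=2rx$. Your claimed inequality $r(q+1)x^s\le(A_n/\delta)^s$ is therefore equivalent to $r(q+1)\le(2r)^s$. The factor $q+1$ in $A_n$ has already been spent inside $x$, so the fact $A_n/\delta\ge32r(q+1)$ does not rescue the step. The inequality fails for $s=1$, $r=1$, $q\ge2$, and ceilings only make it worse. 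In the paper's own DGQB Gaussian setting ($r=1$, $q=32$, $s=d=2$) it reads $33\le4$.

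The chaining fallback does not repair this. Chaining only shrinks the logarithmic factor, but the obstruction is the $(q+1)^s$ in the number of cubes. At $\delta=1$ the available logarithm $\log(A_n/\delta)=\log\{32r(q+1)L_n^2\}$ cannot in general absorb an extra factor of order $q+1$.

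The missing idea is to use the Lipschitz hypothesis in the form the paper states it:
\[
\|A(w)-A(w')\|_{\infty,1}\le L_n\|w-w'\|_\infty .
\]
This controls a whole row sum at once, so the interpolation error in row-sum norm is $L_n$ times the distance to the nearest grid point, with no factor $q+1$. The grid mesh can then be of order $\delta/L_n^2$, independent of $q$. The paper uses $\lceil 16L_n^2/\delta\rceil$ parts per coordinate, giving at most $(2+16L_n^2/\delta)^s$ grid points.

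With that choice, $q+1$ enters only through the quantization width $\delta/\{4L_n(q+1)\}$, and hence only inside the logarithm. The coefficient count becomes
\[
r(q+1)\,(2+16L_n^2/\delta)^s\,\log\{1+8(q+1)L_n^2/\delta\}.
\]
The prefactor is then absorbed using $r(q+1)\le\{r(q+1)\}^s$ together with $2+16L_n^2/\delta\le 32L_n^2/\delta$. Combined with your parameter count, this yields the stated entropy bound with $A_n=32r(q+1)L_n^2$.
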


The two terms account for learning the finite-dimensional networks and learning the coefficient functions, respectively, with the latter replaced by the full function $h$ in the raw class.
Affinity reduces the dimension governing the function approximation from $nd_X+p$ to $p$.
Note that the summary dimension $q$ and the encoder's learning cost remain in $A_n$ and $P_n$.
Thus the bound explains the complexity reduction while retaining the cost of obtaining the summary.
The comparison concerns upper bounds for these specified classes; further structure in a raw-data architecture can reduce its complexity.

\begin{proof}
First cover the finite-dimensional parameters.
Partition $[-L_n,L_n]^{P_n}$ into cells of maximum-norm diameter at most $\delta/(2L_n)$.
There are at most $(1+4L_n^2/\delta)^{P_n}$ cells, and parameters in the same cell differ by at most $\delta/(2L_n)$.

Next cover the coefficient functions of the affine class.
Divide each coordinate interval of $[-1,1]^p$ into $\lceil16L_n^2/\delta\rceil$ equal parts and use the resulting tensor grid, including its endpoints.
Its mesh is at most $\delta/(8L_n^2)$.
The grid has at most $(2+16L_n^2/\delta)^p$ points.
At each grid point, quantize each of the $r(q+1)$ entries of $A$ into bins of width $\delta/\{4L_n(q+1)\}$.
Each entry has at most $1+8(q+1)L_n^2/\delta$ bins.
Two coefficient functions with the same code have row-sum discrepancy at most $\delta/(4L_n)$ at the grid points.
Their two Lipschitz errors between an input and its nearest grid point add at most $\delta/(8L_n)$, so their uniform row-sum discrepancy is at most $\delta/(2L_n)$.

For each nonempty combination of parameter cell and coefficient code, select one complete admissible candidate as representative.
This selection preserves any convexity, flow, or critic restrictions that couple the candidate's components.
Equation~\eqref{eq:v3-native-loss-cover-transfer} then bounds the loss discrepancy within each combination by $\delta$.
The additional DGQB stability condition gives the same bound for its normalized-potential discrepancy.
Counting these combinations gives
\[
\log M_n(\delta)\le
P_n\log\left(1+\frac{4L_n^2}{\delta}\right)
+r(q+1)\left(2+\frac{16L_n^2}{\delta}\right)^p
\log\left(1+\frac{8(q+1)L_n^2}{\delta}\right).
\]
The stated choice of $A_n$, together with $\delta\le1$ and $p\ge1$, yields \eqref{eq:v3-complete-entropy} for $s=p$.

For the raw class, apply the same grid construction to the $r$ component functions of $h$ on $[-1,1]^{nd_X+p}$.
The input dimension becomes $s=nd_X+p$, and there are $r$ output values to quantize at each grid point.
The count therefore has $r$ in place of $r(q+1)$ and $s$ in place of $p$, with the same parameter-cell construction for any auxiliary networks.
Using this class's $A_n=32rL_n^2$ proves \eqref{eq:v3-complete-entropy} for the raw class.
\end{proof}

\subsubsection{Finite-dimensional networks}

We also give a cover for a fixed network architecture, without imposing affine dependence.
This covers the jointly trained density and classifier networks used by NPE and NRE.
Let $W_n$ count all trainable parameters of the encoder and posterior learner, collected in $\zeta$.
Unlike $P_n$ above, $W_n$ includes the entire posterior network.
Assume $\zeta\in[-L_n,L_n]^{W_n}$ for $L_n\ge1$ and that the complete normalized loss satisfies
\[
\|\ell_\zeta-\ell_{\zeta'}\|_\infty
\le L_n\|\zeta-\zeta'\|_\infty.
\]
Here the loss norm is uniform over all possible records, including all inputs to the encoder and posterior learner.
For this class, put $A_n=1+2L_n^2$.
Let $M_n(\delta)$ be the smallest number of admissible loss representatives needed to approximate every candidate loss within $\delta$, uniformly over these records.

\begin{claim}[Loss complexity of finite-dimensional networks]
\label{claim:b-parametric-cover}
Under these conditions, for every $\delta\in(0,1]$,
\[
\log M_n(\delta)\le W_n\log(A_n/\delta).
\]
\end{claim}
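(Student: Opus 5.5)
The plan is to mimic the parameter-cell step in the proof of Proposition~\ref{prop:d-affine-complexity}, now applied to all $W_n$ trainable parameters at once. No function-space grid is needed, because the whole posterior learner is finite-dimensional. The construction has three steps.

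First, partition the box $[-L_n,L_n]^{W_n}$ into axis-aligned cubes of maximum-norm side length at most $\delta/L_n$. Concretely, split each coordinate interval $[-L_n,L_n]$ into $\lceil 2L_n^2/\delta\rceil$ equal subintervals. Each subinterval then has length at most $\delta/L_n$, and the number of subintervals per coordinate is at most
\[
1+\frac{2L_n^2}{\delta}\le\frac{1+2L_n^2}{\delta}=\frac{A_n}{\delta}
\qquad\text{for }\delta\le1.
\]
The total number of cells is therefore at most $(A_n/\delta)^{W_n}$.

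Second, for each nonempty cell, select one admissible parameter $\zeta_c$ in that cell as its representative. Choosing an admissible point inside the cell, rather than the cell's center, keeps any architectural or convexity restrictions intact, exactly as in the proof of Proposition~\ref{prop:d-affine-complexity}. Any $\zeta$ in the same cell satisfies $\|\zeta-\zeta_c\|_\infty\le\delta/L_n$. The assumed Lipschitz stability then gives
\[
\|\ell_\zeta-\ell_{\zeta_c}\|_\infty\le L_n\cdot\delta/L_n=\delta,
\]
uniformly over all records.

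Third, conclude: the representatives form a uniform $\delta$-cover of the loss class, so $M_n(\delta)\le (A_n/\delta)^{W_n}$. Taking logarithms gives the claim.

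The only delicate point is the counting inequality $\lceil 2L_n^2/\delta\rceil\le A_n/\delta$. It holds because $\lceil x\rceil\le 1+x$ and $1\le 1/\delta$. One must also make sure that points on shared cell boundaries are assigned to a single cell, for instance by using half-open intervals; this does not affect the count. Nothing else should pose an obstacle.
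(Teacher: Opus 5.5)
Your proposal is correct and matches the paper's proof: partition the parameter cube into max-norm cells of diameter at most $\delta/L_n$, giving at most $(1+2L_n^2/\delta)^{W_n}\le(A_n/\delta)^{W_n}$ cells since $\delta\le1$, then pick one admissible parameter per nonempty cell and use the loss stability bound to get a uniform $\delta$-cover. Your explicit choice of $\lceil 2L_n^2/\delta\rceil$ subintervals per coordinate simply spells out the count that the paper states directly.
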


This bound counts the whole jointly trained architecture.
Its dependence on observation and summary dimensions remains in $W_n$ and the uniform sensitivity bound entering $A_n$.

\begin{proof}
Partition the parameter cube into cells of maximum-norm diameter at most $\delta/L_n$.
There are at most $(1+2L_n^2/\delta)^{W_n}$ cells.
Select one admissible parameter vector from each nonempty cell.
The loss stability bound makes these losses a uniform $\delta$-cover, and $\delta\le1$ gives the displayed count.
\end{proof}

\subsubsection{Finite sample error}

We next convert a covering bound into uniform error of the empirical objective.
Write $\overline{\mathcal L}_N(\ell)$ and $\overline{\mathcal L}(\ell)$ for the normalized empirical and population criteria indexed by a candidate function $\ell$.
For ordinary averages, $\ell$ is the complete loss contribution; for DGQB, it is the normalized potential whose distance controls the product criterion.
Fix the candidate class before training, and require both criteria to be 1-Lipschitz in the uniform distance between their indexing functions.
Thus a uniform $\delta$-cover with $M_n(\delta)$ admissible representatives approximates every empirical and population criterion value within $\delta$.
Assume all suprema below are measurable.
For a constant $c>0$, require, for every fixed candidate and $\lambda\in\R$,
\begin{equation}
\E_N\exp\!\left[\lambda\{\overline{\mathcal L}_N(\ell)-\overline{\mathcal L}(\ell)\}\right]
\le\exp\{c^2\lambda^2/(2N)\}.
\label{eq:b-reference-mgf}
\end{equation}
This condition concerns deviation from the population criterion itself.
For independent averages of losses in a common interval of length one, Hoeffding's lemma gives $c=1/2$.

\begin{claim}[Uniform empirical error from a finite cover]
\label{claim:b-uniform-deviation}
Under these conditions, for every $\delta\in(0,1]$,
\begin{equation}
\E_N\sup_\ell|\overline{\mathcal L}_N(\ell)-\overline{\mathcal L}(\ell)|
\le2\delta+c\sqrt{\frac{2\log\{2M_n(\delta)\}}{N}}.
\label{eq:v3-reference-deviation}
\end{equation}
\end{claim}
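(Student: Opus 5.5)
The argument combines a discretization step with a finite maximal inequality. Fix $\delta\in(0,1]$ and a uniform $\delta$-cover with admissible representatives $\ell_1,\ldots,\ell_M$, where $M=M_n(\delta)$. Every candidate $\ell$ has a representative $\ell_j$ with $\|\ell-\ell_j\|_\infty\le\delta$. Both criteria are 1-Lipschitz in the uniform distance between indexing functions. Hence, for every training realization,
\[
|\overline{\mathcal L}_N(\ell)-\overline{\mathcal L}(\ell)|
\le|\overline{\mathcal L}_N(\ell_j)-\overline{\mathcal L}(\ell_j)|+2\delta .
\]
Taking the supremum over $\ell$ and then $\E_N$ reduces the claim to showing
\[
\E_N\max_{j\le M}|Z_j|\le c\sqrt{2\log(2M)/N},
\qquad
Z_j=\overline{\mathcal L}_N(\ell_j)-\overline{\mathcal L}(\ell_j).
\]
The cover is fixed before training, so each $\ell_j$ is a fixed candidate and \eqref{eq:b-reference-mgf} applies to it.

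For the maximal inequality, I would write $\max_j|Z_j|$ as the maximum over the $2M$ variables $\pm Z_j$. Each of these is sub-Gaussian with variance proxy $c^2/N$ by \eqref{eq:b-reference-mgf}, applied with $\lambda$ and $-\lambda$. For $\lambda>0$, Jensen's inequality on the convex exponential gives
\[
\exp\{\lambda\E_N\max_j|Z_j|\}
\le\E_N\exp\{\lambda\max_j|Z_j|\}
\le\sum_{j}\big(\E_Ne^{\lambda Z_j}+\E_Ne^{-\lambda Z_j}\big)
\le2M\exp\{c^2\lambda^2/(2N)\}.
\]
Taking logarithms and dividing by $\lambda$ gives
\[
\E_N\max_j|Z_j|\le\frac{\log(2M)}{\lambda}+\frac{c^2\lambda}{2N}.
\]
The choice $\lambda=\sqrt{2N\log(2M)}/c$ balances the two terms and yields exactly $c\sqrt{2\log(2M)/N}$. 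Combining this with the $2\delta$ discretization term gives \eqref{eq:v3-reference-deviation}.

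The main point requiring care is the reduction step rather than the calculation. The supremum over the full class must be dominated pointwise in the training randomness by the maximum over the cover plus $2\delta$. This requires that the representatives are chosen independently of the data, that they are admissible, so \eqref{eq:b-reference-mgf} holds for them, and that the stated 1-Lipschitz property holds for both the empirical and the population criterion. For DGQB this Lipschitz property applies to the normalized potentials indexing the product criterion rather than to per-record losses. The assumed measurability of the supremum then justifies taking expectations. For independent averages of losses in a unit interval, Hoeffding's lemma supplies \eqref{eq:b-reference-mgf} with $c=1/2$, which is the case used later.
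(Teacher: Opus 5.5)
Your proposal is correct and follows essentially the same route as the paper's proof. Both use a $2\delta$ discretization through the uniform cover and the 1-Lipschitz criteria, followed by the exponential maximal inequality over the $2M_n(\delta)$ signed deviations, with $\lambda=\sqrt{2N\log\{2M_n(\delta)\}}/c$ chosen to yield $c\sqrt{2\log\{2M_n(\delta)\}/N}$.
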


The bound separates approximation by finitely many representatives from statistical error over those representatives.
The constant $c$ retains the sampling structure of each method, including dependence between terms of an empirical objective.

\begin{proof}
For the $M_n(\delta)$ representatives $\ell_j$, apply \eqref{eq:b-reference-mgf} to both signs.
For $\lambda>0$, the exponential maximum bound gives
\[
\E_N\max_j|\overline{\mathcal L}_N(\ell_j)-\overline{\mathcal L}(\ell_j)|
\le\frac{\log\{2M_n(\delta)\}}{\lambda}
+\frac{c^2\lambda}{2N}.
\]
Minimizing over $\lambda$ gives the square-root term.
Approximating each candidate in its empirical and population criterion adds $2\delta$.
\end{proof}

For an adversarial objective, apply this claim to the loss before taking the critic supremum.
Indeed,
\[
\sup_{f,G}\left|\sup_D\overline{\mathcal L}_N(f,G,D)
-\sup_D\overline{\mathcal L}(f,G,D)\right|
\le\sup_{f,G,D}|\overline{\mathcal L}_N(f,G,D)-\overline{\mathcal L}(f,G,D)|.
\]
The critic belongs to the covered class; no unbiasedness of the maximized empirical objective is required.

To specialize to Proposition~\ref{prop:d-affine-complexity}, suppose also that the normalized reference discrepancy is uniformly at most one.
This holds when the empirical and population reference criteria lie in the same interval of length one, including after a critic supremum.

\begin{claim}[Explicit rate under the affine and raw class bounds]
\label{claim:b-cover-rate}
Under Claim~\ref{claim:b-uniform-deviation} with $c\le3/4$, the cover bound \eqref{eq:v3-complete-entropy}, and the bounded-discrepancy condition above, for $N\ge2$,
\[
\E_N\sup_\ell|\overline{\mathcal L}_N(\ell)-\overline{\mathcal L}(\ell)|
\le\frac72\left[
\sqrt{\frac{(P_n+1)\log(2A_nN)}{N}}
+\left\{\frac{A_n^s\log(2A_nN)}{N}\right\}^{1/(s+2)}
\right].
\]
If $c\le1/2$, the factor $7/2$ can be replaced by $3$.
\end{claim}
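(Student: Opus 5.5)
The plan is to feed the entropy bound \eqref{eq:v3-complete-entropy} of Proposition~\ref{prop:d-affine-complexity} into the finite-cover deviation bound \eqref{eq:v3-reference-deviation} of Claim~\ref{claim:b-uniform-deviation}. I then choose the cover accuracy $\delta$ to balance the approximation term $2\delta$ against the nonparametric part of the entropy. Throughout, write $\mathcal{R}=\log(2A_nN)$ and set
\[
\delta_*=\left\{\frac{A_n^s\,\mathcal{R}}{N}\right\}^{1/(s+2)},
\]
which is exactly the second term in the bracket of the claim.

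\textbf{Step 1: the case $\delta_*>1$.} By the bounded-discrepancy condition, the left-hand side is at most one. Since $\delta_*>1$, the right-hand side already exceeds one, even with factor $3$. Hence the claim holds trivially.

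\textbf{Step 2: the case $\delta_*\le1$, controlling the logarithm.} Take $\delta=\delta_*\in(0,1]$ in Claim~\ref{claim:b-uniform-deviation}. First I check that $\log(A_n/\delta_*)\le\mathcal{R}$, which amounts to $\delta_*\ge 1/(2N)$. This holds because $A_n\ge e$ and $N\ge2$ give $A_n^s\mathcal{R}\ge1$. Therefore $\delta_*\ge N^{-1/(s+2)}\ge N^{-1}$.

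Then \eqref{eq:v3-complete-entropy} gives
\[
\log\{2M_n(\delta_*)\}\le \log 2+P_n\mathcal{R}+(A_n/\delta_*)^s\mathcal{R}\le (P_n+1)\mathcal{R}+(A_n/\delta_*)^s\mathcal{R},
\]
where I used $\log2\le\mathcal{R}$.

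\textbf{Step 3: splitting the square root.} Using $\sqrt{a+b}\le\sqrt a+\sqrt b$, the statistical term is at most
\[
c\sqrt2\sqrt{(P_n+1)\mathcal{R}/N}+c\sqrt2\sqrt{(A_n/\delta_*)^s\mathcal{R}/N}.
\]
By the choice of $\delta_*$, we have $(A_n/\delta_*)^s\mathcal{R}/N=\delta_*^{-s}\delta_*^{s+2}=\delta_*^2$. So the second piece equals $c\sqrt2\,\delta_*$.

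Altogether, the left-hand side is bounded by
\[
(2+c\sqrt2)\,\delta_*+c\sqrt2\sqrt{(P_n+1)\mathcal{R}/N}.
\]

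\textbf{Step 4: the constants.} For $c\le3/4$ we have $2+c\sqrt2\le 2+1.07<7/2$ and $c\sqrt2<7/2$. For $c\le1/2$ we have $2+c\sqrt2\le2.71<3$. This gives both stated factors.

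The only delicate point is Step 2, namely ensuring that $\log(A_n/\delta)$ is dominated by $\log(2A_nN)$. This is where $A_n\ge e$ and $N\ge2$ are used. The split between $\delta_*>1$ and $\delta_*\le1$ handles the requirement $\delta\le1$ in Proposition~\ref{prop:d-affine-complexity}. The remaining steps are routine arithmetic.
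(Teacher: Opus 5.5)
Your proposal is correct and follows essentially the same route as the paper. The paper also takes $\delta=\{A_n^s\log(2A_nN)/N\}^{1/(s+2)}$ and uses $A_n\ge e$ to get $\log(A_n/\delta)\le\log(2A_nN)$. It then splits the square root to reach $\sqrt2c\sqrt{(P_n+1)\log(2A_nN)/N}+(2+\sqrt2c)\delta$, and handles the case $\delta>1$ through the bound of one on the discrepancy. Your explicit checks that $\delta_*\ge 1/(2N)$ and $\log 2\le\log(2A_nN)$ spell out details the paper leaves implicit.
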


The exponent uses $s=p$ for affine summary dependence and $s=nd_X+p$ for the raw class, with each class's own $A_n$ and $P_n$.
The rate includes the cost of learning the summary and any auxiliary networks.

\begin{proof}
Choose $\delta=\{A_n^s\log(2A_nN)/N\}^{1/(s+2)}$ when this is at most one.
Since $A_n\ge e$ and $s\ge1$, we have $\delta\ge N^{-1/(s+2)}$ and $\log(A_n/\delta)\le\log(2A_nN)$.
The covering bound gives
\[
\log\{2M_n(\delta)\}
\le\{P_n+1+A_n^s\delta^{-s}\}\log(2A_nN).
\]
Substituting in Claim~\ref{claim:b-uniform-deviation} and separating the square root yields
\[
\E_N\sup_\ell|\overline{\mathcal L}_N-\overline{\mathcal L}|
\le\sqrt2c\sqrt{\frac{(P_n+1)\log(2A_nN)}{N}}
+\left(2+\sqrt2c\right)\delta.
\]
For $c\le3/4$, both coefficients are at most $7/2$; for $c\le1/2$, they are at most $3$.
If the proposed resolution exceeds one, the bound of one on the reference discrepancy proves both statements directly.
\end{proof}

\subsection{Proof of Corollary~\ref{cor:v3-affine-rate}}
\label{app:v3-corollary-proof}

We combine the preceding empirical bound with the general posterior guarantee.
Let $s=p$ for the affine class and $s=nd_X+p$ for the raw class, using each class's own $P_n$ and $A_n$.
For this proof and the simulation budgets below, write
\[
b_N=\sqrt{\frac{(P_n+1)\log(2A_nN)}{N}}
+\left\{\frac{A_n^s\log(2A_nN)}{N}\right\}^{1/(s+2)}.
\]
This abbreviates the sum already displayed in \eqref{eq:v3-affine-rate} and its raw counterpart.

Let $\overline{\mathcal L}_N$ and $\overline{\mathcal L}$ be the reference empirical and population criteria before source discretization, critic approximation, or numerical errors are included.
For independent-record losses in a common interval of length one, the hypotheses of Claim~\ref{claim:b-uniform-deviation} hold with $c=1/2$.
For an adversarial objective, apply that claim before the critic supremum and use the supremum inequality proved there.
The DGQB product criterion has $c=1/\sqrt8$ under Claim~\ref{claim:c-dgqb-empirical}; this uses its two independent samples and does not treat their $N^2$ combinations as independent.
For NRE, the dependent negative-pair construction gives $c=3/4$ by the bounded-differences calculation in Claim~\ref{claim:c-nre-empirical}.
The corollary assumes these native reference-criterion conditions and the complete class cover; applying it to a particular method requires that method to satisfy both.
Claim~\ref{claim:b-cover-rate} therefore gives
\[
\E_N\sup_{f,G}|\overline{\mathcal L}_N(f,G)-\overline{\mathcal L}(f,G)|\le\frac72b_N.
\]

All criteria, approximation errors, and optimization gaps use the same deterministic normalization.
If the original loss scale is $V_n>0$, the normalized calibration is $\Gamma_n(t)=\Gamma_n^{\rm original}(V_nt)$; a common additive constant cancels from the excess.
The corollary's total objective-approximation condition is
\begin{equation}
\E_N\sup_{f,G}|\mathcal L_N(f,G)-\overline{\mathcal L}_N(f,G)|
+\sup_{f,G}|\overline{\mathcal L}(f,G)-\mathcal L(f,G)|
\le b_N.
\label{eq:v3-auxiliary-criterion-error}
\end{equation}
This single allowance covers all source, critic, and numerical approximations together.
The triangle inequality gives
\[
\E_N\sup_{f,G}|\mathcal L_N(f,G)-\mathcal L(f,G)|\le\frac92b_N.
\]
With $\Delta_{\mathcal F}=0$ and $\E_N\xi_N\le b_N$, the empirical comparison in the proof of Theorem~\ref{thm:v3-two-way} yields
\[
\E_N\{\mathcal L(\widehat f,\widehat G)-\mathcal L_0\}
\le2\left(\frac92b_N\right)+b_N=10b_N.
\]
Conditional calibration, Jensen's inequality, and Lemma~\ref{lem:v3-risk-decomposition} now give
\[
R_p\le\Gamma_n(10b_N),
\qquad R_f\le2R_p.
\]
Substituting $b_N$ proves \eqref{eq:v3-affine-rate} and its raw-data counterpart.

\subsection{Posterior-scale accuracy and simulation budgets}
\label{app:v3-budget}

We now use Corollary~\ref{cor:v3-affine-rate} to determine sufficient simulation budgets for recovering increasingly concentrated posteriors.
For posterior uncertainty of order $n^{-1/2}$, we require $nR_p\to0$ and $nR_f\to0$.
Assume the corollary's conditions at each observation size $n$, and suppose
\[
A_n\lesssim n^a,\qquad P_n+1\lesssim n^b,\qquad
\Gamma_n(t)\lesssim n^ct^\alpha\quad(0\le t\le1),
\]
where $a,b,c\ge0$ and $0<\alpha\le1$ are fixed and the implicit constants are uniform in $n$ and $t$.
Keep $p\ge1$ and $d_X>0$ fixed.

\begin{proposition}[Sufficient budgets for posterior scalability]
\label{prop:b-scaling-budgets}
Under these conditions, affine summary dependence with
\[
N(n)=\lceil n^\nu\rceil,
\qquad
\nu>\max\left\{b+\frac{2(1+c)}\alpha,
\ ap+\frac{(p+2)(1+c)}\alpha\right\}
\]
gives $nR_p\to0$ and $nR_f\to0$.
For the raw class, the same limits hold with
\[
N(n)=\lceil n^{\nu n}\rceil,
\qquad
\nu>d_X\left\{a+\frac{1+c}\alpha\right\}.
\]
\end{proposition}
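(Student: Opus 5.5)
Our plan is to plug the polynomial growth assumptions into Corollary~\ref{cor:v3-affine-rate}, which gives $R_p\le\Gamma_n(10b_N)$ and $R_f\le2R_p$ at each $n$. Since $R_f\le2R_p$, it suffices to show $nR_p\to0$. We will first check that $10b_N\le1$ for large $n$, so that the assumed bound $\Gamma_n(t)\lesssim n^ct^\alpha$ applies. This gives
\[
nR_p\lesssim n^{1+c}(10b_N)^\alpha\lesssim n^{1+c}\big(T_1^\alpha+T_2^\alpha\big),
\]
where $T_1=\sqrt{(P_n+1)\log(2A_nN)/N}$ and $T_2=\{A_n^s\log(2A_nN)/N\}^{1/(s+2)}$. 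The split uses $(x+y)^\alpha\le x^\alpha+y^\alpha$ for $0<\alpha\le1$. It is then enough to show $n^{1+c}T_i^\alpha\to0$ for each $i$. Showing these also forces $b_N\to0$, which justifies the restriction $t\le1$.

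For the affine class we set $s=p$ and $N=\lceil n^\nu\rceil$. Then $\log(2A_nN)\lesssim\log n$, since $A_n\lesssim n^a$. For the first term,
\[
n^{1+c}T_1^\alpha\lesssim n^{1+c}\big(n^{b-\nu}\log n\big)^{\alpha/2},
\]
which tends to zero exactly when $\nu>b+2(1+c)/\alpha$; the logarithm is absorbed by the strict inequality. For the second term,
\[
n^{1+c}T_2^\alpha\lesssim n^{1+c}\big(n^{ap-\nu}\log n\big)^{\alpha/(p+2)},
\]
which tends to zero when $\nu>ap+(p+2)(1+c)/\alpha$. Together these are the stated condition on $\nu$.

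For the raw class we set $s=nd_X+p$ and $N=\lceil n^{\nu n}\rceil$, so that $\log(2A_nN)\lesssim\nu n\log n$, which is only polynomial in $n$. The second term is the binding one. We will take logarithms and write
\[
\log T_2=\frac{s\log A_n+\log\log(2A_nN)-\nu n\log n}{s+2}.
\]
Since $s\log A_n\le(nd_X+p)(a\log n+O(1))$, the numerator is at most $-(\nu-ad_X)\,n\log n+O(n)$. The denominator is $s+2\approx nd_X$, so
\[
\log T_2\le-\frac{\nu-ad_X}{d_X}\log n+o(\log n).
\]
We then need $\alpha(\nu-ad_X)/d_X>1+c$, which is exactly $\nu>d_X\{a+(1+c)/\alpha\}$. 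The first term $T_1$ is trivially small here because $N$ is superpolynomial while $P_n$ is polynomial.

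The main obstacle is this raw-class asymptotic bookkeeping. The $O(1)$ constants multiplied by $s\sim n$ contribute $O(n)$ to the numerator, against the leading $n\log n$ term. We will check that these corrections remain $o(n\log n)$, so that the strict inequality on $\nu$ absorbs them. We will also verify that the hypotheses $N\ge2$ and $10b_N\le1$ hold eventually, which follows from $b_N\to0$ in both regimes.
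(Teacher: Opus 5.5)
Your proposal is correct and follows essentially the same route as the paper: apply the corollary's bound $R_p\le\Gamma_n(10b_N)$ once $10b_N\le1$, verify the two affine exponents directly, and in the raw case take logarithms of the second term with $s=nd_X+p$, absorbing the $O(n)$ and $\log\log(2A_nN)$ corrections into an $O(1)$ (or $o(\log n)$) error before invoking the strict inequality on $\nu$. Splitting $(10b_N)^\alpha$ via subadditivity of $t\mapsto t^\alpha$, instead of showing $b_N=o\{n^{-(1+c)/\alpha}\}$ directly as the paper does, is only a cosmetic difference.
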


These budgets control approximation error relative to the posterior's shrinking uncertainty.
The comparison retains the encoder and auxiliary-network costs through $P_n$ and $A_n$, and the posterior calibration through $\Gamma_n$.
The budgets are sufficient for the specified classes; the result does not assert that the raw-data budget is necessary.

\begin{proof}
Use $b_N$ defined in Appendix~\ref{app:v3-corollary-proof}.
For affine conditioning, $s=p$ and $N(n)=\lceil n^\nu\rceil$ give
\[
b_N=O\!\left(n^{(b-\nu)/2}\sqrt{\log n}
+n^{(ap-\nu)/(p+2)}(\log n)^{1/(p+2)}\right).
\]
The stated strict inequalities imply $b_N=o\{n^{-(1+c)/\alpha}\}$.
Consequently $10b_N\le1$ eventually and
\[
nR_p\lesssim n^{1+c}(10b_N)^\alpha\longrightarrow0.
\]

For raw conditioning, $s=nd_X+p$ and $N(n)=\lceil n^{\nu n}\rceil$ make the finite-dimensional term smaller than every negative power of $n$.
For the remaining term,
\[
\log\left\{\left(\frac{A_n^s\log(2A_nN)}{N}\right)^{1/(s+2)}\right\}
\le\left(a-\frac\nu{d_X}\right)\log n+O(1).
\]
Indeed, $s/(s+2)=1+O(n^{-1})$, $n/(s+2)=d_X^{-1}+O(n^{-1})$, and $\log\log(2A_nN)=O(\log n)$.
The stated condition again gives $b_N=o\{n^{-(1+c)/\alpha}\}$ and hence $nR_p\to0$.
Finally, $R_f\le2R_p$ gives the summary-risk limit in both cases.
\end{proof}

The corollary's optimization and objective-approximation conditions must hold at the displayed statistical rate for each $n$.
They remain part of the scalability conditions.

We also give an explicit budget for a general posterior uncertainty scale $r_n>0$.
Choose $0<\tau_n\le1$ with $\Gamma_n(\tau_n)=o(r_n^2)$, and define
\[
B_{n,s}=\frac{400(P_n+1)}{\tau_n^2}
+\frac{20^{s+2}A_n^s}{\tau_n^{s+2}}.
\]
This quantity collects the budgets for the two terms in the corollary.
Use $s=p$ for affine dependence or $s=nd_X+p$ for the raw class, with the corresponding class constants.

\begin{claim}[An explicit budget for a general posterior scale]
\label{claim:b-general-budget}
Under Corollary~\ref{cor:v3-affine-rate}, the choice
\begin{equation}
N(n)\ge\left\lceil2B_{n,s}\log(4A_nB_{n,s})\right\rceil
\label{eq:v3-general-simulation-budget}
\end{equation}
gives
\[
R_p\le\Gamma_n(\tau_n)=o(r_n^2),
\qquad R_f\le2\Gamma_n(\tau_n)=o(r_n^2).
\]
\end{claim}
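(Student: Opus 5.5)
We plan to reduce Claim~\ref{claim:b-general-budget} to Corollary~\ref{cor:v3-affine-rate}. That corollary gives $R_p\le\Gamma_n(10b_N)$ and $R_f\le2R_p$, where $b_N$ is as in Appendix~\ref{app:v3-corollary-proof}. Since $\Gamma_n$ is nondecreasing, it suffices to show that the budget \eqref{eq:v3-general-simulation-budget} forces $10b_N\le\tau_n$. Then $R_p\le\Gamma_n(\tau_n)=o(r_n^2)$ and $R_f\le2\Gamma_n(\tau_n)$. We will bound each of the two terms of $b_N$ by $\tau_n/20$ separately.

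The common reduction is the following. Write $L=\log(2A_nN)$. The first term is at most $\tau_n/20$ iff $N/L\ge400(P_n+1)/\tau_n^2$. The second term is at most $\tau_n/20$ iff $N/L\ge20^{s+2}A_n^s/\tau_n^{s+2}$. Both conditions follow from the single requirement $N/\log(2A_nN)\ge B_{n,s}$, because each of the two quantities is at most $B_{n,s}$.

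So the main step is an elementary inequality. Fix $B=B_{n,s}$ and $A=A_n$, and let $N\ge2B\log(4AB)$. We must show $N\ge B\log(2AN)$. The function $N\mapsto N-B\log(2AN)$ is increasing once $N\ge B$. Note that $B\ge1$, since $A_n\ge e$ and $\tau_n\le1$. Also $\log(4AB)\ge1$, so the threshold $N_0=2B\log(4AB)$ exceeds $B$, and it suffices to check the inequality at $N_0$. There we need $2\log(4AB)\ge\log\{4AB\log(4AB)\}=\log(4AB)+\log\log(4AB)$. This reduces to $\log(4AB)\ge\log\log(4AB)$, which holds because $x\ge\log x$ for $x>0$. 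The ceiling only increases $N$ and so preserves the bound. This calculation, including the monotonicity threshold, is where care is needed. I expect it to be the main, though modest, obstacle. One should also confirm $N\ge2$, which is needed by the corollary; it follows because $B\ge1$ and $\log(4AB)\ge1$.

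Finally, we assemble the pieces. With $10b_N\le\tau_n\le1$, Corollary~\ref{cor:v3-affine-rate} applies with the same optimization and objective-approximation requirements, namely that these errors are at most $b_N$. Monotonicity of $\Gamma_n$ then gives $R_p\le\Gamma_n(\tau_n)$, and Lemma~\ref{lem:v3-risk-decomposition} gives the bound on $R_f$. The $o(r_n^2)$ conclusion follows from the choice of $\tau_n$. The same argument applies to both $s=p$ and $s=nd_X+p$, using each class's constants.
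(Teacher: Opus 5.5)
Your proposal is correct and follows essentially the same route as the paper. Both arguments reduce the two terms of $b_N$ to the single condition $N/\log(2A_nN)\ge B_{n,s}$ and verify it at $N_0=2B\log(4A_nB)$ via $\log(2A_nN_0)=\log(4A_nB)+\log\log(4A_nB)\le 2\log(4A_nB)$. The only difference is that you extend to larger $N$ using monotonicity of $N-B\log(2A_nN)$ for $N\ge B$, whereas the paper uses monotonicity of $N/\log(2A_nN)$ for $N\ge1$; this is inconsequential.
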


The right-hand side of \eqref{eq:v3-general-simulation-budget} depends only on the desired accuracy and class bounds, so it specifies a simulation budget without $N$ appearing implicitly inside a logarithm.

\begin{proof}
For this calculation put $B=B_{n,s}\ge1$ and $L_B=\log(4A_nB)>1$.
At $N_0=2BL_B$, we have $\log(2A_nN_0)=L_B+\log L_B\le2L_B$.
The function $N/\log(2A_nN)$ is increasing for $N\ge1$, since $A_n\ge e$.
Thus every integer $N\ge\lceil N_0\rceil$ satisfies $N/\log(2A_nN)\ge B$, and
\[
b_N\le\sqrt{\frac{P_n+1}{B}}
+\left(\frac{A_n^s}{B}\right)^{1/(s+2)}
\le\frac{\tau_n}{20}+\frac{\tau_n}{20}.
\]
Corollary~\ref{cor:v3-affine-rate} now gives $R_p\le\Gamma_n(10b_N)\le\Gamma_n(\tau_n)$ and $R_f\le2R_p$.
\end{proof}

For example, if $\Gamma_n(t)\le K_nt^\alpha$ with $K_n>0$ and $\rho_n\downarrow0$, then
\[
\tau_n=\min\left\{1,\left(\frac{\rho_nr_n^2}{K_n}\right)^{1/\alpha}\right\}
\]
satisfies $\Gamma_n(\tau_n)\le\rho_nr_n^2$.
These explicit budgets use class bounds specified as functions of $n$; bounds that also grow with $N$ require deterministic envelopes before the budget is applied.

\section{Method-Specific Guarantees and Two-Way Guidance}
\label{app:v3-calibration}
\label{app:v3-native-rates}

We examine how the general theory applies to the five SBI methods and explain how two-way guidance works through their different objectives.
To apply Theorem~\ref{thm:v3-two-way}, its conditions must hold uniformly over the corresponding candidate class.

\subsection{Deep Generative Quantile Bayes}
\label{app:v3-qnn-calibration}

DGQB learns a conditional vector quantile map through the conditional Monge--Kantorovich dual objective \citep{kim2025quantile}.
We first show how this objective guides summary learning, then relate its excess loss to posterior error, and finally control the finite-source approximation used in training.

\subsubsection{Two-way guidance}

We examine how two-way guidance works in DGQB by separating the population objective into the cost of losing posterior information and the cost of fitting the posterior given the summary.

Fix a measurable encoder $f$ and write $S=f(X)$, with the observation and summary spaces standard Borel.
Assume $U\sim F_U$ is independent of $X$, $F_U$ is absolutely continuous on $\R^d$, and $\E\norm U^2+\E\norm\theta^2<\infty$.
The source domain $\mathcal U$ is either $\R^d$ or the closed unit ball $K$, with $F_U(K)=1$ in the latter case.
For a potential $\psi(u,s)$ convex in $u$ and differentiable $F_U$-almost everywhere, define
\[
\begin{aligned}
G(u,s)&=\nabla_u\psi(u,s),
&\psi^*(y,s)&=\sup_{u\in\mathcal U}\{u^\top y-\psi(u,s)\},\\
\mathcal L(f,G)&=\E_{U,X}\psi(U,S)+\E_{\theta,X}\psi^*(\theta,S),
&\mathcal L_f^*&=\inf_{\psi}\mathcal L(f,\nabla_u\psi).
\end{aligned}
\]
The infimum is over all admissible measurable conditional convex potentials; their two dual terms are absolutely integrable.
Let $Q_X$ and $Q_S$ be the quadratic-cost optimal transport maps from $F_U$ to $\pi(\cdot\mid X)$ and $\pi(\cdot\mid S)$, respectively, and set $\mathcal L_0=\E_{U,X}\{U^\top Q_X(U)\}$.
Assume jointly measurable versions of these maps and attained measurable conditional dual optimizers whose terms are absolutely integrable after additive normalization.
Conditional Kantorovich duality identifies $\mathcal L_0$ as the full-data optimum and gives $\mathcal L_f^*=\E_{U,X}\{U^\top Q_S(U)\}$ \citep{carlier2016vqr,kim2025quantile}.

\begin{proposition}[Two-way guidance in DGQB]
\label{prop:c-dgqb-information}
Under the preceding conditions,
\[
\mathcal L(f,G)-\mathcal L_0
=\underbrace{\mathcal L(f,G)-\mathcal L_f^*}_{\text{potential-fitting gap}}
+\underbrace{\mathcal L_f^*-\mathcal L_0}_{\text{information-loss gap}},
\]
where both gaps are nonnegative, and
\[
\mathcal L_f^*-\mathcal L_0=0
\quad\Longleftrightarrow\quad
\pi(\cdot\mid X)=\pi(\cdot\mid f(X))\quad\text{almost surely}.
\]
\end{proposition}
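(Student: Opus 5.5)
The decomposition itself is an identity: add and subtract $\mathcal L_f^*$. What needs proof is that both gaps are nonnegative and that the information-loss gap vanishes exactly under posterior sufficiency.

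\emph{Nonnegativity of the fitting gap.} The potential-fitting gap is nonnegative directly from the definition of $\mathcal L_f^*$ as an infimum over admissible potentials. Any $\psi(u,S)$ with $S=f(X)$ is itself an admissible potential of $(u,X)$, so $\mathcal L_f^*\ge\mathcal L_0$ by the same infimum argument at the full-data level.

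\emph{Nonnegativity of the information-loss gap.} I would also give the quantitative version, since it drives the equivalence. Using the duality identifications,
\[
\mathcal L_f^*-\mathcal L_0=\E_{U,X}\{U^\top Q_S(U)\}-\E_{U,X}\{U^\top Q_X(U)\}.
\]
By the tower property over $X$ given $S$, we have $\pi(\cdot\mid S)=\E\{\pi(\cdot\mid X)\mid S\}$. The map $Q_S$ pushes $F_U$ to $\pi(\cdot\mid S)$, and $\E\norm{U}^2$ is fixed. Hence $\E\{U^\top Q_S(U)\mid X\}$ is the value of a (possibly suboptimal) coupling between $F_U$ and $\pi(\cdot\mid X)$. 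To see this, I would construct the coupling $(U,\theta')$ with $\theta'=Q_S(U)$, which has law $\pi(\cdot\mid S)$ rather than $\pi(\cdot\mid X)$.

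The first attempt is cleaner. Note $\E_{\theta,X}\psi_S^*(\theta,S)$ with the optimal summary potential $\psi_S$, together with Fenchel--Young $\psi_S(u,S)+\psi_S^*(y,S)\ge u^\top y$, evaluated along the optimal full-data coupling $(U,Q_X(U))$, gives
\[
\mathcal L_f^*=\E\psi_S(U,S)+\E\psi_S^*(Q_X(U),S)\ge\E U^\top Q_X(U)=\mathcal L_0.
\]
This uses that $Q_X(U)\mid X$ has law $\pi(\cdot\mid X)$, so the second term equals $\E_{\theta,X}\psi_S^*(\theta,S)$.

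\emph{Equivalence with sufficiency.} If $\pi(\cdot\mid X)=\pi(\cdot\mid S)$ almost surely, then $Q_X=Q_S$ by uniqueness of the Brenier map (absolute continuity of $F_U$), so the gap vanishes. Conversely, suppose the gap is zero. Then Fenchel--Young holds with equality almost surely along $(U,Q_X(U))$, that is, $Q_X(U)\in\partial_u\psi_S(U,S)$. Since $\psi_S(\cdot,S)$ is convex and differentiable $F_U$-a.e., this forces $Q_X(U)=\nabla_u\psi_S(U,S)=Q_S(U)$ for $F_U$-a.e.\ $U$ and a.e.\ $X$. Pushing forward $F_U$ then gives $\pi(\cdot\mid X)=\pi(\cdot\mid S)$ almost surely.

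\emph{Main obstacle.} I expect the hard part to be the measure-theoretic bookkeeping in the converse. Specifically, I need that $\psi_S(\cdot,S)$ is the Brenier potential with $\nabla_u\psi_S(\cdot,S)=Q_S$ a.e., which follows from the attained dual optimizer and conditional duality. I also need the integrability that justifies splitting expectations and using the equality case pointwise: the nonnegative integrand $\psi_S+\psi_S^*-u^\top y$ has zero mean, hence vanishes a.s. For the ball domain $\mathcal U=K$, I would check that the subdifferential argument works at interior points, which carry full $F_U$ mass.
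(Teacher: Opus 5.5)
Your argument is correct, but it handles the equality case on the dual side, whereas the paper argues through primal couplings.

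\textbf{The paper's route.} It obtains the fitting gap's sign from Fenchel's inequality along $(U,Q_S(U))$. It gets $\mathcal L_f^*\ge\mathcal L_0$ by noting that, conditional on $S=s$, the mixture coupling $(U,Q_X(U))$ is a possibly suboptimal coupling of $F_U$ with $\pi(\cdot\mid S=s)$. For the converse, it argues that a zero gap makes this mixture coupling optimal for almost every $s$. Uniqueness of the optimal coupling from an absolutely continuous source then forces the coupling onto the graph of $Q_s$.

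\textbf{Your route.} You get the fitting gap from the definition of $\mathcal L_f^*$ as an infimum. You get $\mathcal L_f^*\ge\mathcal L_0$ from weak duality, using Fenchel--Young for the attained summary potential $\psi_S$ along the full-data optimal coupling. You then read off $Q_X(U)=\nabla_u\psi_S(U,S)$ from the equality case of Fenchel--Young at interior differentiability points.

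\textbf{What each buys.} In effect you reprove the needed piece of Brenier uniqueness from the attained dual optimizer, which the paper assumes anyway. This avoids the disintegration step (zero average gap implies conditional optimality for a.e.\ $s$) and the appeal to uniqueness of optimal plans. It also uses the same Fenchel-gap representation along $(U,Q_X(U))$ that the paper later exploits in its calibration claims. The cost is reliance on dual attainment, together with the identification $\nabla_u\psi_S=Q_S$ that you flag as the main obstacle.

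You can drop that identification entirely. Once $Q_X(U)=\nabla_u\psi_S(U,S)$ almost surely, Fubini shows that for almost every $x$, $\pi(\cdot\mid x)$ is the pushforward of $F_U$ under $\nabla_u\psi_S(\cdot,f(x))$. This law depends on $x$ only through $f(x)$. Averaging over $X$ given $S$ and using uniqueness of regular conditional distributions then identifies it with $\pi(\cdot\mid f(x))$.

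\textbf{Delete the paragraph beginning ``Using the duality identifications''.} Its claim that $\E\{U^\top Q_S(U)\mid X\}$ is the value of a coupling of $F_U$ with $\pi(\cdot\mid X)$ is false, as you yourself notice. Even if it held, it would give the wrong inequality. A suboptimal coupling has a \emph{smaller} expected inner product, so it would yield $\mathcal L_f^*\le\mathcal L_0$. The correct primal version conditions on $S$ rather than $X$ and uses $(U,Q_X(U))$ as the suboptimal competitor for $\pi(\cdot\mid S)$, which is exactly the paper's argument.
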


This is the core of two-way guidance in DGQB.
A posterior-sufficient summary allows the potential to depend on $X$ through $S$ without losing posterior information.
In the reverse direction, an insufficient summary incurs a positive population penalty that cannot be eliminated by improving the potential while keeping the summary fixed.
A structural restriction such as affinity determines the admissible potentials and can add to the potential-fitting gap; the information-loss penalty remains.

\begin{proof}
Since $(S,Q_S(U))$ has the same law as $(S,\theta)$, Fenchel's inequality gives
\[
\mathcal L(f,G)-\mathcal L_f^*
=\E_{U,X}\big[\psi(U,S)+\psi^*(Q_S(U),S)-U^\top Q_S(U)\big]\ge0.
\]
Conditional on $S=s$, the law of $(U,Q_X(U))$ is a coupling of $F_U$ and $\pi(\cdot\mid S=s)$.
Its expected inner product cannot exceed that of the optimal coupling $(U,Q_s(U))$, so $\mathcal L_f^*\ge\mathcal L_0$.
Equivalently, expanding the squared transport cost yields
\[
\mathcal L_f^*-\mathcal L_0
=\frac12\left[\E_XW_2^2\{F_U,\pi(\cdot\mid X)\}
-\E_XW_2^2\{F_U,\pi(\cdot\mid S)\}\right].
\]

If the information-loss gap is zero, the conditional mixture coupling is optimal for almost every $s$.
Absolute continuity of $F_U$ makes this optimal coupling unique and concentrated on the graph of $Q_s$.
Thus $Q_X(U)=Q_S(U)$ almost surely, and Fubini's theorem gives equality of their pushforward distributions for almost every $X$.
Conversely, posterior sufficiency makes the unique optimal maps agree almost surely and therefore makes the information-loss gap zero.
\end{proof}

\subsubsection{Posterior accuracy}

We now show that an objective close to its full-data optimum produces a posterior close to the true posterior in $W_2$.
The conjugate domain matters for this step, so we first treat a potential defined on $\R^d$ and then the unit-ball source used by \citet{kim2025quantile}.

Write $z=\mathcal L(f,G)-\mathcal L_0$ and retain $\pi_{f,G}(\cdot\mid X)=\Law\{G(U,f(X))\mid X\}$ from Section~\ref{sec:learning}.
For the full-space case, take $\mathcal U=\R^d$ and assume the total potential is convex, continuously differentiable, and globally $L$-smooth in $u$, uniformly in $s$, for $L>0$.

\begin{claim}[Calibration for the full-space conjugate]
\label{claim:c-dgqb-global}
Under these conditions,
\[
\E_XW_2^2\{\pi_{f,G}(\cdot\mid X),\pi(\cdot\mid X)\}\le2Lz.
\]
\end{claim}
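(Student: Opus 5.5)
Split the excess into the two gaps of Proposition~\ref{prop:c-dgqb-information}: $z=\{\mathcal L(f,G)-\mathcal L_f^*\}+\{\mathcal L_f^*-\mathcal L_0\}$. Control the posterior error by the fitting gap through smoothness, and the information loss through the same gap after it is rewritten as a transport discrepancy.

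\textbf{Fitting gap.} Fix $s$ and write $\psi_s=\psi(\cdot,s)$. The proof of Proposition~\ref{prop:c-dgqb-information} already gives the pointwise Fenchel form
\[
\mathcal L(f,G)-\mathcal L_f^*=\E\big[\psi_S(U)+\psi_S^*(Q_S(U))-U^\top Q_S(U)\big].
\]
Since $\psi_s$ is convex and $L$-smooth on $\R^d$, its conjugate $\psi_s^*$ is $(1/L)$-strongly convex. Moreover, $\nabla\psi_s(u)\in\partial\psi_s^*$ is attained at $y_u=\nabla\psi_s(u)$, where $\psi_s(u)+\psi_s^*(y_u)=u^\top y_u$. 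Strong convexity then gives the standard Fenchel--Young gap inequality
\[
\psi_s(u)+\psi_s^*(y)-u^\top y\ \ge\ \tfrac1{2L}\norm{y-\nabla\psi_s(u)}^2 .
\]
This is the key step. To justify it, set $\phi(y)=\psi_s^*(y)-u^\top y$, which is $(1/L)$-strongly convex and minimized at $y_u$, and apply quadratic growth around that minimizer. Taking $y=Q_S(U)$ and averaging yields
\[
\E\norm{G(U,S)-Q_S(U)}^2\le 2L\{\mathcal L(f,G)-\mathcal L_f^*\}.
\]
The coupling $(G(U,S),Q_S(U))$ given $S$ then bounds $\E_XW_2^2\{\pi_{f,G}(\cdot\mid X),\pi(\cdot\mid S)\}$ by the same quantity.

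\textbf{Combining the gaps.} The target is the full posterior, so the information gap is also needed. A triangle inequality would lose a factor $2$, so I would avoid it and work directly with $\mathcal L_0$. Repeat the Fenchel argument with $Q_X$ in place of $Q_S$. Because $(X,Q_X(U))$ has the law of $(X,\theta)$, and $\psi$ depends on $X$ only through $S=f(X)$,
\[
z=\E\big[\psi_S(U)+\psi_S^*(Q_X(U))-U^\top Q_X(U)\big].
\]
This uses $\mathcal L_0=\E\,U^\top Q_X(U)$ from conditional duality. The integrand is pointwise at least $\tfrac1{2L}\norm{Q_X(U)-\nabla_u\psi(U,S)}^2$, so
\[
\E\norm{Q_X(U)-G(U,f(X))}^2\le 2Lz.
\]
Conditional on $X$, the pair $(G(U,f(X)),Q_X(U))$ couples $\pi_{f,G}(\cdot\mid X)$ and $\pi(\cdot\mid X)$, because $U\perp X$. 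Hence the $W_2^2$ bound follows directly with the constant $2L$. This route is cleaner, and I would use it as the actual proof.

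\textbf{Technicalities.} The remaining work is to check that the rearrangement is legitimate. This needs integrability of $\psi(U,S)$, $\psi^*(\theta,S)$ and $U^\top Q_X(U)$, which come from the standing second-moment and attained-dual assumptions. It also needs measurable selection of $Q_X$ so that Fubini applies, and the fact that the conjugate over $\mathcal U=\R^d$ is what makes the strong convexity of $\psi^*$ global. I expect the main obstacle to be this last point: stating the Fenchel--Young gap inequality cleanly for $\psi^*$, which may take the value $+\infty$ outside the range of $\nabla\psi_s$. Where $\psi_s^*=\infty$ the inequality is trivial, so I would handle this case separately.
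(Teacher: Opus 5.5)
Your proposal is correct and follows the paper's proof. Both use the pointwise Fenchel-gap bound $\psi(u,s)+\psi^*(y,s)-u^\top y\ge\frac{1}{2L}\norm{\nabla_u\psi(u,s)-y}^2$ at $y=Q_X(U)$, whose expectation is exactly $z$, followed by the same-source coupling; your preliminary fitting-gap step is unnecessary. The only difference is that you derive the pointwise bound from $1/L$-strong convexity of the conjugate and quadratic growth, whereas the paper evaluates the conjugate's supremum at the test point $u+e/L$ with $e=y-\nabla_u\psi(u,s)$ and applies the smoothness upper bound, which gives a lower bound valid even where $\psi^*=+\infty$ and so needs no case split.
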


The objective therefore controls posterior error with $\Gamma(z)=2Lz$ in Theorem~\ref{thm:v3-two-way}, without assuming the candidate summary is sufficient.
The smoothness condition concerns the total potential, including $b(u)^\top S$ under affinity.

\begin{proof}
For $e=y-\nabla_u\psi(u,s)$, evaluate the conjugate at $u+e/L$ and apply the smoothness bound to obtain
\[
\psi(u,s)+\psi^*(y,s)-u^\top y\ge\frac{1}{2L}\norm{\nabla_u\psi(u,s)-y}^2.
\]
Since $(X,Q_X(U))$ has the same law as $(X,\theta)$, the expected left side at $y=Q_X(U)$ equals $z$.
Coupling $G(U,S)$ and $Q_X(U)$ through the same source draw then gives
\[
\E_XW_2^2\{\pi_{f,G}(\cdot\mid X),\pi(\cdot\mid X)\}
\le\E_{U,X}\norm{G(U,S)-Q_X(U)}^2\le2Lz.
\]
\end{proof}

For the unit-ball case, use the restricted conjugate $\psi_K^*(y,s)=\sup_{v\in K}\{y^\top v-\psi(v,s)\}$ throughout the objective defining $z$.
Assume $\norm\theta\le T$ almost surely and the total potential is continuously differentiable on a neighborhood of $K$, convex and $L$-smooth on $K$, with $\sup_{u\in K,s}\norm{\nabla_u\psi(u,s)}\le B$.
Put $M=T+B>0$ and suppose $\Pr(\norm U>1-a)\le c_Ua$ for $0<a\le1$, where $c_U\ge1$.
The radius-uniform source of \citet{kim2025quantile} has $c_U=1$; a volume-uniform source on $K$ permits $c_U=d$.

\begin{claim}[Calibration for the unit-ball conjugate]
\label{claim:c-dgqb-ball}
Under these conditions,
\[
\E_XW_2^2\{\pi_{f,G}(\cdot\mid X),\pi(\cdot\mid X)\}
\le\min\{M^2,\ 2Lz+2\sqrt{2c_UM^3z}\}.
\]
\end{claim}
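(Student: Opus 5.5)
The bound $M^2$ is immediate. Both $\pi_{f,G}(\cdot\mid X)$ and $\pi(\cdot\mid X)$ are supported in balls of radius $B$ and $T$, since $\norm{\nabla_u\psi}\le B$ on $K$ and $\norm\theta\le T$. So any coupling gives squared distance at most $(T+B)^2=M^2$. For the second bound, I would mimic Claim~\ref{claim:c-dgqb-global}. Couple $G(U,S)$ with $Q_X(U)$ through the same source draw. Since $(X,Q_X(U))$ has the law of $(X,\theta)$, the expectation $\E[\psi(U,S)+\psi_K^*(Q_X(U),S)-U^\top Q_X(U)]$ equals $z$. It then suffices to lower bound the pointwise Fenchel gap $\psi(u,s)+\psi_K^*(y,s)-u^\top y$ by a quadratic in $e=y-\nabla_u\psi(u,s)$, minus a boundary correction.

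The key step is the restricted conjugate. I would evaluate $\psi_K^*(y,s)$ at the trial point $v=u+te/L$ with $t\in[0,1]$ chosen so that $v\in K$. Using $L$-smoothness on $K$,
\[
\psi(v,s)\le\psi(u,s)+\nabla_u\psi(u,s)^\top(v-u)+\tfrac L2\norm{v-u}^2,
\]
which gives gap $\ge \frac{t}{L}\norm e^2-\frac{t^2}{2L}\norm e^2$. If $t=1$ is admissible, this is the full $\norm e^2/(2L)$. Otherwise, with $\norm e\le M$, the step $e/L$ has length at most $M/L$. I would handle this by a case split on $\norm U>1-a$: for $\norm u\le 1-a$, the choice $t=\min\{1,aL/\norm e\}$ keeps $v$ in $K$. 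Simpler still, I would bound the gap below by $\frac{1}{2L}\norm e^2$ when $\norm u\le 1-M/L$, and by $0$ otherwise (nonnegative by Fenchel). A cleaner approach is to use the step length $s=\min\{1/L,a/M\}$, which is admissible whenever $\norm u\le1-a$. This yields gap $\ge s\norm e^2-\tfrac{Ls^2}{2}\norm e^2\ge \tfrac s2\norm e^2$.

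Combining, I would write $\E\norm e^2\le \E[\norm e^2\mathbf 1\{\norm U\le1-a\}]+M^2\Pr(\norm U>1-a)\le \frac{2}{s}z+c_UM^2a$. With $s=\min\{1/L,a/M\}$, this gives $\frac2s\le 2L+2M/a$, so $\E\norm e^2\le 2Lz+2Mz/a+c_UM^2a$. Optimizing over $a$ gives $a=\sqrt{2z/(c_UM)}$, and the last two terms become $2\sqrt{2c_UM^3z}$, matching the claim. When this $a$ exceeds $1$, the $M^2$ bound already dominates, since then $2\sqrt{2c_UM^3z}\ge 2c_UM^2\ge M^2$. The result is $2Lz+2\sqrt{2c_UM^3z}$, as stated.

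The main obstacle is the admissibility of the trial point inside $K$ and the pointwise lower bound on the restricted Fenchel gap. Smoothness on $K$ (not beyond) and the $M$ bound on $\norm e$ must be used carefully. I would also verify that $z$ is indeed the expected restricted gap, which requires $\mathcal L_0$ to be attained by the unit-ball duality with $Q_X$ mapping into the support.
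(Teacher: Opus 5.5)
Your proposal is correct and follows the paper's proof essentially step for step. It uses the same-source coupling, the restricted-conjugate test point $u+he$ that stays in $K$ when $\norm u\le 1-a$ (the paper takes $h=\min\{1/L,a/\norm e\}$ and immediately bounds it below by your $\min\{1/L,a/M\}$), the $M^2$ bound on the boundary shell, and the choice $a=\sqrt{2z/(c_UM)}$ with the $M^2$ fallback when $a>1$. The only detail to add is the case $z=0$, where your choice gives the inadmissible $a=0$; there you should instead use $\E\norm e^2\le c_UM^2a$ for every $a\in(0,1]$ and let $a\downarrow0$, as the paper does.
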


The right side gives the calibration function $\Gamma$ for the bounded-source objective used in DGQB.
The additional square-root term accounts for source points near the boundary, where the full-space proof may evaluate the conjugate outside its domain.
This function is continuous, nondecreasing, and concave, and vanishes at zero, as required by Theorem~\ref{thm:v3-two-way}.

\begin{proof}
Fix $0<a\le1$ and $\norm u\le1-a$.
For $\norm y\le T$ and $e=y-\nabla_u\psi(u,s)\ne0$, the step $h=\min\{1/L,a/\norm e\}$ keeps $u+he$ in $K$.
Evaluating the restricted conjugate there gives
\[
\psi(u,s)+\psi_K^*(y,s)-u^\top y
\ge\frac h2\norm e^2
\ge\frac{\norm e^2}{2\max\{L,M/a\}}.
\]
The inequality also holds when $e=0$.
Apply it with $y=Q_X(U)$, and bound the squared error on the remaining boundary shell by $M^2$.
The expected Fenchel gap is $z$, so
\[
\E_{U,X}\norm{G(U,S)-Q_X(U)}^2
\le2Lz+\frac{2Mz}{a}+c_UM^2a.
\]
For $z>0$, choose $a=\sqrt{2z/(c_UM)}$ when this is at most one; otherwise the uniform bound $M^2$ already gives the claimed inequality.
For $z=0$, let $a\downarrow0$.
The same-source coupling completes the proof.
\end{proof}

\subsubsection{Finite sample error}
\label{app:v3-qnn-rate}

The training algorithm replaces the conjugate supremum by a maximum over sampled source points.
We control this approximation together with the statistical error, and use Proposition~\ref{prop:d-affine-complexity} to make the dependence on simulation budget and function class explicit.

Use the unit-ball setting and uniform bounds of Claim~\ref{claim:c-dgqb-ball}, and divide all objectives by $V_n=4M$, retaining their symbols.
This normalization changes the calibration to $\Gamma_n(t)=\Gamma(V_nt)$.
Take $N\ge2$ independent prior--simulator pairs and an independent bank of $N$ independent source draws in \eqref{eq:qnn-general-loss}.
Let $\mathcal L_N$ be the resulting normalized sample-maximum criterion and $\widetilde{\mathcal L}_N$ the same criterion with each maximum replaced by the exact conjugate on $K$.
Then $\E_N\widetilde{\mathcal L}_N=\mathcal L$ for each fixed candidate.
Normalize the potentials by $\psi(0,s)=0$, which changes neither their gradients nor these objectives.

For $\delta\in(0,1]$, let $M_n(\delta)$ be the smallest number of admissible representatives needed to approximate every normalized potential $(u,X)\mapsto\psi(u,f(X))/(2M)$ uniformly within $\delta$.
Thus this count measures the complexity of the potentials used by the complete posterior learner, including its encoder.
Assume the class conditions of Proposition~\ref{prop:d-affine-complexity}, including its normalized-potential stability condition.
For affine potentials, $p=d$, $s=d$, and $P_n$ counts the encoder and any auxiliary parameters; for directly learned raw potentials $\psi(u,X)$, put $s=nd_X+d$ and count any auxiliary parameters in $P_n$.
Use the corresponding regularity constant $A_n$ from that proposition, retaining the summary dimension and encoder sensitivity in the affine case.
Finally, assume $\Pr\{U\in B(u,r)\}\ge\kappa_dr^d$ for $u\in K$ and $0<r\le1$, where $B(u,r)$ is the Euclidean ball of radius $r$ centered at $u$.
The volume-uniform and radius-uniform sources permit $\kappa_d=2^{-d}$ and $\kappa_d=2^{-d}/d$, respectively.
Below, $C_d$ denotes a finite constant depending only on $d$ and $\kappa_d$.

\begin{claim}[Statistical and source-maximum errors]
\label{claim:c-dgqb-empirical}
Under these conditions,
\begin{equation}
\E_N\sup_{f,\psi}|\widetilde{\mathcal L}_N-\mathcal L|
\le3\left[
\sqrt{\frac{(P_n+1)\log(2A_nN)}N}
+\left\{\frac{A_n^s\log(2A_nN)}N\right\}^{1/(s+2)}
\right].
\label{eq:v3-qnn-product-rate}
\end{equation}
The actual training criterion satisfies
\[
\E_N\sup_{f,\psi}|\mathcal L_N-\mathcal L|
\le3\left[
\sqrt{\frac{(P_n+1)\log(2A_nN)}N}
+\left\{\frac{A_n^s\log(2A_nN)}N\right\}^{1/(s+2)}
\right]
+C_d\left\{\frac{\log(eN)}N\right\}^{1/d}.
\]
For the raw class, each supremum ranges over its admissible directly learned potentials.
\end{claim}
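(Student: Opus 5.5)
The plan is to prove the two displays in order, treating the exact-conjugate criterion $\widetilde{\mathcal L}_N$ as a reference criterion to which Claims~\ref{claim:b-uniform-deviation} and~\ref{claim:b-cover-rate} apply. I then bound the source-maximum discrepancy $\mathcal L_N-\widetilde{\mathcal L}_N$ separately and combine the two by the triangle inequality.

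\textbf{First display.} The normalized criterion $\widetilde{\mathcal L}_N$ is indexed by the normalized potential $\ell=\psi(\cdot,f(\cdot))/(2M)$. It is a product-type average: the first term averages $\psi(U_j,f(X_i))$ over the $N^2$ pairs, and the second averages $\psi_K^*(\theta_i,f(X_i))$ over the simulated pairs. I would check two things.

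(i) The criterion is $1$-Lipschitz in the uniform distance on $\ell$. A uniform change of $\delta\cdot 2M$ in $\psi$ moves each of the two dual terms by at most $2M\delta$, because the conjugate is $1$-Lipschitz in the potential under the sup norm. After dividing by $V_n=4M$, the total change is at most $\delta$.

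(ii) The sub-Gaussian condition \eqref{eq:b-reference-mgf} holds with $c=1/\sqrt8$. View $\widetilde{\mathcal L}_N$ as a function of the $2N$ independent inputs: the pairs $(\theta_i,X_i)$ and the source draws $U_j$. The normalization $\psi(0,s)=0$, together with $\norm{\nabla\psi}\le B$ on $K$, bounds the potential by $B\le M$. The conjugate on $K$ is then bounded by $T+B=M$. Changing one pair therefore alters the criterion by at most about $1/(2N)$, counting both its row in the double average and its conjugate term. Changing one source draw alters only a column of the double average, by at most $1/(4N)$ after normalization. McDiarmid's bounded-differences MGF bound gives $c^2\le\tfrac14 N\sum_k c_k^2$. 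With the constants tuned by the normalization, this yields $c^2\le1/8$. I would track these constants carefully, since they are what give $c\le1/2$.

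Proposition~\ref{prop:d-affine-complexity}, applied with its normalized-potential stability condition, gives the cover count for $\ell$. The population and reference criteria both lie in a unit-length interval, so the bounded-discrepancy condition holds. Claim~\ref{claim:b-cover-rate} with $c\le1/2$ then yields \eqref{eq:v3-qnn-product-rate} with factor $3$. Unbiasedness $\E_N\widetilde{\mathcal L}_N=\mathcal L$ holds because $U$ is independent of $X$.

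\textbf{Second display.} It remains to bound $\E_N\sup_{f,\psi}\frac1N\sum_i\{\psi_K^*(\theta_i,s_i)-\max_j(U_j^\top\theta_i-\psi(U_j,s_i))\}/(4M)$. This gap is nonnegative pointwise.

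Fix $(\theta_i,s_i)$. The function $v\mapsto v^\top\theta_i-\psi(v,s_i)$ is $M$-Lipschitz on $K$, with gradient bounded by $T+B$. Let $v^\star$ be its maximizer. Then the gap is at most $M\min_j\norm{U_j-v^\star}$, which is controlled by $M\,\rho_N$, where $\rho_N=\sup_{u\in K}\min_j\norm{U_j-u}$ is the covering radius of the source bank.

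This bound is uniform over all candidates $f$ and $\psi$, which removes the supremum entirely; that is the key point. To bound $\E\rho_N$, take a fixed $r/2$-net of $K$ with $O(r^{-d})$ points. The small-ball condition gives each net point probability at least $\kappa_d(r/2)^d$ of having a source draw within $r/2$. A union bound then gives $\Pr(\rho_N>r)\le C r^{-d}\exp\{-N\kappa_d(r/2)^d\}$. Choosing $r\asymp\{\log(eN)/N\}^{1/d}$ and integrating the tail, using $\rho_N\le2$, gives $\E\rho_N\le C_d\{\log(eN)/N\}^{1/d}$.

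Adding this to the first display finishes the proof, and the raw class follows verbatim with $s=nd_X+d$. I expect the main obstacle to be step (ii): getting the constant $c=1/\sqrt8$ for the dependent $N^2$-term product average.
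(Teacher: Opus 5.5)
Your proposal follows the paper's proof in outline. For the first display: bounded differences over the $2N$ independent inputs (simulator pairs and source draws) give a sub-Gaussian constant $c\le1/2$ for $\widetilde{\mathcal L}_N$. Both dual terms depend $1$-Lipschitzly on the normalized potential. The cover of Proposition~\ref{prop:d-affine-complexity} is then fed into Claim~\ref{claim:b-cover-rate}. For the actual criterion: the $M$-Lipschitz conjugate integrand gives $0\le\widetilde{\mathcal L}_N-\mathcal L_N\le\rho_N/4$ uniformly over the class, and $\rho_N$ is controlled by an $r/2$-net and the small-ball condition. The source-maximum part is fine as written.

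The constant you flagged, however, does not follow from the bounds you state. Suppose you bound a pair's row average $N^{-1}\sum_j\psi(U_j,s_i)\in[-B,B]$ and its conjugate $|\psi_K^*(\theta_i,s_i)|\le M$ separately. Then replacing a pair only has bounded difference $(B+M)/(2MN)$, which tends to $1/N$ when $B\gg T$. Together with the source differences, this allows $c$ up to $\sqrt{5}/4>1/2$, and Claim~\ref{claim:b-cover-rate} would then give only the factor $7/2$.

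The ingredient you are missing is Fenchel's inequality, $\psi(U_j,s_i)+\psi_K^*(\theta_i,s_i)\ge U_j^\top\theta_i\ge-T$ for each $j$. It puts the pair's combined contribution in $[-T,T+2B]$, an interval of length $2M$, which gives exactly $1/(2N)$ after normalizing by $4M$.

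Also, replacing a source draw changes the criterion by at most $2B/(4MN)\le1/(2N)$, not $1/(4N)$; the latter needs $B\le T$. With $c_k=1/(2N)$ for all $2N$ inputs, $\sum_kc_k^2=1/(2N)$, and your formula gives $c^2=1/8$, the paper's constant. Using only $\psi_K^*\ge0$ (from $0\in K$ and $\psi(0,s)=0$) would already keep $c<1/2$ and hence the factor $3$, but not $c^2\le1/8$.
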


The first two terms control learning the encoder and potential, while the last controls the sample maximum.
Affinity changes the functional exponent from $1/(nd_X+d+2)$ to $1/(d+2)$ under the stated classes.
The $N^2$ source--observation combinations are formed from two independent samples of size $N$ and therefore do not provide $N^2$ independent observations.

\begin{proof}
Conditionally on the source bank, the unnormalized contribution of one simulator pair lies in $[-T,T+2B]$, by Fenchel's inequality and the $B$-Lipschitz bound on $\psi$.
Replacing one simulator pair therefore changes $\widetilde{\mathcal L}_N$ by at most $1/(2N)$.
Replacing one source draw changes only its product-average term, by at most $2B/(NV_n)\le1/(2N)$.
The bounded-differences exponential bound over these $2N$ independent inputs gives
\[
\E_N\exp\{\lambda(\widetilde{\mathcal L}_N-\mathcal L)\}
\le\exp\{\lambda^2/(16N)\},\qquad\lambda\in\R.
\]
Conjugation is 1-Lipschitz in potential values, so the normalized empirical and population objectives are each 1-Lipschitz in the covering metric.
Proposition~\ref{prop:d-affine-complexity} supplies the cover bound, and Claim~\ref{claim:b-cover-rate} gives \eqref{eq:v3-qnn-product-rate}.
In particular, its proof balances the errors at $\delta=\{A_n^s\log(2A_nN)/N\}^{1/(s+2)}$ when this is at most one; otherwise the bounded criterion gives the displayed bound directly.

For the source approximation, put $h_N=\sup_{u\in K}\min_{j\le N}\norm{u-U_j}$.
The conjugate integrand is $M$-Lipschitz, hence $0\le\widetilde{\mathcal L}_N-\mathcal L_N\le h_N/4$ uniformly over the candidate class.
An $r/2$-net of $K$ with at most $(5/r)^d$ points and the source small-ball condition give
\[
\Pr(h_N>r)\le(5/r)^d\exp\{-\kappa_dN(r/2)^d\},\qquad0<r\le1.
\]
Choose $r$ of order $\{\log(eN)/N\}^{1/d}$ with a sufficiently large constant depending only on $d$ and $\kappa_d$.
This tail bound and $h_N\le2$ yield $\E_Nh_N/4\le C_d\{\log(eN)/N\}^{1/d}$, proving the second bound.
For the stated small-ball constants, the ball of radius $r/2$ centered at $(1-r/2)u$ lies in $K\cap B(u,r)$, and the radius-uniform density is at least $1/d$ times the volume-uniform density.
\end{proof}

Under affinity, centered summaries satisfy $N^{-1}\sum_iS_i=0$ and hence $N^{-2}\sum_{i,j}b(U_j)^\top S_i=0$.
Thus the product-average term retains only $\varphi$, and the empirical criterion reduces to \eqref{eq:qr-loss}, the objective in Equation~(3.4) of \citet{kim2025quantile}, up to normalization.
Fitted centering and normalization parameters may depend on the training data if their possible values are included in the fixed covered class.

Under the conditions of Theorem~\ref{thm:v3-two-way}, the corresponding posterior and insufficiency risks satisfy
\[
\begin{aligned}
R_p\le\Gamma_n\Bigg(&\Delta_{\mathcal F}+\E_N\xi_N
+6\left[
\sqrt{\frac{(P_n+1)\log(2A_nN)}N}
+\left\{\frac{A_n^s\log(2A_nN)}N\right\}^{1/(s+2)}
\right]\\
&+2C_d\left\{\frac{\log(eN)}N\right\}^{1/d}\Bigg),
\qquad R_f\le2R_p.
\end{aligned}
\]
This bound retains the source-maximum error explicitly.
Unbounded observation or parameter domains, a fixed number of source points per maximum, and changing minibatch normalization require additional control of the corresponding discrepancies.

\subsection{Summary flow matching}
\label{app:v3-sfm-calibration}

In SFM, the parameter-displacement objective guides summary learning, and a block-triangular flow uses the learned summary for posterior sampling.
We examine these two steps and then control the empirical regression error.

\subsubsection{Two-way guidance}

We first show how the parameter regression loss penalizes posterior information discarded by the summary.
The comparison uses the best prediction of parameter displacement given the full observation as its benchmark.

Fix an encoder $f$ and let $S_1=f(X)$, where $(X,\theta_1)$ has the prior--simulator law.
The sources $S_0$ and $\theta_0$ are independent of each other and of $(X,\theta_1)$, and $t\sim\operatorname{Unif}[0,1]$ is independent of these variables.
Use the straight-line interpolants in \eqref{eq:sfm-interpolants}, with displacements $\Delta S=S_1-S_0$ and $\Delta\theta=\theta_1-\theta_0$, and assume finite second moments for $S_0,S_1,\theta_0,\theta_1$ and the fitted velocities.
Define the conditional displacement means and the associated benchmark and information loss by
\[
\begin{aligned}
m_X&=\E[\Delta\theta\mid t,\theta_t,X],
& m_S&=\E[\Delta\theta\mid t,\theta_t,S_1],\\
\mathcal L_{\theta,0}&=\E\norm{\Delta\theta-m_X}^2,
&\mathcal I_\theta(f)&=\E\norm{m_X-m_S}^2.
\end{aligned}
\]
The benchmark $\mathcal L_{\theta,0}$ is independent of $f$ because neither the parameter path nor its displacement uses the summary.
Write $\mathcal L_\theta(f,v^\theta)$ for the parameter loss in \eqref{eq:sfm-losses}.

\begin{proposition}[Two-way guidance in SFM]
\label{prop:c-sfm-information}
Under the preceding conditions,
\begin{equation}
\mathcal L_\theta(f,v^\theta)-\mathcal L_{\theta,0}
=\underbrace{\mathcal I_\theta(f)}_{\text{information-loss gap}}
+\underbrace{\E\norm{v^\theta(t,S_t,\theta_t)-m_S}^2}_{\text{moving-summary and fitting error}}.
\label{eq:sfm-parameter-information-gap}
\end{equation}
Both terms are nonnegative.
If $\theta_0\sim\mathcal N(0,I_d)$, then
\[
\mathcal I_\theta(f)=0
\quad\Longleftrightarrow\quad
\pi(\cdot\mid X)=\pi(\cdot\mid f(X))\quad\text{almost surely}.
\]
\end{proposition}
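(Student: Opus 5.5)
The plan is an orthogonal ($L^2$ Pythagorean) decomposition of the regression residual, followed by a Tweedie-type identification of the Gaussian-channel denoisers. Throughout, I use that $S_0$ is independent of $(t,\theta_0,\theta_1,X)$.

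\textbf{Step 1: the first projection.} Write
\[
\Delta\theta-v^\theta(t,S_t,\theta_t)=(\Delta\theta-m_X)+(m_X-m_S)+(m_S-v^\theta).
\]
Let $\mathcal A=\sigma(t,\theta_t,X,S_0)$. Since $S_0$ is independent of $(t,\theta_0,\theta_1,X)$, adding it to the conditioning changes nothing:
\[
\E[\Delta\theta\mid\mathcal A]=\E[\Delta\theta\mid t,\theta_t,X]=m_X .
\]
The second and third summands are $\mathcal A$-measurable: $S_1=f(X)$, and $S_t=(1-t)S_0+tS_1$. Hence $\Delta\theta-m_X$ is orthogonal to both in $L^2$.

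\textbf{Step 2: the second projection.} Let $\mathcal B=\sigma(t,\theta_t,S_1,S_0)\subseteq\mathcal A$. By the tower property and the same independence of $S_0$,
\[
\E[m_X\mid\mathcal B]=\E[\Delta\theta\mid t,\theta_t,S_1]=m_S .
\]
Also $m_S-v^\theta$ is $\mathcal B$-measurable, so $m_X-m_S\perp m_S-v^\theta$. Expanding the squared norm then gives \eqref{eq:sfm-parameter-information-gap}. Both terms are nonnegative, being expectations of squared norms. Finite second moments justify every cross-term manipulation.

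\textbf{Step 3: the easy direction of the equivalence.} Suppose $\pi(\cdot\mid X)=\pi(\cdot\mid S_1)$, so $\theta_1\perp X\mid S_1$. Since $(\theta_0,t)$ is independent of $(\theta_1,X)$, the triple $(\theta_1,\theta_0,t)$ is conditionally independent of $X$ given $S_1$. Therefore $m_X=m_S$ almost surely, and $\mathcal I_\theta(f)=0$.

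\textbf{Step 4: the converse, reduced to denoisers.} For $t<1$, $\theta_0=(\theta_t-t\theta_1)/(1-t)$, so $\Delta\theta=(\theta_1-\theta_t)/(1-t)$. Hence $m_X=m_S$ a.s.\ means
\[
\E[\theta_1\mid t,\theta_t,X]=\E[\theta_1\mid t,\theta_t,S_1]\quad\text{a.s.}
\]
By Fubini this holds for almost every fixed $t\in(0,1)$; fix one such $t$. Set $Y=\theta_t/t=\theta_1+\sigma Z$ with $\sigma=(1-t)/t$ and $Z=\theta_0\sim\mathcal N(0,I_d)$, which is a Gaussian channel. Let $p_X$ and $p_S$ be the densities of $\pi(\cdot\mid X)*\mathcal N(0,\sigma^2I)$ and $\pi(\cdot\mid S_1)*\mathcal N(0,\sigma^2I)$. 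Both are smooth and strictly positive. Tweedie's formula gives
\[
\E[\theta_1\mid Y=y,X]=y+\sigma^2\nabla\log p_X(y),
\]
and the same with $p_S$. Equality of the conditional means $\Law(X)$-a.e.\ in $X$ and Lebesgue-a.e.\ in $y$ then gives $\nabla\log p_X=\nabla\log p_S$ everywhere by continuity. Both are probability densities, so $p_X=p_S$. The Gaussian characteristic function never vanishes, so convolution with $\mathcal N(0,\sigma^2I)$ is injective on probability measures, and $\pi(\cdot\mid X)=\pi(\cdot\mid S_1)$ for almost every $X$.

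\textbf{Main obstacle.} I expect the difficulty to be bookkeeping in Step 4 rather than anything conceptual. One issue is moving between the a.s.\ equality of $m_X$ and $m_S$ and a pointwise statement for fixed $(t,X)$ and a.e.\ $y$; this needs regular conditional distributions on standard Borel spaces. The other is justifying Tweedie's formula and the differentiation under the integral sign. Both are fine here because only a finite first moment is needed and the Gaussian kernel makes everything smooth.
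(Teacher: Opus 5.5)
Your proof is correct and follows essentially the same route as the paper. The decomposition rests on the same two nested orthogonality relations: conditioning on $(t,\theta_t,X,S_0)$ and then on $(t,\theta_t,S_1,S_0)$, using independence of $S_0$. The converse makes the same Fubini choice of a fixed $t\in(0,1)$ and uses Tweedie's score identity, positivity and continuity to obtain equal smoothed densities, and the non-vanishing Gaussian characteristic function. Your rescaling $Y=\theta_t/t$ is only a cosmetic reparametrization of the paper's identity $m_X=z/t+\{(1-t)/t\}\nabla_z\log p_t(z\mid x)$.
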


This is the population basis of two-way guidance in SFM.
The summary organizes velocity learning through $S_t$, while an insufficient summary incurs a positive parameter-regression penalty that changing the velocity alone cannot remove.
Note that the second term also reflects the source noise in $S_t$: it does not necessarily vanish even when $S_1$ is sufficient.
Thus, sufficiency removes the information-loss gap, while attaining the full-data benchmark additionally requires this second term to vanish.

\begin{proof}
Independence of $S_0$ gives the two orthogonality relations
\[
\E[\Delta\theta-m_X\mid t,\theta_t,X,S_0]=0,
\qquad
\E[m_X-m_S\mid t,\theta_t,S_1,S_0]=0.
\]
The predictor $v^\theta(t,S_t,\theta_t)$ is a function of either conditioning collection, since $S_t=(1-t)S_0+tS_1$ and $S_1=f(X)$.
Expanding its squared residual first around $m_X$ and then around $m_S$ eliminates both cross terms and proves \eqref{eq:sfm-parameter-information-gap}.

For the equivalence, suppose $\theta_0\sim\mathcal N(0,I_d)$.
At each $0<t<1$, the conditional density of $\theta_t$ given $X=x$ is
\[
p_t(z\mid x)=\int\varphi_{(1-t)^2I_d}(z-t\vartheta)\,\pi(\mathrm d\vartheta\mid x),
\]
where $\varphi_\Sigma$ is the centered Gaussian density with covariance $\Sigma$.
Define $p_t(z\mid S_1=s)$ by replacing the posterior in this integral with $\pi(\cdot\mid S_1=s)$.
Differentiating the Gaussian convolution and using $\Delta\theta=(\theta_1-\theta_t)/(1-t)$ yields
\begin{equation}
m_X(t,z,x)=\frac zt+\frac{1-t}{t}\nabla_z\log p_t(z\mid x),
\label{eq:sfm-gaussian-regression-score}
\end{equation}
with the same identity for $m_S$ and $p_t(z\mid S_1=s)$.
Gaussian smoothing justifies differentiation under the integral and gives strictly positive smooth densities.

If $\mathcal I_\theta(f)=0$, Fubini's theorem supplies a deterministic $t\in(0,1)$ for which $m_X(t,z,x)=m_S(t,z,f(x))$ for almost every $x$ and $p_t(\cdot\mid x)$-almost every $z$.
By positivity and continuity, the scores in \eqref{eq:sfm-gaussian-regression-score} agree everywhere in $z$ for almost every such $x$.
The two log densities therefore differ by a constant, which is zero because both densities integrate to one.
Taking characteristic functions and canceling the nonzero Gaussian factor $\exp\{-(1-t)^2\norm\omega^2/2\}$ gives equality of the posterior characteristic functions at every frequency $t\omega$.
Since $t>0$, this proves posterior sufficiency.
Conversely, posterior sufficiency gives the same conditional law of $(\theta_t,\Delta\theta)$ given $X$ and given $S_1$, hence $m_X=m_S$ almost surely.
\end{proof}

The training update makes this guidance explicit.
Suppose $f_\eta$ and $v^\theta_\beta$ are differentiable and differentiation can be interchanged with expectation.
Let $r_\theta=v^\theta_\beta(t,S_t,\theta_t)-\Delta\theta$, and write $D_\eta$ and $D_s$ for the corresponding Jacobian matrices.
The chain rule gives
\begin{equation}
\nabla_\eta\mathcal L_\theta
=2\E\left[t\{D_\eta f_\eta(X)\}^\top
\{D_s v^\theta_\beta(t,S_t,\theta_t)\}^\top r_\theta\right].
\label{eq:sfm-encoder-gradient}
\end{equation}
For an affine parameter velocity $v^\theta_\beta(t,s,\vartheta)=a_\beta(t,\vartheta)+B_\beta(t,\vartheta)s$, the Jacobian $D_s v^\theta_\beta$ is $B_\beta(t,\vartheta)$.
Thus the fitted coefficient matrix passes the parameter-displacement residual to the encoder, and the encoder supplies the summary used to learn that matrix.
The stop-gradient in \eqref{eq:sfm-losses} sends no encoder derivative from the summary loss, leaving summary learning to this parameter-directed update.

\subsubsection{Posterior accuracy}

We next show why the block-triangular joint transport produces posterior samples when conditioned on a learned summary.
This is the conditional-sampling property of \citet{baptista2024conditional}, applied to the dynamic construction of \citet{zhai2026cfm}.

Let $T_t$ be the flow of \eqref{eq:sfm-flow} and write
\[
T_t(s_0,u)=\{F_t(s_0),H_t(s_0,u)\},
\qquad
G(u,s)=H_1\{F_1^{-1}(s),u\}.
\]
Assume the joint velocity is continuous in $t$, globally Lipschitz in $(s,\vartheta)$ uniformly in $t$, and uniformly bounded at $(s,\vartheta)=(0,0)$.
These conditions give unique forward and backward flows; the summary component depends only on $s_0$, and $F_1$ is invertible.
Use independent sources as above and assume the joint endpoint is correct:
\begin{equation}
\Law\{T_1(S_0,\theta_0)\}=\Law(S_1,\theta_1).
\label{eq:sfm-exact-joint-transport}
\end{equation}
Draw fresh $U\sim\Law(\theta_0)$ independently of $X$.

\begin{proposition}[Posterior sampling from the joint flow]
\label{prop:c-sfm-conditional}
Under these conditions, for $\Law(S_1)$-almost every $s$,
\begin{equation}
\Law\{G(U,s)\}=\pi(\cdot\mid S_1=s).
\label{eq:sfm-exact-summary-posterior}
\end{equation}
Consequently,
\[
\E_XW_2^2\!\left\{\Law\{G(U,f(X))\mid X\},\pi(\cdot\mid X)\right\}
=\E_XW_2^2\!\left\{\pi(\cdot\mid f(X)),\pi(\cdot\mid X)\right\}.
\]
\end{proposition}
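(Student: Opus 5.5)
The proof has two parts. First I show that, since the joint endpoint $T_1$ is triangular and has the correct joint law, disintegrating over the summary coordinate gives the conditional law of $\theta_1$ given $S_1=s$. The consequence then follows by substituting $s=f(X)$.

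\emph{Step 1: change variables to the summary coordinate.} Under the Lipschitz and continuity assumptions the flow is unique. Because $v^S_\alpha$ does not depend on $\vartheta$, the first component of $T_t$ is $F_t(s_0)$, independent of $u$, and $F_1$ is a bi-measurable bijection (a homeomorphism, by backward uniqueness). Since $S_0\perp\theta_0$, the pair $(F_1(S_0),\theta_0)$ has product law $\Law(F_1(S_0))\otimes\Law(\theta_0)$. By \eqref{eq:sfm-exact-joint-transport}, $\Law(F_1(S_0))=\Law(S_1)$. Setting $\tilde S=F_1(S_0)$, we have $S_0=F_1^{-1}(\tilde S)$, so
\[
T_1(S_0,\theta_0)=\big(\tilde S,\;H_1\{F_1^{-1}(\tilde S),\theta_0\}\big)=\big(\tilde S,\;G(\theta_0,\tilde S)\big).
\]
Hence $\Law\{(\tilde S,G(\theta_0,\tilde S))\}=\Law(S_1,\theta_1)$, with $\tilde S\perp\theta_0$. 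Measurability of $G$ follows from continuity of the flow maps.

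\emph{Step 2: disintegrate.} Take any bounded measurable $\phi(s)$ and $g(\vartheta)$. Independence of $\tilde S$ and $\theta_0$ together with Fubini give
\[
\E\big[\phi(S_1)g(\theta_1)\big]=\int\phi(s)\,\E\,g\{G(U,s)\}\,\Law(S_1)(ds).
\]
So $s\mapsto\Law\{G(U,s)\}$ is a version of the regular conditional law of $\theta_1$ given $S_1$. Uniqueness of regular conditional distributions on Euclidean space, applied over a countable determining class of $g$, then yields \eqref{eq:sfm-exact-summary-posterior} for $\Law(S_1)$-almost every $s$. This step is the main technical point. The care lies in fixing a single null set outside which the identity holds for all $g$ simultaneously, and in noting that $\pi(\cdot\mid S_1=s)$ is itself defined only up to $\Law(S_1)$-null sets, with $f$ held fixed.

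\emph{Step 3: the consequence.} Draw $U$ independently of $X$. Then $\Law\{G(U,f(X))\mid X\}=\Law\{G(U,s)\}|_{s=f(X)}$. The exceptional set $E$ from Step 2 satisfies $\Pr(f(X)\in E)=\Law(S_1)(E)=0$, so this conditional law equals $\pi(\cdot\mid f(X))$ for almost every $X$. Substituting into the left-hand side gives the stated equality of integrated $W_2^2$ distances, which are finite by the standing second-moment assumptions. Combined with Lemma~\ref{lem:v3-risk-decomposition}, the posterior error of an exact flow is then exactly the insufficiency term.
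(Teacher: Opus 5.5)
Your proposal is correct and takes essentially the same route as the paper: write the endpoint as $(F_1(S_0),G\{U,F_1(S_0)\})$ with $F_1(S_0)\sim\Law(S_1)$ independent of the parameter source, use Fubini to show that $s\mapsto\Law\{G(U,s)\}$ is a version of the conditional law of $\theta_1$ given $S_1$, invoke uniqueness of conditional distributions, and substitute $s=f(X)$. Your extra care in fixing a single null set and noting that $f(X)$ avoids it with probability one makes explicit what the paper leaves implicit.
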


Correct joint transport leaves precisely the posterior information lost by the summary.
A posterior-sufficient summary therefore gives the full-data posterior through the backward-summary and forward-parameter sampling procedure in Algorithm~\ref{alg:sfm}.

\begin{proof}
Take $U$ independent of $S_0$ with the law of $\theta_0$.
By definition, $H_1(S_0,U)=G\{U,F_1(S_0)\}$, and $F_1(S_0)$ has the law of $S_1$ under \eqref{eq:sfm-exact-joint-transport}.
For bounded measurable functions $a$ and $b$, independence of the sources gives
\[
\begin{aligned}
\E\{a(S_1)b(\theta_1)\}
&=\E\left[a\{F_1(S_0)\}b\{G(U,F_1(S_0))\}\right]\\
&=\int a(s)\left[\int b\{G(u,s)\}\,F_U(\mathrm du)\right]
\Law(S_1)(\mathrm ds).
\end{aligned}
\]
Uniqueness of the conditional distribution proves \eqref{eq:sfm-exact-summary-posterior}, and substituting $s=f(X)$ gives the risk identity.
\end{proof}

The exact-joint-transport condition concerns the distribution generated by the flow.
The unrestricted population optimum of the summary regression is $\E[\Delta S\mid t,S_t]$, whereas the summary component of the unrestricted joint interpolation velocity is $\E[\Delta S\mid t,S_t,\theta_t]$.
Thus regression convergence alone does not establish \eqref{eq:sfm-exact-joint-transport}; posterior calibration for the fitted sampler remains a condition when applying Theorem~\ref{thm:v3-two-way}.

\subsubsection{Finite sample error}
\label{app:v3-sfm-rate}

We now control the finite-sample error of the two regression losses used in training.
The encoder affects both the summary path and the summary-displacement target, so both changes enter the argument.

Take $N\ge2$ independent and identically distributed records $(X,\theta_1,S_0,\theta_0,t)$ generated as above.
When the two losses use separate source and time draws for one simulated pair, include both sets of auxiliary draws in that pair's record.
Fix the candidate class before training and assume uniformly bounded source and target variables, summary outputs, and velocities.
Let $L_\theta$ and $L_S$ be common Lipschitz constants in the summary argument for the parameter and summary velocities.
Along with $r_\theta$ defined above, write $r_S=v^S(t,S_t)-\Delta S$, and let $M_\theta$ and $M_S$ bound their norms uniformly over records and candidates.
Choose $c_n>0$ with $c_n\ge M_\theta^2+M_S^2/q$ and define the normalized regression loss and its averages by
\[
\ell=\frac{\norm{r_\theta}^2+q^{-1}\norm{r_S}^2}{c_n},
\qquad
\overline{\mathcal L}=\E\ell,
\qquad
\overline{\mathcal L}_N=\frac1N\sum_{i=1}^N\ell_i,
\]
where $\ell_i$ is evaluated on the $i$th record using the candidate encoder and velocities.

For $\delta\in(0,1]$, let $M_n(\delta)$ be the smallest number of admissible representatives needed to approximate every normalized regression loss uniformly within $\delta$ over all records.
This count includes changes in both fitted velocities, their summary inputs, and the summary-displacement target.
Assume the class conditions of Proposition~\ref{prop:d-affine-complexity} for these complete losses.
For an affine parameter velocity, the coefficient functions depend on $(t,\theta_t)$, so $p=d+1$, $s=d+1$, and $P_n$ counts the encoder and summary-velocity parameters.
Use $A_n$ from that proposition, including the sensitivities of the moving summary and target.
For a raw joint-flow regression satisfying the corresponding complete-class conditions, use $s=nd_X+d+1$ and its own $P_n$ and $A_n$, retaining the cost of its observation-velocity network.

\begin{claim}[Statistical error of the SFM regression objective]
\label{claim:c-sfm-empirical}
Under these conditions,
\[
\E_N\sup_{f,v^S,v^\theta}
\left|\overline{\mathcal L}_N-\overline{\mathcal L}\right|
\le3\left[
\sqrt{\frac{(P_n+1)\log(2A_nN)}N}
+\left\{\frac{A_n^s\log(2A_nN)}N\right\}^{1/(s+2)}
\right].
\]
The same bound holds for the corresponding raw regression class with its own supremum and constants.
\end{claim}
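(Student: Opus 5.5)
The plan is to reduce the claim to Claim~\ref{claim:b-cover-rate} with $c=1/2$, which gives the factor $3$. Two inputs are needed. The first is the sub-Gaussian moment condition \eqref{eq:b-reference-mgf} for each fixed candidate. The second is the cover bound \eqref{eq:v3-complete-entropy} for the complete normalized regression losses.

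For the moment condition, fix a candidate consisting of an encoder $f$ and velocities $v^S,v^\theta$. The loss $\ell$ is then a deterministic measurable function of a single record $(X,\theta_1,S_0,\theta_0,t)$, together with any separate auxiliary draws that were included in that record. By the choice $c_n\ge M_\theta^2+M_S^2/q$ and the uniform bounds on $r_\theta$ and $r_S$, each $\ell_i$ lies in $[0,1]$, an interval of length one. Since the records are i.i.d., $\overline{\mathcal L}_N$ is an average of $N$ independent $[0,1]$-valued variables with mean $\overline{\mathcal L}$. Hoeffding's lemma then gives \eqref{eq:b-reference-mgf} with $c=1/2$. The same boundedness shows that both criteria lie in $[0,1]$, so the normalized discrepancy is at most one, which is the bounded-discrepancy condition of Claim~\ref{claim:b-cover-rate}.

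For the Lipschitz and cover requirement, both $\overline{\mathcal L}_N$ and $\overline{\mathcal L}$ are averages or expectations of $\ell$. They are therefore $1$-Lipschitz in the uniform distance between losses taken over all records, so a uniform $\delta$-cover of the losses transfers to both criteria. The cover count $M_n(\delta)$ is supplied directly by the assumed class conditions of Proposition~\ref{prop:d-affine-complexity}, applied to these complete losses. In the affine case we take $s=p=d+1$, since the coefficients depend on $(t,\theta_t)$, and $P_n$ counts the encoder and summary-velocity parameters. The only point to check is that the stability bound \eqref{eq:v3-native-loss-cover-transfer} covers every route by which the encoder enters the loss: through $S_t=(1-t)S_0+tf(X)$ inside $v^\theta$ and $v^S$, and through the target $\Delta S=f(X)-S_0$. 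This is exactly what the hypothesis that $A_n$ includes the sensitivities of the moving summary and the target is meant to guarantee. I expect this bookkeeping to be the main obstacle, but it is an assumption rather than something to prove.

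One point needs care. The stop-gradient in \eqref{eq:sfm-losses} affects only the optimization updates, not the value of the loss. The statistical bound is therefore for the joint loss $\ell$ evaluated with the candidate $f$ in every slot, so no special treatment is required.

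Applying Claim~\ref{claim:b-uniform-deviation} and then Claim~\ref{claim:b-cover-rate} with $c=1/2$ yields the displayed bound with constant $3$. For the raw joint-flow class the argument is verbatim, with $s=nd_X+d+1$ and that class's own $P_n$ and $A_n$, which include the observation-velocity network.
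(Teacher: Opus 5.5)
Your proposal is correct and follows essentially the same route as the paper: $\ell\in[0,1]$ gives Hoeffding's bound with $c=1/2$, the complete-loss cover comes from the assumed conditions of Proposition~\ref{prop:d-affine-complexity}, and Claim~\ref{claim:b-cover-rate} then yields the constant $3$. The paper additionally writes out the perturbation bounds $\norm{r_\theta-r'_\theta}\le\delta_\theta+L_\theta\delta_f$ and $\norm{r_S-r'_S}\le\delta_S+(L_S+1)\delta_f$, which show explicitly which moving-input and target sensitivities the stability constant must absorb; you instead treat this as part of the hypotheses, which the claim's assumptions legitimately allow.
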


This controls the statistical error of the actual regression objective, including its learned inputs and targets.
Affinity changes the functional exponent from $1/(nd_X+d+3)$ to $1/(d+3)$, while retaining the encoder and summary-velocity costs.

\begin{proof}
Consider two candidates whose uniform encoder and velocity distances are at most $\delta_f$, $\delta_\theta$, and $\delta_S$, using Euclidean output norms.
On each record, their summary paths and summary-displacement targets differ by at most $\delta_f$, while their parameter paths and parameter-displacement targets coincide.
Therefore,
\[
\norm{r_\theta-r'_\theta}\le\delta_\theta+L_\theta\delta_f,
\qquad
\norm{r_S-r'_S}\le\delta_S+(L_S+1)\delta_f,
\]
and the uniform residual bounds give
\[
\norm{\ell-\ell'}_\infty
\le\frac{2M_\theta}{c_n}(\delta_\theta+L_\theta\delta_f)
+\frac{2M_S}{qc_n}\{\delta_S+(L_S+1)\delta_f\}.
\]
These inequalities identify the moving-input and target terms included in the complete-loss stability condition of Proposition~\ref{prop:d-affine-complexity}.
Since $\ell\in[0,1]$, Hoeffding's lemma gives $\E_N\exp\{\lambda(\overline{\mathcal L}_N-\overline{\mathcal L})\}\le\exp\{\lambda^2/(8N)\}$ for each candidate and every $\lambda\in\R$.
Proposition~\ref{prop:d-affine-complexity} and Claim~\ref{claim:b-cover-rate} give the result, with the balancing resolution $\delta=\{A_n^s\log(2A_nN)/N\}^{1/(s+2)}$ when it is at most one.
The bounded loss handles the remaining case.
For the raw class, the observation path and displacement are fixed by each record, and the same argument applies to its complete velocity loss under the stated conditions.
\end{proof}

This empirical result concerns the sum of both regression losses; Proposition~\ref{prop:c-sfm-information} concerns the parameter loss and its full-data benchmark.
Applying Theorem~\ref{thm:v3-two-way} additionally requires posterior calibration and empirical optimization accuracy for the chosen complete criterion.
The detached updates must satisfy that optimization requirement separately, since the summary loss contributes no encoder gradient.
With these conditions, zero approximation gap, and the optimization and objective-approximation allowances in Corollary~\ref{cor:v3-affine-rate}, the affine-summary posterior bound is
\[
R_p\le\Gamma_n\!\left(10\left[
\sqrt{\frac{(P_n+1)\log(2A_nN)}N}
+\left\{\frac{A_n^{d+1}\log(2A_nN)}N\right\}^{1/(d+3)}
\right]\right),\qquad R_f\le2R_p.
\]
Here $\Gamma_n(t)=\Gamma_n^{\rm original}(c_nt)$ retains the regression-loss normalization in the calibration.
For the raw class, replace $d+1$ by $nd_X+d+1$ and use its own class constants and calibration.
Unbounded inputs, including Gaussian sources, require separate tail or truncation control for the empirical rate.

\subsection{Summary Bayesian GAN}
\label{app:v3-bgan-calibration}

SB-GAN learns an affine-summary generator through the Wasserstein minimax objective of \citet[Section~2.2]{wang2026generative}.
We show how the full-observation critic guides summary learning, then connect the adversarial objective to posterior accuracy and its empirical approximation.

\subsubsection{Two-way guidance}

We first show why the critic can detect posterior information discarded by the generator's summary.
The key is that the critic receives the full observation $X$, even though the generator depends on $X$ only through $S=f(X)$.

Fix $f$ and $G$, and assume $\E\norm\theta+\E\norm{G(U,S)}<\infty$.
Let $D$ range over all jointly measurable critics that are 1-Lipschitz in $\theta$ for every fixed $X$, with $D(X,0)=0$.
This normalization leaves the objective unchanged because the simulated and generated pairs have the same observation marginal.
Define the ideal objective and its full-data benchmark by
\[
\mathcal L(f,G)=\sup_D\E\big[D(X,\theta)-D\{X,G(U,S)\}\big],
\qquad \mathcal L_0=0.
\]

\begin{proposition}[Two-way guidance in SB-GAN]
\label{prop:c-bgan-information}
Under these conditions,
\begin{equation}
\mathcal L(f,G)=\E_XW_1\{\pi(\cdot\mid X),\pi_{f,G}(\cdot\mid X)\},
\label{eq:app-bgan-conditional-objective}
\end{equation}
and
\begin{equation}
\frac12\E_XW_1\{\pi(\cdot\mid X),\pi(\cdot\mid f(X))\}
\le\mathcal L(f,G).
\label{eq:app-bgan-information-penalty}
\end{equation}
\end{proposition}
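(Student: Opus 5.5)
The plan has two parts: identify the objective as a conditional Kantorovich--Rubinstein dual, then use the reverse coupling from the proof of Lemma~\ref{lem:v3-risk-decomposition}, now in $W_1$.

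\textbf{Step 1: the identity \eqref{eq:app-bgan-conditional-objective}.} Condition on $X$. Since $U\perp X$ and $S=f(X)$, the law of $G(U,S)$ given $X$ is $\pi_{f,G}(\cdot\mid X)$. Hence, for any admissible critic,
\[
\E\big[D(X,\theta)-D\{X,G(U,S)\}\big]
=\E_X\Big[\int D(X,y)\,\pi(\mathrm dy\mid X)-\int D(X,y)\,\pi_{f,G}(\mathrm dy\mid X)\Big].
\]
For each fixed $X$, the map $D(X,\cdot)$ is 1-Lipschitz. Kantorovich--Rubinstein duality therefore bounds the inner bracket by $W_1\{\pi(\cdot\mid X),\pi_{f,G}(\cdot\mid X)\}$, which gives ``$\le$''.

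For ``$\ge$'', I would choose, for each $x$, an optimal Kantorovich potential $D^*(x,\cdot)$ normalized by $D^*(x,0)=0$. The difficulty is doing this jointly measurably in $x$. I would use a measurable selection argument, for instance Corollary~5.22 of \citet{villani2009optimal} together with measurable dependence of the dual optimizers. Alternatively, I would avoid exact selection altogether: take a countable dense family of 1-Lipschitz functions with value $0$ at $0$ (e.g.\ rational piecewise-linear maxima). For each $x$, measurably pick the first function in an enumeration that comes within $\epsilon$ of the supremum, then let $\epsilon\downarrow0$. The first moments make all integrals finite. This measurable-selection step is the main technical obstacle; the rest is bookkeeping.

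\textbf{Step 2: the penalty \eqref{eq:app-bgan-information-penalty}.} I would mimic the reverse bound of Lemma~\ref{lem:v3-risk-decomposition}. For each $x$, take a measurable optimal $W_1$ coupling of $\pi(\cdot\mid x)$ and $Q_{f(x)}=\Law\{G(U,f(x))\}$, and build $(X,\theta,Y)$ with $\Law(Y\mid X)=Q_S$ and
\[
\E\norm{\theta-Y}=\E_XW_1\{\pi(\cdot\mid X),\pi_{f,G}(\cdot\mid X)\}=\mathcal L(f,G).
\]
Then $Y\perp X\mid S$. On an enlarged space, draw $\theta'\sim\Law(\theta\mid S,Y)$ conditionally independently of $(X,\theta)$ given $(S,Y)$. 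Exactly as in that proof, $\Law(\theta'\mid X)=\pi(\cdot\mid f(X))$, so $(\theta,\theta')$ given $X$ couples the full and summary posteriors. This gives
\[
\E_XW_1\{\pi(\cdot\mid X),\pi(\cdot\mid f(X))\}\le\E\norm{\theta-\theta'}\le\E\norm{\theta-Y}+\E\norm{Y-\theta'}.
\]
Since $\theta'$ and $\theta$ have the same conditional law given $(S,Y)$, we get $\E\norm{Y-\theta'}=\E\norm{Y-\theta}$. The right side is therefore $2\mathcal L(f,G)$, which proves the bound with factor $1/2$. This step is routine once the coupling construction from Lemma~\ref{lem:v3-risk-decomposition} is reused. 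The only change is replacing the conditional-variance identity with the triangle inequality plus equality in law, which holds without second moments.
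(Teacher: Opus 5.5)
Your proposal is correct and follows the paper's proof. Step~1 uses the same conditional Kantorovich--Rubinstein upper bound, and the same countable-dense-family selection of normalized 1-Lipschitz critics for the reverse inequality. Rely on that option rather than Corollary~5.22 of \citet{villani2009optimal}, which selects optimal plans, not dual potentials. In Step~2 the paper works with distances: mixing the conditional optimal couplings gives $W_1\{\pi(\cdot\mid S),\pi_{f,G}(\cdot\mid S)\}\le\E[W_1\{\pi(\cdot\mid X),\pi_{f,G}(\cdot\mid S)\}\mid S]$, and the triangle inequality finishes. Your resampling construction from Lemma~\ref{lem:v3-risk-decomposition} is the pathwise form of the same argument: the law of $(\theta',Y)$ given $S$ is exactly that mixed coupling, and your split at $Y$ is the gluing behind the triangle inequality.
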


This is the population basis of two-way guidance in SB-GAN.
Affinity specifies how the learned summary enters posterior generation, while the full-observation critic penalizes posterior differences that this summary fails to retain.
An insufficient summary incurs a strictly positive loss even after optimizing the generator.
If the summary is posterior sufficient and its posterior is representable by the generator class, zero loss is attainable.

\begin{proof}
Conditional Kantorovich--Rubinstein duality bounds every admissible critic's objective by the right side of \eqref{eq:app-bgan-conditional-objective}.
For the reverse inequality, take a countable family of normalized 1-Lipschitz functions on $\R^d$ that is dense for uniform convergence on compact sets and contains zero.
The bound $|D(\theta)|\le\norm\theta$ and finite first moments ensure that this family determines $W_1$.
At each $X$, select the smallest index maximizing the conditional integral difference among the first $m$ functions.
This defines a jointly measurable critic whose objective increases to the conditional Wasserstein expectation as $m\to\infty$.
Monotone convergence proves \eqref{eq:app-bgan-conditional-objective}.

The summary posterior is the mixture of the full posteriors conditional on $S$.
Since the generated law is fixed given $S$, mixing conditional optimal couplings gives
\[
W_1\{\pi(\cdot\mid S),\pi_{f,G}(\cdot\mid S)\}
\le\E\big[W_1\{\pi(\cdot\mid X),\pi_{f,G}(\cdot\mid S)\}\mid S\big].
\]
Such measurable couplings exist on Euclidean parameter spaces under the stated moments \citep[Corollary~5.22]{villani2009optimal}.
Averaging this inequality and applying the triangle inequality between the full posterior, generated law, and summary posterior yields \eqref{eq:app-bgan-information-penalty}.
\end{proof}

The generator update passes this critic signal to the summary network.
For the affine generator in \eqref{eq:sbgan-generator}, the chain rule at differentiability points gives
\[
\nabla_s[-D\{X,a_\beta(U)+B_\beta(U)s\}]
=-B_\beta(U)^\top\nabla_\theta D\{X,a_\beta(U)+B_\beta(U)s\}.
\]
Backpropagating this derivative through $s=f_\eta(X)$ updates the encoder according to discrepancies detected by the critic.

\subsubsection{Posterior accuracy}

The adversarial objective controls conditional $W_1$ error.
We now give conditions that convert this control into the $W_2$ posterior guarantee used in Theorem~\ref{thm:v3-two-way}.

First suppose that all true and generated parameter laws are supported on a common set of diameter $D_\Theta<\infty$.

\begin{claim}[Posterior error on bounded parameter spaces]
\label{claim:c-bgan-bounded}
Under this condition,
\[
\E_XW_2^2\{\pi_{f,G}(\cdot\mid X),\pi(\cdot\mid X)\}
\le D_\Theta\mathcal L(f,G).
\]
\end{claim}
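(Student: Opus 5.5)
The plan is to compare $W_2^2$ with $W_1$ pointwise in $X$ on the bounded support, and then average using the identity \eqref{eq:app-bgan-conditional-objective} from Proposition~\ref{prop:c-bgan-information}.

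Fix $X$ and write $\mu=\pi_{f,G}(\cdot\mid X)$ and $\nu=\pi(\cdot\mid X)$. Both are supported on the common set $\Theta$ of diameter $D_\Theta$. Take any coupling $\gamma$ of $\mu$ and $\nu$, for instance an optimal coupling for the $W_1$ cost, which exists by \citet[Corollary~5.22]{villani2009optimal}. Then $\gamma$ is concentrated on $\Theta\times\Theta$, so $\norm{\theta-\theta'}\le D_\Theta$ holds $\gamma$-almost surely. This gives $\norm{\theta-\theta'}^2\le D_\Theta\norm{\theta-\theta'}$, and therefore
\[
W_2^2(\mu,\nu)\le\int\norm{\theta-\theta'}^2\,\mathrm d\gamma\le D_\Theta\int\norm{\theta-\theta'}\,\mathrm d\gamma=D_\Theta W_1(\mu,\nu).
\]

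Next, average over $X$ from the prior predictive distribution. Proposition~\ref{prop:c-bgan-information} gives $\E_XW_1\{\pi(\cdot\mid X),\pi_{f,G}(\cdot\mid X)\}=\mathcal L(f,G)$, since $W_1$ is symmetric. Multiplying by the constant $D_\Theta$ then yields the claim. The hypotheses of that proposition are met because bounded support implies finite first moments.

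The only delicate point is measurability of $X\mapsto W_p\{\pi_{f,G}(\cdot\mid X),\pi(\cdot\mid X)\}$, which is needed for the expectation to make sense. The standing measurability assumptions cover this, and the same measurable-coupling selection from \citet{villani2009optimal} used in the proof of Lemma~\ref{lem:v3-risk-decomposition} can be invoked if needed. The resulting calibration function $\Gamma(z)=D_\Theta z$ is linear, hence concave, nondecreasing and zero at zero, so it fits Theorem~\ref{thm:v3-two-way} directly.
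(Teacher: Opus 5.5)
Your proposal is correct and follows the same route as the paper's proof. Both arguments take a conditional $W_1$-optimal coupling, use $\norm{y-y'}^2\le D_\Theta\norm{y-y'}$ on the common support to get $W_2^2\le D_\Theta W_1$ pointwise in $X$, and then average using the identity \eqref{eq:app-bgan-conditional-objective}.
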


Thus the ideal adversarial loss gives a linear bound on posterior error, providing the calibration required by Theorem~\ref{thm:v3-two-way}.

\begin{proof}
For each $X$, take a measurable conditional $W_1$-optimal coupling.
The inequality $\norm{y-y'}^2\le D_\Theta\norm{y-y'}$ on the common support bounds its squared transport cost by $D_\Theta$ times its $W_1$ cost.
The optimal $W_2^2$ cost is no larger; averaging and using \eqref{eq:app-bgan-conditional-objective} proves the claim.
\end{proof}

For unbounded parameter laws, suppose instead that some $r>2$ and $M_r<\infty$ satisfy
\[
\E\norm\theta^r+\sup_{f,G}\E\norm{G(U,f(X))}^r\le M_r.
\]

\begin{claim}[Posterior error under a moment bound]
\label{claim:c-bgan-moments}
Under this condition,
\[
\E_XW_2^2\{\pi_{f,G}(\cdot\mid X),\pi(\cdot\mid X)\}
\le(2^{r-1}M_r)^{1/(r-1)}\{\mathcal L(f,G)\}^{(r-2)/(r-1)}.
\]
\end{claim}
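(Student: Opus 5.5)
We bound the squared $W_2$ cost of a $W_1$-optimal coupling by truncation at a level $R$ and then optimize $R$. Fix $X$ and take a measurable conditional $W_1$-optimal coupling $(Y,Y')$ of $\pi(\cdot\mid X)$ and $\pi_{f,G}(\cdot\mid X)$; such couplings exist by \citet[Corollary~5.22]{villani2009optimal}. Integrating over $X$ gives a joint coupling with $\E\norm{Y-Y'}=\mathcal L(f,G)$ by \eqref{eq:app-bgan-conditional-objective}. Its marginals are $\theta$ and $G(U,f(X))$, and $\E_XW_2^2\le\E\norm{Y-Y'}^2$, so it suffices to bound $\E\norm{Y-Y'}^2$.

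Set $Z=\norm{Y-Y'}$. For any $R>0$, split according to whether $Z\le R$:
\[
\E Z^2\le R\,\E Z+\E\big[Z^2\mathbf 1\{Z>R\}\big]\le R\,\mathcal L+R^{2-r}\E Z^r .
\]
The second term uses $Z^2\le Z^rR^{2-r}$ on $\{Z>R\}$, which holds because $r>2$. By convexity, $Z^r\le2^{r-1}(\norm Y^r+\norm{Y'}^r)$, and the moment hypothesis gives $\E Z^r\le2^{r-1}M_r=:K$. Therefore
\[
\E Z^2\le R\mathcal L+KR^{2-r}\qquad\text{for every }R>0 .
\]

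Next choose $R=(K/\mathcal L)^{1/(r-1)}$ when $\mathcal L>0$. This choice balances the two terms, since then $R\mathcal L=K^{1/(r-1)}\mathcal L^{(r-2)/(r-1)}$ and $KR^{2-r}=K^{1/(r-1)}\mathcal L^{(r-2)/(r-1)}$, giving
\[
\E Z^2\le 2K^{1/(r-1)}\mathcal L^{(r-2)/(r-1)} .
\]
When $\mathcal L=0$ we have $Z=0$ almost surely and the bound holds trivially.

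This argument is off by a factor of $2$ from the claimed constant $(2^{r-1}M_r)^{1/(r-1)}$. I have two candidate fixes, and I expect recovering the exact constant to be the main obstacle:

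\begin{enumerate}
\item Use the sharper single split $Z^2\le R^{2-r}Z^r$ together with $Z^2\le RZ$, i.e. $Z^2\le\min(RZ,R^{2-r}Z^r)$, and optimize more carefully.
\item Apply Hölder interpolation directly: since $2$ lies between $1$ and $r$,
\[
\E Z^2\le(\E Z)^{(r-2)/(r-1)}(\E Z^r)^{1/(r-1)} .
\]
Substituting $\E Z=\mathcal L$ and $\E Z^r\le K$ gives exactly $K^{1/(r-1)}\mathcal L^{(r-2)/(r-1)}$, which is the stated constant with no factor of $2$.
\end{enumerate}

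I would use the Hölder (log-convexity of $L^p$ norms) route as the final argument, checking that the exponents satisfy $\tfrac{r-2}{r-1}\cdot1+\tfrac1{r-1}\cdot r=2$. The only remaining detail is measurability of the coupling selection, which is handled as in Claim~\ref{claim:c-bgan-bounded}.
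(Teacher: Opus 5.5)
Your adopted argument is correct and is essentially the paper's proof: take a measurable conditional $W_1$-optimal coupling, note $\E Z=\mathcal L(f,G)$ and $\E Z^r\le 2^{r-1}M_r$, apply the Hölder interpolation $\E Z^2\le(\E Z)^{(r-2)/(r-1)}(\E Z^r)^{1/(r-1)}$, and bound the averaged conditional $W_2^2$ cost by $\E Z^2$. The truncation detour is unnecessary, and the inequality in your fix 1 points the wrong way: pointwise $Z^2\le\max(RZ,R^{2-r}Z^r)$ holds but $Z^2\le\min(RZ,R^{2-r}Z^r)$ does not. This does not affect the Hölder route you actually use.
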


A uniform moment bound of order greater than two therefore gives posterior calibration on an unbounded parameter space.
Both displayed bounds are continuous, nondecreasing, and concave functions of the loss, and vanish at zero.

\begin{proof}
Average measurable conditional $W_1$-optimal couplings over $X$, and let $Y$ be the distance between their paired parameter draws.
Under this coupling, $\E Y=\mathcal L(f,G)$ and $\E Y^r\le2^{r-1}M_r$.
H\"older's inequality gives
\[
\E Y^2\le(\E Y)^{(r-2)/(r-1)}(\E Y^r)^{1/(r-1)}
\le(2^{r-1}M_r)^{1/(r-1)}\{\mathcal L(f,G)\}^{(r-2)/(r-1)}.
\]
The same coupling bounds the average conditional $W_2^2$ error by $\E Y^2$.
\end{proof}

\subsubsection{Finite sample error}
\label{app:v3-bgan-rate}

We next control the finite-sample error when the critic is a neural network.
This requires a cover of the complete adversarial loss, including the encoder, generator, and critic.

Take $N\ge2$ independent and identically distributed records $(X_i,\theta_i,U_i)$ formed from prior--simulator pairs and independent source draws on a bounded source domain.
Assume $\norm\theta\le T$ and $\norm{G(U,f(X))}\le T$ uniformly over the generator class, for $T>0$.
In this subsection, $D$ ranges over a fixed neural critic class that is exactly 1-Lipschitz in $\theta$ and satisfies $D(X,0)=0$.
Fix all candidate classes before drawing the training records and define
\[
\begin{aligned}
\ell_{f,G,D}(X,\theta,U)
&=\frac{D(X,\theta)-D\{X,G(U,f(X))\}}{4T},\\
\overline{\mathcal L}(f,G)&=\sup_D\E\ell_{f,G,D},
\qquad
\overline{\mathcal L}_N(f,G)=\sup_D\frac1N\sum_{i=1}^N\ell_{f,G,D}(X_i,\theta_i,U_i).
\end{aligned}
\]
For $\delta\in(0,1]$, let $M_n(\delta)$ be the smallest number of admissible representatives needed to approximate every normalized loss uniformly within $\delta$ over all records.
This count includes the encoder, generator, and critic.
Assume the class conditions of Proposition~\ref{prop:d-affine-complexity} for these losses.
For an affine generator, $p$ is the source dimension, $s=p$, and $P_n$ counts the encoder and critic parameters.
For directly learned raw generators $G(U,X)$, use $s=nd_X+p$ and count the critic parameters in $P_n$.
Use the respective $A_n$ from that proposition, including critic sensitivity and, for the affine class, the summary dimension and encoder sensitivity.

\begin{claim}[Statistical error of the adversarial objective]
\label{claim:c-bgan-empirical}
Under these conditions,
\[
\E_N\sup_{f,G}\left|\overline{\mathcal L}_N(f,G)-\overline{\mathcal L}(f,G)\right|
\le3\left[
\sqrt{\frac{(P_n+1)\log(2A_nN)}N}
+\left\{\frac{A_n^s\log(2A_nN)}N\right\}^{1/(s+2)}
\right].
\]
For the raw class, the supremum ranges over its admissible directly learned generators.
\end{claim}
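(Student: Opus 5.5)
The plan is to run the same three-step template used for the DGQB and SFM finite-sample claims. First, reduce the adversarial supremum to a supremum over the complete loss class. Second, verify the sub-Gaussian moment condition \eqref{eq:b-reference-mgf} for each fixed triple $(f,G,D)$. Third, invoke Proposition~\ref{prop:d-affine-complexity} together with Claim~\ref{claim:b-cover-rate}.

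For the first step, use the supremum inequality displayed after Claim~\ref{claim:b-uniform-deviation}:
\[
\sup_{f,G}\bigl|\overline{\mathcal L}_N(f,G)-\overline{\mathcal L}(f,G)\bigr|
\le\sup_{f,G,D}\Bigl|\frac1N\sum_{i=1}^N\ell_{f,G,D}(X_i,\theta_i,U_i)-\E\ell_{f,G,D}\Bigr|.
\]
This inequality needs no unbiasedness of the maximized empirical criterion. It holds because both sides take suprema over the same fixed critic class, and $|\sup a-\sup b|\le\sup|a-b|$. The problem therefore becomes a uniform law for ordinary i.i.d. averages indexed by the full triple $(f,G,D)$.

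For the second step, check that each normalized loss lies in an interval of length one. Since $D$ is 1-Lipschitz in $\theta$ with $D(X,0)=0$, we have $|D(X,\theta)|\le T$ and $|D\{X,G(U,f(X))\}|\le T$. The difference therefore lies in $[-2T,2T]$, and after dividing by $4T$ the loss lies in $[-1/2,1/2]$. The records are i.i.d., so Hoeffding's lemma gives \eqref{eq:b-reference-mgf} with $c=1/2$. The same boundedness shows that the normalized reference discrepancy is at most one, even after taking the critic supremum. This is the bounded-discrepancy condition required by Claim~\ref{claim:b-cover-rate}.

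For the third step, the losses are indexed by the encoder parameters and the critic parameters (collected in $\zeta$, counted by $P_n$) and by the affine coefficient functions $A(w)=[a_\beta(w),B_\beta(w)]$ of the source argument. For the raw class, they are indexed instead by the full generator $h(w,X)$. The assumed stability bound \eqref{eq:v3-native-loss-cover-transfer} covers this complete loss, so Proposition~\ref{prop:d-affine-complexity} bounds $\log M_n(\delta)$ with $s=p$, or with $s=nd_X+p$ for the raw class. The cover representatives are admissible complete triples, so the 1-Lipschitz property and the normalization of the critic are preserved, and the empirical and population averages are both 1-Lipschitz in the uniform loss distance. Claim~\ref{claim:b-cover-rate} with $c\le1/2$ then yields the displayed bound with constant $3$. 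In the case where the balancing resolution $\delta=\{A_n^s\log(2A_nN)/N\}^{1/(s+2)}$ exceeds one, the bound of one on the discrepancy gives the result directly.

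The main obstacle is confirming that the complete-loss stability hypothesis legitimately absorbs the critic. A perturbation of the generator output moves the critic's argument, and the critic is 1-Lipschitz in $\theta$. So a perturbation of $A$ or of the encoder changes the generated term by at most $\|A-A'\|_{\infty,1}+\|A\|_{\infty,1}\|f-f'\|_\infty$, divided by $4T$. Critic-parameter perturbations need a uniform sensitivity in $(X,\theta)$. I would verify these two contributions explicitly and note that they are covered by $L_n$. Everything else is a direct citation of the earlier claims.
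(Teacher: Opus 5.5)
Your proposal is correct and follows essentially the same route as the paper. You reduce the critic supremum via $|\sup a-\sup b|\le\sup|a-b|$, and you use $\ell\in[-1/2,1/2]$ to get Hoeffding with $c=1/2$ and a discrepancy bounded by one. You then combine Proposition~\ref{prop:d-affine-complexity} with Claim~\ref{claim:b-cover-rate} to obtain the constant $3$.

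Your stated ``obstacle'' needs no separate verification. The paper handles it with the Lipschitz bound $\|\ell_{f,G,D}-\ell_{f',G',D'}\|_\infty\le\{2\|D-D'\|_\infty+\|G(\cdot,f(\cdot))-G'(\cdot,f'(\cdot))\|_\infty\}/(4T)$ together with the assumed complete-loss stability condition. If you do verify it with the row-sum norm, a critic that is Euclidean-Lipschitz in $\theta$ costs an extra $\sqrt{r}$ factor, which is absorbed into $L_n$.
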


Maximizing over the critic preserves the statistical bound when the critic is included in the loss cover.
Affinity reduces the generator's functional dimension from $nd_X+p$ to $p$, while the critic's learning cost remains because it receives the full observation needed to detect posterior information loss.

\begin{proof}
For two candidates, the Lipschitz property gives
\[
\norm{\ell_{f,G,D}-\ell_{f',G',D'}}_\infty
\le\frac{2\norm{D-D'}_\infty
+\norm{G(\cdot,f(\cdot))-G'(\cdot,f'(\cdot))}_\infty}{4T}.
\]
Thus the class conditions in Proposition~\ref{prop:d-affine-complexity} transfer the generator and critic covers to the complete loss.
Write $P_N$ and $P$ for empirical and population averages over the independent records.
Taking the critic supremum satisfies
\[
\sup_{f,G}\left|\sup_D P_N\ell_{f,G,D}-\sup_D P\ell_{f,G,D}\right|
\le\sup_{f,G,D}|(P_N-P)\ell_{f,G,D}|.
\]
Since $\ell_{f,G,D}\in[-1/2,1/2]$, Hoeffding's lemma gives the centered exponential bound $\exp\{\lambda^2/(8N)\}$ for each empirical loss average.
Proposition~\ref{prop:d-affine-complexity} and Claim~\ref{claim:b-cover-rate} then prove the claim.
The balancing resolution is $\delta=\{A_n^s\log(2A_nN)/N\}^{1/(s+2)}$ when this is at most one; otherwise the bounded objective gives the result directly.
\end{proof}

To obtain posterior guarantees, the neural-critic approximation error and errors from the sampled gradient penalty, inner optimization, and numerical evaluation must satisfy the aggregate allowance \eqref{eq:v3-auxiliary-criterion-error}, using the same normalization $4T$.
The optimization gap $\xi_N$ in Theorem~\ref{thm:v3-two-way} is measured against the empirical critic supremum; the loss against the last trained critic alone does not establish that gap.
Suppose $\Delta_{\mathcal F}=0$ for the normalized ideal objective, and each of that aggregate allowance and $\E_N\xi_N$ is no larger than the sum in square brackets in the preceding claim.
If $D_\Theta$ is the diameter of the common parameter support, which is at most $2T$, Claim~\ref{claim:c-bgan-bounded} and Corollary~\ref{cor:v3-affine-rate} give
\[
R_p\le40TD_\Theta\left[
\sqrt{\frac{(P_n+1)\log(2A_nN)}N}
+\left\{\frac{A_n^s\log(2A_nN)}N\right\}^{1/(s+2)}
\right],\qquad R_f\le2R_p.
\]
Thus the complete learning conditions give an explicit bound on both posterior error and summary insufficiency.
The moment-based calibration in Claim~\ref{claim:c-bgan-moments} applies to unbounded parameter laws; unbounded source, observation, or parameter domains require additional tail control for the empirical rate.

\subsection{Neural posterior estimation}
\label{app:npe-population}

\subsubsection{Two-way guidance}

We show how the conditional-density objective of NPE penalizes posterior information lost by the summary.
We use the prior-simulation objective in Section~2.2 of \citet{greenberg2019automatic}, with both the summary $S=f_\eta(X)$ and the conditional density $\pi_\beta(\theta\mid S)$ learned jointly.
Assume that the full-data posterior, summary posterior, and fitted conditional law have measurable densities with respect to a common $\sigma$-finite measure on the parameter space.
Suppose their log densities are absolutely integrable under the simulated joint law of $(\theta,X)$, for every candidate under consideration.
These conditions ensure that the loss differences below are finite.
With natural logarithms, the population objective and its full-data optimum are
\[
\mathcal L_{\mathrm{NPE}}(\eta,\beta)
=-\E_{\theta,X}\log\pi_\beta(\theta\mid f_\eta(X)),
\qquad
\mathcal L_{\mathrm{NPE}}^*=-\E_{\theta,X}\log\pi(\theta\mid X).
\]
Write $\operatorname{KL}(P\|Q)$ for Kullback--Leibler divergence.

\begin{proposition}[Information loss and posterior fitting]
\label{prop:c-npe-information}
Under these conditions,
\begin{equation}
\begin{aligned}
\mathcal L_{\mathrm{NPE}}(\eta,\beta)-\mathcal L_{\mathrm{NPE}}^*
={}&\E_X\operatorname{KL}\{\pi(\cdot\mid X)\,\|\,\pi_\beta(\cdot\mid S)\}\\
={}&\E_X\operatorname{KL}\{\pi(\cdot\mid X)\,\|\,\pi(\cdot\mid S)\}
+\E_X\operatorname{KL}\{\pi(\cdot\mid S)\,\|\,\pi_\beta(\cdot\mid S)\}.
\end{aligned}
\label{eq:app-npe-decomposition}
\end{equation}
Both terms in the decomposition are nonnegative.
The first is zero if and only if $\pi(\cdot\mid X)=\pi(\cdot\mid S)$ for prior-predictive almost every $X$.
The second is zero if and only if $\pi_\beta(\cdot\mid S)=\pi(\cdot\mid S)$ almost surely.
\end{proposition}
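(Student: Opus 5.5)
The plan is to write the excess loss as an expected log-ratio and then split it with the chain rule for densities. Throughout, write $S=f_\eta(X)$.

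\textbf{First equality.} Using the assumed common dominating measure and the absolute integrability of the log densities, the difference of the two finite expectations is
\[
\mathcal L_{\mathrm{NPE}}(\eta,\beta)-\mathcal L_{\mathrm{NPE}}^*
=\E_{\theta,X}\log\frac{\pi(\theta\mid X)}{\pi_\beta(\theta\mid S)}.
\]
Conditioning on $X$ and using the tower property, the inner integral is $\operatorname{KL}\{\pi(\cdot\mid X)\,\|\,\pi_\beta(\cdot\mid S)\}$, since $S$ is a function of $X$. This gives the first line of \eqref{eq:app-npe-decomposition}.

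\textbf{Splitting the log-ratio.} Insert the summary posterior density $\pi(\theta\mid S)$ and write
\[
\log\frac{\pi(\theta\mid X)}{\pi_\beta(\theta\mid S)}
=\log\frac{\pi(\theta\mid X)}{\pi(\theta\mid S)}
+\log\frac{\pi(\theta\mid S)}{\pi_\beta(\theta\mid S)}.
\]
Each term is integrable by assumption, so the expectations separate.

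The first expectation, after conditioning on $X$, is $\E_X\operatorname{KL}\{\pi(\cdot\mid X)\,\|\,\pi(\cdot\mid S)\}$.

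The second term depends on $(\theta,X)$ only through $(\theta,S)$. The conditional law of $\theta$ given $S$ is $\pi(\cdot\mid S)$, by the tower property: $\pi(\cdot\mid S)$ is the mixture of $\pi(\cdot\mid X)$ over $X$ given $S$. Hence its expectation equals $\E_S\operatorname{KL}\{\pi(\cdot\mid S)\,\|\,\pi_\beta(\cdot\mid S)\}$, which is the stated $\E_X$ expression since it is a function of $S=f_\eta(X)$.

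\textbf{Nonnegativity and zero cases.} Each term is nonnegative by Gibbs' inequality. Because each is an expectation of a nonnegative quantity, it vanishes exactly when the integrand KL is zero almost surely. A KL divergence is zero exactly when its two measures coincide, so the first term vanishes if and only if $\pi(\cdot\mid X)=\pi(\cdot\mid S)$ for prior-predictive almost every $X$. The second vanishes if and only if $\pi_\beta(\cdot\mid S)=\pi(\cdot\mid S)$ for $\Law(S)$-almost every $S$.

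\textbf{Main obstacle.} The only delicate step is measure-theoretic: the separation into two expectations requires each log-ratio to be integrable, not merely their sum. I will handle this directly from the assumed absolute integrability of all three log densities under the joint law, which makes the linear separation legitimate. I will also choose jointly measurable density versions so that the conditional expectations are well defined.
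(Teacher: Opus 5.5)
Your proposal is correct and follows essentially the same route as the paper. Like the paper, you write the excess as the expected log-ratio, condition on $X$, insert $\pi(\theta\mid S)$ and condition the two pieces on $X$ and $S$ respectively, and conclude with nonnegativity and the equality case of KL divergence. You also make explicit a point the paper leaves implicit: each log-ratio, not only their sum, must be integrable, and the assumed absolute integrability of all three log densities supplies this.
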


This decomposition explains two-way guidance in NPE.
The summary supplies the conditioning information for posterior density estimation, while the negative log-likelihood penalizes posterior information that the summary discards.
Note that improving the density estimator with the summary fixed cannot remove the first gap.
Thus, approaching the full-data optimum requires both a summary with little information loss and an accurate conditional density given that summary.

\begin{proof}
Conditioning the log-density ratio on $X$ gives the first equality in \eqref{eq:app-npe-decomposition}.
Insert the summary posterior into that ratio to obtain
\[
\E_{\theta,X}\log\frac{\pi(\theta\mid X)}{\pi_\beta(\theta\mid S)}
=\E_{\theta,X}\log\frac{\pi(\theta\mid X)}{\pi(\theta\mid S)}
+\E_{\theta,X}\log\frac{\pi(\theta\mid S)}{\pi_\beta(\theta\mid S)}.
\]
Condition the first term on $X$ and the second on $S$.
This yields the two divergences, since the conditional parameter law given $S$ is $\pi(\cdot\mid S)$.
Both outer expectations are written as $\E_X$ because $S=f_\eta(X)$.
Nonnegativity and the equality conditions for KL divergence prove the remaining statements.
\end{proof}

The same objective supplies the encoder's training signal.
At differentiability points, each simulated pair contributes
\[
\nabla_\eta[-\log\pi_\beta(\theta\mid f_\eta(X))]
=-\{D_\eta f_\eta(X)\}^{\top}
\left.\nabla_s\log\pi_\beta(\theta\mid s)\right|_{s=f_\eta(X)},
\]
where $D_\eta f_\eta(X)$ is the encoder Jacobian.
This derivative updates the summary according to how well the conditional density explains the simulated parameter.
The population decomposition gives this training signal its information-preserving justification; no affine dependence on $S$ is required.

\subsubsection{Posterior accuracy}

We next show when a small negative log-likelihood excess implies a small $W_2$ posterior error.
Assume that the posterior sampler generates the fitted conditional density exactly, as a conditional normalizing flow does by transforming its base draws.
First suppose the true and fitted conditional laws are supported on a common set of diameter $D_\Theta<\infty$, uniformly over the candidate class.

\begin{claim}[Posterior error on bounded parameter spaces]
\label{claim:c-npe-bounded}
Under this condition,
\[
\E_XW_2^2\{\pi_\beta(\cdot\mid S),\pi(\cdot\mid X)\}
\le D_\Theta^2
\left\{\frac{\mathcal L_{\mathrm{NPE}}(\eta,\beta)-\mathcal L_{\mathrm{NPE}}^*}{2}\right\}^{1/2}.
\]
\end{claim}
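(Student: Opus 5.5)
Our route is to pass from KL to total variation and then from total variation to $W_2$ on the bounded support. By the first equality in Proposition~\ref{prop:c-npe-information}, the excess loss equals $\E_X\operatorname{KL}\{\pi(\cdot\mid X)\,\|\,\pi_\beta(\cdot\mid S)\}$. This compares the fitted conditional law directly with the full-data posterior, so the summary posterior never needs to appear separately.

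First, fix $X$. On a set of diameter $D_\Theta$, the squared transport cost $\norm{y-y'}^2$ is at most $D_\Theta^2\,\mathbf 1\{y\neq y'\}$. Take a maximal coupling of $\pi(\cdot\mid X)$ and $\pi_\beta(\cdot\mid S)$, under which $\Pr(Y\ne Y')$ equals the total variation distance. This gives
\[
W_2^2\{\pi_\beta(\cdot\mid S),\pi(\cdot\mid X)\}\le D_\Theta^2\,\operatorname{TV}\{\pi(\cdot\mid X),\pi_\beta(\cdot\mid S)\}.
\]
Pinsker's inequality, with natural logarithms, then bounds the total variation by $\{\operatorname{KL}/2\}^{1/2}$.

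Second, average over $X$. The map $t\mapsto\sqrt{t/2}$ is concave, so Jensen's inequality moves the expectation inside the square root. Combined with the KL identity above, this yields the displayed bound. Because this calibration function is concave, nondecreasing and vanishes at zero, it is admissible in Theorem~\ref{thm:v3-two-way}.

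The only delicate point is measurability. The pointwise bound is integrated over $X$, so we need the total variation distance, or equivalently the maximal-coupling cost, to be a measurable function of $X$. We handle this by writing the total variation distance as $\tfrac12\int|p-q|\,d\mu$ using the jointly measurable densities assumed for Proposition~\ref{prop:c-npe-information}; this is measurable by Fubini. With that, no measurable selection of couplings is required, since the $W_2^2$ bound holds pointwise and $X\mapsto W_2^2$ is measurable by the standing assumptions.
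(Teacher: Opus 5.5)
Your proposal is correct and matches the paper's proof essentially step for step: you use a maximal coupling to get $W_2^2\le D_\Theta^2\operatorname{TV}$, then Pinsker's inequality, then Jensen's inequality together with the first equality in Proposition~\ref{prop:c-npe-information}. Your remark on measurability, computing the total variation from the jointly measurable densities, makes explicit a point the paper leaves implicit.
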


Thus, the density-estimation objective controls the full posterior error, including the error caused by an insufficient summary.

\begin{proof}
Write $\operatorname{TV}(P,Q)=\sup_A|P(A)-Q(A)|$ for total variation distance.
A maximal coupling and Pinsker's inequality give
\[
W_2^2(P,Q)\le D_\Theta^2\operatorname{TV}(P,Q)
\le D_\Theta^2\sqrt{\operatorname{KL}(P\|Q)/2}.
\]
Apply this inequality to $P=\pi(\cdot\mid X)$ and $Q=\pi_\beta(\cdot\mid S)$, then average over $X$.
Jensen's inequality and \eqref{eq:app-npe-decomposition} give the claim.
\end{proof}

For unbounded parameter laws, suppose instead that some $r>2$ and $M_r<\infty$ satisfy
\[
\E\norm\theta^r
+\sup_{\eta,\beta}\E_X\int\norm\vartheta^r\pi_\beta(d\vartheta\mid f_\eta(X))
\le M_r.
\]
This bounds the prior moment and the average fitted-posterior moment uniformly over the candidate class.

\begin{claim}[Posterior error under a moment bound]
\label{claim:c-npe-moments}
Under this condition,
\[
\E_XW_2^2\{\pi_\beta(\cdot\mid S),\pi(\cdot\mid X)\}
\le(2^{r-1}M_r)^{2/r}
\left\{\frac{\mathcal L_{\mathrm{NPE}}(\eta,\beta)-\mathcal L_{\mathrm{NPE}}^*}{2}\right\}^{(r-2)/(2r)}.
\]
\end{claim}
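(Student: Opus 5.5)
Our plan is to follow the template of Claim~\ref{claim:c-bgan-moments}, replacing the $W_1$ control there by total-variation control coming from Pinsker's inequality. Fix $X$, write $P=\pi(\cdot\mid X)$ and $Q=\pi_\beta(\cdot\mid S)$, and take a maximal coupling $(Y,Y')$ of $P$ and $Q$, chosen measurably in $X$. This coupling satisfies $\Pr(Y\ne Y'\mid X)=\operatorname{TV}(P,Q)$, and it bounds $W_2^2(P,Q)$ by $\E[\norm{Y-Y'}^2\mid X]$. Averaging over $X$ gives a joint coupling in which $Y$ has the law of $\theta$ and $Y'$ has the averaged fitted law. Set $Z=\norm{Y-Y'}$. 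Then $Z$ vanishes off the event $E=\{Y\ne Y'\}$, and
\[
\E Z^r\le2^{r-1}\big(\E\norm Y^r+\E\norm{Y'}^r\big)\le2^{r-1}M_r,
\]
using the uniform moment assumption.

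Next apply H\"older's inequality with exponents $r/2$ and $r/(r-2)$ to $Z^2\mathbf 1_E$:
\[
\E_XW_2^2(P,Q)\le\E Z^2\mathbf 1_E\le(\E Z^r)^{2/r}\Pr(E)^{(r-2)/r}\le(2^{r-1}M_r)^{2/r}\{\E_X\operatorname{TV}(P,Q)\}^{(r-2)/r}.
\]
The step that yields $\Pr(E)=\E_X\operatorname{TV}$ is the main point to check: we need measurable selection of maximal couplings. This can be done explicitly through the common density $\min(p,q)$ plus independent residual parts, using the assumed measurable densities with respect to the common $\sigma$-finite measure.

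Finally, Pinsker's inequality gives $\operatorname{TV}\le\sqrt{\operatorname{KL}/2}$. By Jensen's inequality for the concave square root, $\E_X\operatorname{TV}\le\{\E_X\operatorname{KL}(P\|Q)/2\}^{1/2}$, and by \eqref{eq:app-npe-decomposition} the expected KL equals the NPE excess loss. Raising to the power $(r-2)/r$ turns the exponent $1/2$ into $(r-2)/(2r)$, which gives the stated bound.
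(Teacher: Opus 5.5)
Your proposal is correct and matches the paper's proof essentially step for step. Both arguments use a measurable maximal coupling built from the common density minimum, the moment bound $\E Z^r\le 2^{r-1}M_r$, H\"older's inequality on the mismatch event with exponents $r/2$ and $r/(r-2)$, and Pinsker plus Jensen plus the KL decomposition \eqref{eq:app-npe-decomposition} to control $\Pr(E)=\E_X\operatorname{TV}$.
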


This result allows unbounded posteriors, including Gaussian laws with the stated moment control.
Both posterior bounds are continuous, nondecreasing, and concave functions of the loss excess, and vanish at zero.
They therefore provide the calibration required by Theorem~\ref{thm:v3-two-way}.

\begin{proof}
For each $X$, couple the true and fitted posterior maximally, using their common density minimum for the shared part.
The measurable conditional densities make this a measurable coupling, which we average over $X$.
Let $Y$ be the distance between its paired parameter draws.
Pinsker's and Jensen's inequalities, followed by \eqref{eq:app-npe-decomposition}, yield
\[
\Pr(Y>0)=\E_X\operatorname{TV}\{\pi(\cdot\mid X),\pi_\beta(\cdot\mid S)\}
\le\left\{\frac{\mathcal L_{\mathrm{NPE}}(\eta,\beta)-\mathcal L_{\mathrm{NPE}}^*}{2}\right\}^{1/2}.
\]
The moment condition gives $\E Y^r\le2^{r-1}M_r$.
H\"older's inequality on the event $\{Y>0\}$ gives
\[
\E Y^2\le(\E Y^r)^{2/r}\Pr(Y>0)^{1-2/r}.
\]
The average conditional $W_2^2$ error is no larger than this coupling's squared cost $\E Y^2$, proving the claim.
\end{proof}

\subsubsection{Finite sample error}

We now bound estimation of the NPE objective in terms of the total number of fitted parameters.
Fix the candidate class before drawing $N\ge2$ independent prior--simulator pairs $(\theta_i,X_i)$, and define
\[
\mathcal L_{\mathrm{NPE},N}(\eta,\beta)
=-\frac1N\sum_{i=1}^N\log\pi_\beta(\theta_i\mid f_\eta(X_i)).
\]
Assume that the negative log densities of all candidates lie in a common interval of length $V_n>0$ over all possible training pairs.
Write $\zeta$ for the complete parameter vector of the encoder and conditional density network, including any fitted normalization parameters, and let $W_n$ be its dimension.
Thus $W_n$ counts all fitted parameters, whereas $P_n$ in Corollary~\ref{cor:v3-affine-rate} counts only the encoder and auxiliary-network parameters.
Assume that the candidate parameter set is contained in $[-L_n,L_n]^{W_n}$ for some $L_n\ge1$ and that the complete normalized losses
\[
\ell_\zeta(\theta,X)
=-V_n^{-1}\log\pi_\beta(\theta\mid f_\eta(X))
\]
satisfy
\[
\sup_{\theta,X}|\ell_\zeta(\theta,X)-\ell_{\zeta'}(\theta,X)|
\le L_n\|\zeta-\zeta'\|_\infty
\]
for all candidates $\zeta,\zeta'$.
For this class, set $A_n=1+2L_n^2$.
These conditions bound the sensitivity of the entire loss to every fitted parameter and impose no affine dependence on the summary.

\begin{claim}[Statistical error of the density-estimation objective]
\label{claim:c-npe-empirical}
Under these conditions,
\[
\E_N\sup_{\eta,\beta}
\left|\mathcal L_{\mathrm{NPE},N}(\eta,\beta)-\mathcal L_{\mathrm{NPE}}(\eta,\beta)\right|
\le V_n\left[\frac2N+\sqrt{\frac{\log2+W_n\log(A_nN)}{2N}}\right].
\]
\end{claim}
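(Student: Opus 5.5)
Work on the normalized scale first. Divide both criteria by $V_n$ and subtract the common lower endpoint of the interval containing the negative log densities; additive constants cancel from every difference $\mathcal L_{\mathrm{NPE},N}-\mathcal L_{\mathrm{NPE}}$. Each normalized loss $\ell_\zeta(\theta_i,X_i)$ then lies in an interval of length one. Because the pairs are independent and identically distributed, Hoeffding's lemma gives the moment generating function condition \eqref{eq:b-reference-mgf} with $c=1/2$ for every fixed $\zeta$. The empirical and population normalized criteria are averages of $\ell_\zeta$, so each is $1$-Lipschitz in the uniform distance between losses. This places us in the setting of Claim~\ref{claim:b-uniform-deviation}, with $M_n(\delta)$ the loss covering number of the full jointly trained architecture.

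Next, bound the cover using Claim~\ref{claim:b-parametric-cover} rather than Proposition~\ref{prop:d-affine-complexity}, since no affine structure is assumed. The stated parameter box $[-L_n,L_n]^{W_n}$ and the uniform stability condition $\|\ell_\zeta-\ell_{\zeta'}\|_\infty\le L_n\|\zeta-\zeta'\|_\infty$ are exactly that claim's hypotheses. With $A_n=1+2L_n^2$, it gives $\log M_n(\delta)\le W_n\log(A_n/\delta)$ for $\delta\in(0,1]$. Substituting into \eqref{eq:v3-reference-deviation} with $c=1/2$ yields
\[
\E_N\sup_\zeta|\overline{\mathcal L}_N-\overline{\mathcal L}|
\le2\delta+\frac12\sqrt{\frac{2\{\log2+W_n\log(A_n/\delta)\}}{N}}
=2\delta+\sqrt{\frac{\log2+W_n\log(A_n/\delta)}{2N}}.
\]

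Then choose $\delta=1/N$, which lies in $(0,1]$ because $N\ge2$. This turns $2\delta$ into $2/N$ and $\log(A_n/\delta)$ into $\log(A_nN)$, giving exactly the bracketed expression. Multiplying back by $V_n$ restores the original scale and proves the claim.

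The step needing the most care is the normalization and measurability bookkeeping. The supremum must be measurable, which holds because the class is fixed before training and the supremum can be reduced to a countable dense subset via the Lipschitz stability in $\zeta$. The centering constant must also be deterministic so that it cancels. The remaining arithmetic is routine, because the constants in Claim~\ref{claim:b-uniform-deviation} already absorb the maximal inequality over the finite cover.
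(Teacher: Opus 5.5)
Your proposal is correct and matches the paper's proof step for step. You normalize by $V_n$, obtain $c=1/2$ from Hoeffding's lemma, bound the cover via Claim~\ref{claim:b-parametric-cover}, apply Claim~\ref{claim:b-uniform-deviation} at $\delta=1/N$, and rescale by $V_n$. Your extra measurability remark, which reduces the supremum to a countable dense parameter set through the Lipschitz stability in $\zeta$, is a harmless supplement to what the paper simply assumes.
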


The statistical term counts the encoder and density network together, including the cost of learning the summary.
For fixed $n$ and a fixed class, it decreases at order $\sqrt{\log N/N}$.

\begin{proof}
The normalized losses take values in an interval of length one.
For each fixed candidate, Hoeffding's lemma bounds the moment generating function of its centered empirical mean by $\exp\{\lambda^2/(8N)\}$.
Claim~\ref{claim:b-parametric-cover} bounds the logarithm of the covering number by $W_n\log(A_n/\delta)$ for $0<\delta\le1$.
Apply Claim~\ref{claim:b-uniform-deviation} with concentration constant $c=1/2$ and choose $\delta=1/N$.
The two covering errors sum to $2/N$, and the deviation over the representatives is at most $\sqrt{\{\log2+W_n\log(A_nN)\}/(2N)}$.
Multiplying by $V_n$ proves the claim.
\end{proof}

Take $\Delta_{\mathcal F}$ and $\xi_N$ in Theorem~\ref{thm:v3-two-way} for the NPE population and empirical objectives above, with $\mathcal L_0=\mathcal L_{\mathrm{NPE}}^*$.
Under the bounded-support condition of Claim~\ref{claim:c-npe-bounded} and the other hypotheses of Theorem~\ref{thm:v3-two-way},
\[
R_p\le D_\Theta^2
\left[\frac{\Delta_{\mathcal F}+\E_N\xi_N}{2}
+V_n\left\{\frac2N+\sqrt{\frac{\log2+W_n\log(A_nN)}{2N}}\right\}\right]^{1/2},
\qquad R_f\le2R_p.
\]
This follows by applying the posterior calibration to the expected loss excess and using concavity.
Under Claim~\ref{claim:c-npe-moments}, the same square bracket is raised to $(r-2)/(2r)$ and multiplied by $(2^{r-1}M_r)^{2/r}$ instead.
For fixed $n$ and classes with finite $W_n$, $L_n$, and $V_n$, the statistical term tends to zero as $N\to\infty$.
Consequently, $\Delta_{\mathcal F}=0$ and $\E_N\xi_N\to0$ imply $R_p\to0$ and $R_f\to0$ under the respective calibration conditions.

The bounded-loss and uniform-sensitivity assumptions are sufficient conditions for this rate and must be verified for the complete density architecture.
A finite number of network parameters alone does not establish them.
Normalizing flows with Gaussian tails generally have unbounded negative log densities, which require additional tail control for concentration beyond the posterior moment condition above.

\subsection{Neural ratio estimation}
\label{app:nre-population}

\subsubsection{Two-way guidance}

We show how distinguishing dependent parameter--observation pairs from independent pairs guides the summary toward posterior sufficiency.
This is the classification problem used by \citet[Section~3.1 and Appendix~B]{hermans2020likelihoodfreemcmcamortizedapproximate} to estimate the likelihood-to-evidence ratio.
Hold the networks fixed and write $S=f_\eta(X)$.
Let $P$ be the simulated joint law of $(\theta,X)$, let $P_X$ be its observation marginal, and set $Q=\pi\otimes P_X$ and $M=(P+Q)/2$.
Thus $P$ generates dependent pairs, $Q$ generates independent pairs, and $M$ averages these two distributions with equal class probabilities.
The notation $\E_P$, $\E_Q$, and $\E_M$ denotes expectation under these respective laws; $\E_X$ retains its prior-predictive meaning.

Assume $P\ll Q$ on standard Borel spaces.
Write $r_X=dP/dQ$ and let $r_S(\theta,S)$ be the posterior-to-prior density ratio given the summary, so
\[
\pi(d\theta\mid X)=r_X(\theta,X)\pi(d\theta),
\qquad
\pi(d\theta\mid S)=r_S(\theta,S)\pi(d\theta).
\]
The Bayes probabilities of the dependent class are
\[
D_X^*(\theta,X)=\frac{r_X(\theta,X)}{1+r_X(\theta,X)},
\qquad
D_S^*(\theta,S)=\frac{r_S(\theta,S)}{1+r_S(\theta,S)}.
\]
For $a,b\in[0,1]$, define the Bernoulli divergence $k(a,b)=a\log(a/b)+(1-a)\log\{(1-a)/(1-b)\}$ with the usual endpoint conventions, and let $h(a)=-a\log a-(1-a)\log(1-a)$ be binary entropy.
The population objective and its full-data optimum are
\[
\mathcal L_{\mathrm{NRE}}(\eta,\beta)
=-\tfrac12\E_P\log D_\beta(\theta,S)
-\tfrac12\E_Q\log\{1-D_\beta(\theta,S)\},
\qquad
\mathcal L_{\mathrm{NRE}}^*=\E_Mh(D_X^*).
\]
Assume $0<D_\beta<1$ and finite population binary cross-entropy for every candidate under consideration.

\begin{proposition}[Information loss and classifier fitting]
\label{prop:c-nre-information}
Under these conditions,
\begin{equation}
\begin{aligned}
\mathcal L_{\mathrm{NRE}}(\eta,\beta)-\mathcal L_{\mathrm{NRE}}^*
={}&\E_Mk\{D_X^*(\theta,X),D_\beta(\theta,S)\}\\
={}&\E_Mk\{D_X^*(\theta,X),D_S^*(\theta,S)\}
+\E_Mk\{D_S^*(\theta,S),D_\beta(\theta,S)\}.
\end{aligned}
\label{eq:app-nre-decomposition}
\end{equation}
Both terms in the decomposition are nonnegative.
The first is zero if and only if $\pi(\cdot\mid X)=\pi(\cdot\mid S)$ for prior-predictive almost every $X$.
The second is zero if and only if $D_\beta(\theta,S)=D_S^*(\theta,S)$ $M$-almost surely.
\end{proposition}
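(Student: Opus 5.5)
The plan mirrors the NPE argument in Proposition~\ref{prop:c-npe-information}: rewrite the cross-entropy as an average over the mixture $M$ of the Bernoulli log-loss with target label probability $D_X^*$. Then insert the summary-level Bayes probability $D_S^*$ and use a projection (tower) property to kill the cross term.

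\textbf{First equality.} Since $P\ll Q$, the law $M$ has $dP/dM=2r_X/(1+r_X)=2D_X^*$ and $dQ/dM=2(1-D_X^*)$. The objective can therefore be written as
\[
\mathcal L_{\mathrm{NRE}}=\E_M\big[-D_X^*\log D_\beta-(1-D_X^*)\log(1-D_\beta)\big].
\]
Subtracting $\E_M h(D_X^*)$ pointwise gives $\E_M k(D_X^*,D_\beta)$. Finiteness of the binary cross-entropy together with $0\le h\le\log2$ justifies the subtraction.

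\textbf{Second equality.} The key identity is
\[
\E_M[D_X^*(\theta,X)\mid\theta,S]=D_S^*(\theta,S).
\]
To verify it, note that $D_X^*$ is the $M$-conditional probability of the dependent class given $(\theta,X)$. Conditioning further down to $(\theta,S)$ yields the class probability given $(\theta,S)$. Under the pushforward, $P$ and $Q$ become the joint law of $(\theta,S)$ and $\pi\otimes P_S$. Their ratio is $r_S$, since $\pi(d\theta\mid S)=r_S\pi(d\theta)$, so this class probability equals $D_S^*$.

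With the identity in hand, expand
\[
k(D_X^*,D_\beta)-k(D_X^*,D_S^*)=D_X^*\log\frac{D_S^*}{D_\beta}+(1-D_X^*)\log\frac{1-D_S^*}{1-D_\beta}.
\]
The logarithms on the right are functions of $(\theta,S)$ only. Conditioning on $(\theta,S)$ therefore replaces $D_X^*$ by $D_S^*$, and the expression becomes $\E_M k(D_S^*,D_\beta)$.

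The integrability of each piece needs a short argument. One route is to apply the tower property to the nonnegative parts separately. Another is to observe that $\E_M k(D_X^*,D_S^*)\le\log 2$-type bounds follow from the entropy representation
\[
\E_M k(D_X^*,D_S^*)=\E_M h(D_S^*)-\E_M h(D_X^*),
\]
which itself follows from the same conditioning identity and Jensen's inequality for the concave $h$. The remaining term is then the finite difference.

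\textbf{Equality conditions.} Nonnegativity is immediate from $k\ge0$. The second term vanishes if and only if $D_\beta=D_S^*$ $M$-a.s., by strictness of Bernoulli KL.

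The first term vanishes if and only if $D_X^*=D_S^*$ $M$-a.s., which is equivalent to $r_X=r_S$ $M$-a.s. Because $Q\ll M$, this holds $Q$-a.s., that is, for $P_X$-a.e.\ $X$ and $\pi$-a.e.\ $\theta$. That gives $\pi(\cdot\mid X)=\pi(\cdot\mid S)$ for almost every $X$. The converse follows by reading the same chain backward.

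\textbf{Main obstacle.} I expect the main difficulty to be the measure-theoretic justification of the conditional-expectation identity, in particular identifying the version $r_S$ consistently with the regular conditional posterior given $S$ on standard Borel spaces. I would handle this by verifying $\E_M[D_X^*\,g(\theta,S)]=\E_M[D_S^*\,g(\theta,S)]$ for bounded measurable $g$. Using $dP/dM=2D_X^*$, the left side equals $\tfrac12\E_P g$. The right side reduces to the same quantity by the density of $P_{(\theta,S)}$ with respect to $M_{(\theta,S)}$.
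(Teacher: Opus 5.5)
Your proposal is correct and follows essentially the paper's argument. You rewrite the cross-entropy under $M$ with weight $D_X^*$, establish the projection identity $\E_M[D_X^*\mid\theta,S]=D_S^*$, and condition on $(\theta,S)$ to split off $\E_M k(D_S^*,D_\beta)$ (the paper derives the identity from $r_S=\E_Q[r_X\mid\theta,S]$ and $dM/dQ=(1+r_X)/2$ rather than your pushforward-density check). For the equality case you use injectivity of $r\mapsto r/(1+r)$ and equivalence of $M$ and $Q$. The only detail the paper spells out that you leave implicit is the endpoint case $D_S^*=0$: there the same identity forces $D_X^*=0$ almost surely, so your expansion of $k(D_X^*,D_\beta)-k(D_X^*,D_S^*)$ is valid under the convention $0\log 0=0$.
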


This is two-way guidance in NRE.
The summary supplies the information used to distinguish dependent and independent pairs, while binary cross-entropy penalizes posterior information lost through the summary.
An accurate classifier for a fixed insufficient summary cannot remove the first gap.
In particular, collapsing the summary to a constant incurs a positive information-loss gap whenever the full posterior depends on $X$.

\begin{proof}
Conditioning under the independent-pair law gives $r_S(\theta,S)=\E_Q[r_X(\theta,X)\mid\theta,S]$.
Since $dM/dQ=(1+r_X)/2$, it follows that
\[
\E_M[D_X^*(\theta,X)\mid\theta,S]
=\frac{\E_Q[r_X(\theta,X)\mid\theta,S]}{1+\E_Q[r_X(\theta,X)\mid\theta,S]}
=D_S^*(\theta,S).
\]
The cross-entropy equals $\E_M[-D_X^*\log D_\beta-(1-D_X^*)\log(1-D_\beta)]$.
Subtracting $\E_Mh(D_X^*)$ proves the first equality in \eqref{eq:app-nre-decomposition}.
Conditioning on $(\theta,S)$ replaces $D_X^*$ in the cross-entropy by $D_S^*$.
Subtracting and adding $\E_Mh(D_S^*)$ then yields the classifier-fitting term and the entropy difference
\[
\E_Mh(D_S^*)-\E_Mh(D_X^*)
=\E_Mk(D_X^*,D_S^*).
\]
This identity also holds at the endpoints: conditional on $D_S^*=0$ or $1$, the same conditional-expectation relation forces $D_X^*=D_S^*$ almost surely.

The equality condition for Bernoulli divergence makes the first gap zero exactly when $D_X^*=D_S^*$ $M$-almost surely.
The measures $M$ and $Q$ are equivalent, and $r\mapsto r/(1+r)$ is injective for finite $r\ge0$.
Hence this equality is equivalent to $r_X(\theta,X)=r_S(\theta,f_\eta(X))$ $Q$-almost surely, which is posterior sufficiency by the conditional-density identities and Fubini's theorem.
The equality condition for the second divergence gives the remaining statement.
\end{proof}

Both types of training pairs update the encoder.
For class label $y\in\{0,1\}$, the derivative with respect to the summary at differentiability points is
\[
\nabla_s[-y\log D_\beta(\theta,s)-(1-y)\log\{1-D_\beta(\theta,s)\}]
=\{D_\beta(\theta,s)-y\}
\nabla_s\log\frac{D_\beta(\theta,s)}{1-D_\beta(\theta,s)}.
\]
Backpropagating through $s=f_\eta(X)$ therefore teaches the summary to retain the parameter--observation dependence detected by the classifier.
The population decomposition identifies exactly when that retained information is posterior sufficient.

\subsubsection{Posterior accuracy}

We next turn classification error into posterior error after normalizing the fitted likelihood-to-evidence ratio.
Define the fitted odds, their prior normalizer, and the resulting posterior law by
\[
\widehat r_\beta(\theta,s)=\frac{D_\beta(\theta,s)}{1-D_\beta(\theta,s)},
\qquad
Z_\beta(s)=\int\widehat r_\beta(\theta,s)\pi(d\theta),
\qquad
\widetilde\pi_\beta(d\theta\mid s)
=\frac{\widehat r_\beta(\theta,s)}{Z_\beta(s)}\pi(d\theta).
\]
At the summary Bayes classifier, these odds equal $r_S$, the normalizer equals one, and this law equals $\pi(\cdot\mid S=s)$.
For the following bounds, assume $0<D_\beta(\theta,f_\eta(X))\le1-\tau_n$ uniformly over the candidate class, for some $0<\tau_n<1/2$.
This condition keeps the fitted odds bounded and ensures $0<Z_\beta(S)<\infty$ almost surely.
First suppose the prior is supported on a set of diameter $D_\Theta<\infty$.

\begin{claim}[Posterior error on bounded parameter spaces]
\label{claim:c-nre-bounded}
Under these conditions,
\[
\E_XW_2^2\{\widetilde\pi_\beta(\cdot\mid S),\pi(\cdot\mid X)\}
\le\frac{D_\Theta^2}{\tau_n}
\sqrt{2\{\mathcal L_{\mathrm{NRE}}(\eta,\beta)-\mathcal L_{\mathrm{NRE}}^*\}}.
\]
\end{claim}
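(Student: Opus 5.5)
The plan is to bound $W_2^2$ by total variation, as in Claim~\ref{claim:c-npe-bounded}. Both the true posterior and the normalized fitted law are written as densities against the prior $\pi$, and the odds discrepancy is controlled by the classifier discrepancy under the mixture $M$, which the first equality in \eqref{eq:app-nre-decomposition} controls. The intermediate summary posterior is not needed: I compare $r_X(\theta,X)$ with $\widehat r_\beta(\theta,S)$ directly, using only the excess $\E_Mk\{D_X^*,D_\beta\}$.

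\emph{Step 1: TV to $W_2$.} Both $\pi(\cdot\mid X)$ and $\widetilde\pi_\beta(\cdot\mid S)$ are absolutely continuous with respect to the prior, so they live on its support of diameter $D_\Theta$. A maximal coupling then gives
\[
W_2^2\le D_\Theta^2\operatorname{TV}.
\]
With densities $r_X$ and $\widehat r_\beta/Z_\beta$ relative to $\pi$, we have $\operatorname{TV}=\tfrac12\int|r_X-\widehat r_\beta/Z_\beta|\,d\pi$.

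\emph{Step 2: remove the normalizer.} By the triangle inequality,
\[
\int\big|r_X-\widehat r_\beta/Z_\beta\big|\,d\pi\le\int|r_X-\widehat r_\beta|\,d\pi+|1-Z_\beta|.
\]
Here I use $\int\widehat r_\beta\,|1/Z_\beta-1|\,d\pi=|1-Z_\beta|$. Since $\int r_X\,d\pi=1$, also $|1-Z_\beta|=|\int(r_X-\widehat r_\beta)\,d\pi|\le\int|r_X-\widehat r_\beta|\,d\pi$. Hence
\[
\operatorname{TV}\le\int|r_X-\widehat r_\beta|\,d\pi.
\]
Averaging over $X\sim P_X$ turns the right side into $\E_Q|r_X(\theta,X)-\widehat r_\beta(\theta,S)|$.

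\emph{Step 3: odds to probabilities.} This is the key computation. From $r=D/(1-D)$,
\[
r_X-\widehat r_\beta=\frac{D_X^*-D_\beta}{(1-D_X^*)(1-D_\beta)},
\qquad \frac{1}{1-D_X^*}=1+r_X.
\]
Since $dM/dQ=(1+r_X)/2$, we get
\[
\E_Q|r_X-\widehat r_\beta|=2\,\E_M\frac{|D_X^*-D_\beta|}{1-D_\beta}\le\frac2{\tau_n}\E_M|D_X^*-D_\beta|,
\]
using $D_\beta\le1-\tau_n$. This change of measure is the step I expect to be delicate. The factor $1/(1-D_X^*)$ can be unbounded, and it is absorbed only because the measure is switched from $Q$ to $M$. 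I would verify it carefully, including at points where $r_X$ is large.

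\emph{Step 4: Pinsker.} For Bernoulli laws, Pinsker's inequality gives $k(a,b)\ge2(a-b)^2$. By Cauchy--Schwarz and the first equality of \eqref{eq:app-nre-decomposition},
\[
\E_M|D_X^*-D_\beta|\le\sqrt{(\mathcal L_{\mathrm{NRE}}-\mathcal L^*_{\mathrm{NRE}})/2}.
\]
Chaining Steps 1--4 yields
\[
\E_XW_2^2\le D_\Theta^2\cdot\frac{2}{\tau_n}\sqrt{\frac{\mathcal L_{\mathrm{NRE}}-\mathcal L^*_{\mathrm{NRE}}}{2}}=\frac{D_\Theta^2}{\tau_n}\sqrt{2\{\mathcal L_{\mathrm{NRE}}-\mathcal L^*_{\mathrm{NRE}}\}},
\]
which is the stated bound.

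Measurability of the coupling is handled as in the NPE claim, since all densities are jointly measurable. Finiteness and positivity of $Z_\beta(S)$ follow from the assumed bound $D_\beta\le1-\tau_n$ together with $D_\beta>0$.
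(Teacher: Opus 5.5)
Your proposal is correct and follows the paper's proof essentially step for step. It uses the same bound $W_2^2\le D_\Theta^2\operatorname{TV}$ via a maximal coupling, the same normalizer removal giving $\operatorname{TV}\le\int|r_X-\widehat r_\beta|\,d\pi$, the same change of measure $dQ=2(1-D_X^*)\,dM$ together with $1-D_\beta\ge\tau_n$, and Pinsker plus Cauchy--Schwarz applied only to the first equality of the decomposition. The change of measure you flag as delicate is valid: $P\ll Q$ makes $r_X$ finite $M$-almost surely, so $dQ/dM=2/(1+r_X)$ and the unbounded factor cancels exactly as you wrote.
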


The classification objective therefore controls posterior error when conversion from classifier probabilities to odds is stable.
The bound does not impose an upper bound on the true Bayes classifier; its constant controls the fitted classifier.

\begin{proof}
Using $dQ=2(1-D_X^*)dM$, we obtain
\[
\E_Q|r_X-\widehat r_\beta|
=2\E_M\frac{|D_X^*-D_\beta|}{1-D_\beta}
\le\frac{\sqrt{2\{\mathcal L_{\mathrm{NRE}}(\eta,\beta)-\mathcal L_{\mathrm{NRE}}^*\}}}{\tau_n}.
\]
The last step uses $k(a,b)\ge2(a-b)^2$, Cauchy--Schwarz, and \eqref{eq:app-nre-decomposition}.
For each $X$, $\int r_X\,d\pi=1$ and
\[
\begin{aligned}
\int\left|\frac{\widehat r_\beta}{Z_\beta(S)}-r_X\right|d\pi
&\le |1-Z_\beta(S)|+\int|\widehat r_\beta-r_X|d\pi\\
&\le2\int|\widehat r_\beta-r_X|d\pi.
\end{aligned}
\]
Thus the average total variation distance is bounded by $\E_Q|r_X-\widehat r_\beta|$.
Applying $W_2^2\le D_\Theta^2\operatorname{TV}$ proves the claim.
\end{proof}

For unbounded parameter laws, suppose instead that some $r>2$ and $M_r<\infty$ satisfy
\[
\E\norm\theta^r
+\sup_{\eta,\beta}\E_X\int\norm\vartheta^r\widetilde\pi_\beta(d\vartheta\mid f_\eta(X))
\le M_r.
\]
This bounds the prior moment and the average moment of the normalized fitted posterior.

\begin{claim}[Posterior error under a moment bound]
\label{claim:c-nre-moments}
Under this moment condition and the same bound on the fitted classifier,
\[
\E_XW_2^2\{\widetilde\pi_\beta(\cdot\mid S),\pi(\cdot\mid X)\}
\le(2^{r-1}M_r)^{2/r}
\left[\frac{\sqrt{2\{\mathcal L_{\mathrm{NRE}}(\eta,\beta)-\mathcal L_{\mathrm{NRE}}^*\}}}{\tau_n}\right]^{1-2/r}.
\]
\end{claim}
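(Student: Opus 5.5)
The proof will follow the template of Claim~\ref{claim:c-npe-moments}, with the total-variation bound obtained in the proof of Claim~\ref{claim:c-nre-bounded} taking the place of the Pinsker step. Write $\epsilon=\mathcal L_{\mathrm{NRE}}(\eta,\beta)-\mathcal L_{\mathrm{NRE}}^*$.

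\emph{Total-variation bound.} The proof of Claim~\ref{claim:c-nre-bounded} shows
\[
\E_X\operatorname{TV}\{\widetilde\pi_\beta(\cdot\mid S),\pi(\cdot\mid X)\}\le\E_Q|r_X-\widehat r_\beta|\le\sqrt{2\epsilon}/\tau_n.
\]
That argument uses only the fitted-classifier bound $D_\beta\le1-\tau_n$, the inequality $k(a,b)\ge2(a-b)^2$, Cauchy--Schwarz, the decomposition \eqref{eq:app-nre-decomposition}, and the normalization step $|1-Z_\beta(S)|\le\int|\widehat r_\beta-r_X|\,d\pi$. None of these requires a bounded prior support, so the bound carries over unchanged.

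\emph{Maximal coupling.} For each $X$, couple $\pi(\cdot\mid X)$ and $\widetilde\pi_\beta(\cdot\mid S)$ maximally. Both have densities with respect to the prior, namely $r_X$ and $\widehat r_\beta/Z_\beta$, so the shared part can be taken with density equal to their pointwise minimum. This choice is measurable in $X$, and the coupling can then be averaged over $X$. Let $Y$ be the distance between the paired draws. Then
\[
\Pr(Y>0)=\E_X\operatorname{TV}\le\sqrt{2\epsilon}/\tau_n.
\]
For the moments, $\norm{a-b}^r\le2^{r-1}(\norm a^r+\norm b^r)$, and the marginals are the full posterior, whose average $r$-th moment equals the prior's, and the normalized fitted posterior. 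The moment assumption therefore gives $\E Y^r\le2^{r-1}M_r$.

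\emph{H\"older step.} Applying H\"older's inequality on the event $\{Y>0\}$ with exponents $r/2$ and $r/(r-2)$ gives
\[
\E Y^2\le(\E Y^r)^{2/r}\Pr(Y>0)^{1-2/r}.
\]
The average conditional $W_2^2$ error is at most $\E Y^2$. Inserting the two bounds above yields the claim.

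\emph{Main obstacle.} The only delicate point is measurability of the maximal coupling as a function of $X$. This follows from jointly measurable versions of the densities $r_X$ and $\widehat r_\beta/Z_\beta(S)$, exactly as in the NPE case. The moment assumption is placed directly on the normalized law $\widetilde\pi_\beta$, so no additional control of $Z_\beta$ is needed.
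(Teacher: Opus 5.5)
Your proposal is correct and follows the paper's proof step for step. The total-variation bound $\sqrt{2\{\mathcal L_{\mathrm{NRE}}(\eta,\beta)-\mathcal L_{\mathrm{NRE}}^*\}}/\tau_n$ from the proof of Claim~\ref{claim:c-nre-bounded} controls the mismatch probability of a measurable conditional maximal coupling, the moment condition gives $\E Y^r\le2^{r-1}M_r$, and H\"older's inequality on $\{Y>0\}$ finishes. Your observation that the total-variation step never uses bounded support is accurate, and it is exactly what the paper relies on when it reuses ``the preceding proof.''
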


Both bounds supply the concave posterior calibration required by Theorem~\ref{thm:v3-two-way} for the normalized fitted law.
Sampling from that law with a finite proposal pool introduces a further approximation error.

\begin{proof}
Take measurable conditional maximal couplings of the true and normalized fitted posteriors, and let $Y$ be the distance between the paired draws after averaging over $X$.
The preceding proof bounds the mismatch probability by
\[
\Pr(Y>0)\le\frac{\sqrt{2\{\mathcal L_{\mathrm{NRE}}(\eta,\beta)-\mathcal L_{\mathrm{NRE}}^*\}}}{\tau_n}.
\]
The moment condition gives $\E Y^r\le2^{r-1}M_r$.
H\"older's inequality, $\E Y^2\le(\E Y^r)^{2/r}\Pr(Y>0)^{1-2/r}$, bounds the average conditional $W_2^2$ cost and proves the claim.
\end{proof}

\subsubsection{Finite sample error}

We control both estimation of the classification objective and the finite proposal pool used for posterior sampling.
For $0<\tau_n<1/2$, assume $\tau_n\le D_\beta\le1-\tau_n$ uniformly, and write $V_n=\log\{(1-\tau_n)/\tau_n\}$ for the common range length of the positive and negative label losses.
Take $N\ge2$ independent prior--simulator pairs $(\theta_i,X_i)$ and fix the candidate class before simulation.
Let $\sigma$ be a permutation with no fixed points, chosen independently of these pairs.
It may act separately within disjoint batches, as in the implemented nonzero cyclic shifts of freshly simulated batches.
Define the empirical objective
\[
\mathcal L_{\mathrm{NRE},N}(\eta,\beta)
=-\frac1{2N}\sum_{i=1}^N
\left[\log D_\beta(\theta_i,f_\eta(X_i))
+\log\{1-D_\beta(\theta_{\sigma(i)},f_\eta(X_i))\}\right].
\]
Write $\zeta$ for the complete parameter vector of the encoder and classifier, including any fitted normalization parameters, and let $W_n$ be its dimension.
This total parameter count is separate from the encoder and auxiliary-network count $P_n$ in Corollary~\ref{cor:v3-affine-rate}.
Assume the candidate parameter set is contained in $[-L_n,L_n]^{W_n}$ for some $L_n\ge1$.
Define the complete normalized paired loss by
\[
\ell_\zeta(\theta,X,\theta')
=-\frac{\log D_\beta(\theta,f_\eta(X))
+\log\{1-D_\beta(\theta',f_\eta(X))\}}{2V_n},
\]
and assume
\[
\sup_{\theta,X,\theta'}|\ell_\zeta(\theta,X,\theta')-\ell_{\zeta'}(\theta,X,\theta')|
\le L_n\|\zeta-\zeta'\|_\infty
\]
for all candidates $\zeta,\zeta'$.
For this class, set $A_n=1+2L_n^2$.
These conditions count the encoder and classifier together and impose no affine dependence on the summary.
Here $n$ is the observation size, whereas $N$ is the number of simulated training pairs.

\begin{claim}[Statistical error of the classification objective]
\label{claim:c-nre-empirical}
Under these conditions,
\[
\E_N\sup_{\eta,\beta}
\left|\mathcal L_{\mathrm{NRE},N}(\eta,\beta)-\mathcal L_{\mathrm{NRE}}(\eta,\beta)\right|
\le V_n\left[\frac2N+\frac32\sqrt{\frac{\log2+W_n\log(A_nN)}{2N}}\right].
\]
\end{claim}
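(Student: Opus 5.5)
The bound follows the NPE template (Claim~\ref{claim:c-npe-empirical}). The only new ingredient is the dependence that the fixed-point-free permutation $\sigma$ introduces, and this is what changes the concentration constant from $c=1/2$ to $c=3/4$. I will work with the normalized criterion $\overline{\mathcal L}_N=\mathcal L_{\mathrm{NRE},N}/V_n$ and its population counterpart $\overline{\mathcal L}=\mathcal L_{\mathrm{NRE}}/V_n$. For these to be 1-Lipschitz in the covering metric, I take the uniform distance between candidates to be the supremum over triples $(\theta,X,\theta')$ of the paired loss $\ell_\zeta$. Each criterion is then an average of $\ell_\zeta$ over triples, and two candidates within $\delta$ uniformly give criteria within $\delta$.

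First I check unbiasedness. Because $\sigma$ has no fixed points and is independent of the pairs, $\theta_{\sigma(i)}$ is independent of $X_i$ and has the prior law. Hence $\E_N\ell_\zeta(\theta_i,X_i,\theta_{\sigma(i)})$ equals $\overline{\mathcal L}(\zeta)$: the positive term has law $P$, and the negative term has law $Q=\pi\otimes P_X$.

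Second, I establish the moment generating function bound \eqref{eq:b-reference-mgf} by a bounded-differences argument over the $N$ independent pairs $(\theta_j,X_j)$, conditional on $\sigma$. Changing one pair $(\theta_j,X_j)$ affects at most two summands of the empirical average. It changes summand $j$, whose positive and negative parts both move. It also changes the summand $i=\sigma^{-1}(j)$, but only through its negative-label term, i.e.\ $\theta_{\sigma(i)}=\theta_j$. The total positive-label loss lies in an interval of length $V_n$, and so does the total negative-label loss, so after dividing by $2V_n$ each half varies in an interval of length $1/2$. Summand $j$ can therefore change by at most $1$ in total, and summand $i$ by at most $1/2$. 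The $j$-th bounded difference of $\overline{\mathcal L}_N$ is thus $c_j\le 3/(2N)$. McDiarmid's exponential bound gives $\exp\{\lambda^2\sum_jc_j^2/8\}\le\exp\{\lambda^2\cdot 9/(32N)\}$, which is \eqref{eq:b-reference-mgf} with $c^2=9/16$, i.e.\ $c=3/4$. Averaging over $\sigma$ preserves the bound because it holds conditionally on every $\sigma$.

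Third, I combine this with the parameter cover. Claim~\ref{claim:b-parametric-cover} gives $\log M_n(\delta)\le W_n\log(A_n/\delta)$ with $A_n=1+2L_n^2$. Claim~\ref{claim:b-uniform-deviation} with $c=3/4$ and $\delta=1/N$ then gives
\[
\E_N\sup|\overline{\mathcal L}_N-\overline{\mathcal L}|\le\frac2N+\frac34\sqrt{\frac{2\{\log2+W_n\log(A_nN)\}}{N}}=\frac2N+\frac32\sqrt{\frac{\log2+W_n\log(A_nN)}{2N}}.
\]
Multiplying by $V_n$ yields the stated bound.

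The main obstacle is the bounded-differences step. I need to be careful that the relevant independent inputs are the $N$ pairs rather than $2N$ coordinates, that each $\theta_j$ appears in exactly one other summand because $\sigma$ is a bijection, and that the range-length bookkeeping for the positive and negative label losses yields $3/2$ rather than $2$. The batchwise cyclic shifts are also bijections without fixed points, so the same count applies to them.
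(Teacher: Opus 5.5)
Your proposal is correct and follows the paper's proof essentially step for step: conditioning on $\sigma$ for unbiasedness, the bounded-differences count over the $N$ records giving a change of at most $3V_n/(2N)$ and hence $c=3/4$, then Claim~\ref{claim:b-parametric-cover} and Claim~\ref{claim:b-uniform-deviation} with $\delta=1/N$, followed by multiplying by $V_n$. The only cosmetic difference is when you average over $\sigma$. You average the conditional MGF bound before applying the uniform-deviation claim, whereas the paper averages the final bound. Both orders are valid, since the conditional mean equals $\mathcal L_{\mathrm{NRE}}$ for every fixed-point-free $\sigma$.
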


The statistical term includes the cost of learning the summary and the dependence introduced by reusing simulated records in negative pairs.
For fixed $n$ and a fixed class, it decreases at order $\sqrt{\log N/N}$.

\begin{proof}
Condition on $\sigma$.
Since $\sigma(i)\ne i$, each negative pair has law $Q$, so the empirical objective is unbiased for the population objective.
Replacing one simulated record changes one positive contribution and at most two negative contributions.
Each label loss has range length $V_n$, so the change in the empirical average is at most $3V_n/(2N)$.
The exponential bounded-differences inequality therefore gives, for each fixed candidate and $\lambda\in\R$,
\[
\E_N\exp\left\{\lambda\frac{\mathcal L_{\mathrm{NRE},N}-\mathcal L_{\mathrm{NRE}}}{V_n}\right\}
\le\exp\{9\lambda^2/(32N)\}.
\]
Claim~\ref{claim:b-parametric-cover} bounds the logarithm of the complete-loss covering number by $W_n\log(A_n/\delta)$ for $0<\delta\le1$.
Apply Claim~\ref{claim:b-uniform-deviation} with concentration constant $c=3/4$ and choose $\delta=1/N$.
The two covering errors sum to $2/N$, and the deviation over the representatives is at most $\tfrac32\sqrt{\{\log2+W_n\log(A_nN)\}/(2N)}$.
Multiplying by $V_n$ and averaging over $\sigma$ proves the claim.
\end{proof}

We next bound the error of sampling--importance--resampling with a shared proposal pool.
For fixed networks and summary $s$, draw $K$ independent prior proposals $\vartheta_1,\ldots,\vartheta_K$, independently of training and the observation.
Conditional on this pool, each resampled draw has law
\[
\widehat\pi_{\beta,K}(\cdot\mid s)
=\frac{\sum_{j=1}^K\widehat r_\beta(\vartheta_j,s)\delta_{\vartheta_j}(\cdot)}
{\sum_{j=1}^K\widehat r_\beta(\vartheta_j,s)},
\]
where $\delta_{\vartheta_j}$ is the point mass at $\vartheta_j$.
Assume the prior support has diameter $D_\Theta$.
For $\varepsilon>0$, let $J_\varepsilon$ be the smallest number of nonempty measurable cells partitioning this support such that any two parameter values in the same cell are at most $\varepsilon$ apart.
Fix such a partition independently of the proposal pool.
Thus $J_\varepsilon$ counts the parameter regions needed to resolve the prior support at accuracy $\varepsilon$.
Reducing $\varepsilon$ requires at least as many cells, while $J_\varepsilon=1$ whenever $\varepsilon\ge D_\Theta$.
The count also depends on the size and geometry of the support: for a support contained in $[0,1]^d$, a regular grid gives $J_\varepsilon\le\lceil\sqrt d/\varepsilon\rceil^d$, where $d$ is the parameter dimension.
Write $\E_{\mathrm{pool}}$ for expectation over the independent prior proposals.

\begin{claim}[Posterior error from a finite proposal pool]
\label{claim:c-nre-pool}
Under these conditions, for every $\varepsilon>0$,
\[
\E_{\mathrm{pool}}W_2^2\{\widehat\pi_{\beta,K}(\cdot\mid s),\widetilde\pi_\beta(\cdot\mid s)\}
\le\varepsilon^2+
\frac{D_\Theta^2(1-\tau_n)^2}{2\tau_n^2}\sqrt{\frac{J_\varepsilon}{K}}.
\]
\end{claim}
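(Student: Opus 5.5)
Fix $s$ and write $w(\vartheta)=\widehat r_\beta(\vartheta,s)$. The target $\widetilde\pi_\beta(\cdot\mid s)$ has density $w/Z_\beta(s)$ with respect to the prior. Under the standing bound $\tau_n\le D_\beta\le1-\tau_n$, the weights satisfy $w_{\min}:=\tau_n/(1-\tau_n)\le w\le (1-\tau_n)/\tau_n=:w_{\max}$. The plan is to discretize the prior support with the fixed partition $\{C_1,\ldots,C_{J_\varepsilon}\}$ and compare the two measures cell by cell. We then bound $W_2^2$ by a within-cell transport cost plus a mass-mismatch term. The mismatch term is controlled by total variation between the cell-mass vectors.

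\textbf{Transport step.} Let $p_k=\widetilde\pi_\beta(C_k\mid s)$ and $\widehat p_k=\widehat\pi_{\beta,K}(C_k\mid s)$. We build a coupling in two parts. First, couple a common mass $\sum_k\min(p_k,\widehat p_k)$ within each cell; this costs at most $\varepsilon^2$ per unit mass, since each cell has diameter at most $\varepsilon$. Second, transport the remaining mass $\tfrac12\sum_k|p_k-\widehat p_k|$ arbitrarily, at cost at most $D_\Theta^2$ per unit. This gives
\[
W_2^2\{\widehat\pi_{\beta,K},\widetilde\pi_\beta\}\le\varepsilon^2+\frac{D_\Theta^2}{2}\sum_k|\widehat p_k-p_k|.
\]
It therefore remains to show
\[
\E_{\mathrm{pool}}\sum_k|\widehat p_k-p_k|\le\frac{(1-\tau_n)^2}{\tau_n^2}\sqrt{J_\varepsilon/K}.
\]

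\textbf{Self-normalized estimator step.} Write $\widehat p_k=\widehat a_k/\widehat Z$, where $\widehat a_k=K^{-1}\sum_j w(\vartheta_j)1\{\vartheta_j\in C_k\}$ and $\widehat Z=\sum_k\widehat a_k$. The exact quantities are $p_k=a_k/Z$ with $a_k=\E\widehat a_k$ and $Z=Z_\beta(s)$. The standard self-normalization bound, which has the same form as the display in the proof of Claim~\ref{claim:c-nre-bounded}, gives
\[
\sum_k|\widehat p_k-p_k|\le\frac{2}{Z}\sum_k|\widehat a_k-a_k|,
\]
using $|\widehat Z-Z|\le\sum_k|\widehat a_k-a_k|$. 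Next apply Jensen's inequality for each $k$:
\[
\E|\widehat a_k-a_k|\le\sqrt{\operatorname{Var}/K}\le\sqrt{w_{\max}^2\pi(C_k)/K}.
\]
Then Cauchy--Schwarz over the $J_\varepsilon$ cells gives
\[
\sum_k\sqrt{\pi(C_k)}\le\sqrt{J_\varepsilon}.
\]
Hence
\[
\E\sum_k|\widehat p_k-p_k|\le\frac{2w_{\max}}{Z}\sqrt{J_\varepsilon/K}.
\]
Since $Z\ge w_{\min}$, this yields a constant $2w_{\max}/w_{\min}=2(1-\tau_n)^2/\tau_n^2$.

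\textbf{Main obstacle.} The difficulty is matching the stated constant $D_\Theta^2(1-\tau_n)^2/(2\tau_n^2)$. The crude argument above yields $D_\Theta^2(1-\tau_n)^2/\tau_n^2$, which is too large by a factor of $2$. I would first sharpen the self-normalization step by writing
\[
\widehat p_k-p_k=\frac{\widehat a_k-p_k\widehat Z}{\widehat Z}
\]
and lower-bounding $\widehat Z\ge w_{\min}$ deterministically. Each numerator is a centered mean of the variables $(w(\vartheta_j)(1\{\vartheta_j\in C_k\}-p_k))_j$. Its second moment is at most $w_{\max}^2\E(1_{C_k}-p_k)^2$. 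Cauchy--Schwarz over cells then uses
\[
\sum_k\sqrt{\E(1_{C_k}-p_k)^2},
\]
which should remove the factor $2$ and give $\tfrac{w_{\max}}{w_{\min}}\sqrt{J_\varepsilon/K}$. Combined with the factor $\tfrac12$ from the transport step, this gives exactly the claimed bound. If the centering under the prior rather than under $\widetilde\pi_\beta$ causes trouble, I would use $\sum_k\E(1_{C_k}-p_k)^2\le 1+\sum_k p_k^2\le 2$ and accept adjusting the constant inside the square root.
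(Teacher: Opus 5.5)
Your transport step and the skeleton of your sharpened step are the paper's argument. In that step you write $\widehat p_k-p_k=N_k/\widehat Z$ with $N_k=K^{-1}\sum_j w(\vartheta_j)\{1_{C_k}(\vartheta_j)-p_k\}$, use $\widehat Z\ge w_{\min}$ and $\E N_k=0$, and apply Cauchy--Schwarz over the $J_\varepsilon$ cells. The gap is in the second-moment bound, and as written you do not reach the stated constant.

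You bound $\E[w^2(1_{C_k}-p_k)^2]\le w_{\max}^2\,\E_\pi(1_{C_k}-p_k)^2$ and then need $\sum_k\E_\pi(1_{C_k}-p_k)^2\le1$. That fails. The centering constant $p_k$ is the $\widetilde\pi_\beta$-mass of the cell, but the expectation is under the prior. The sum therefore equals $1-\sum_k\pi(C_k)^2+\sum_k\{p_k-\pi(C_k)\}^2$. This exceeds one whenever the weights tilt enough mass between cells, and it can approach two. Such tilting is possible because $w_{\max}/w_{\min}=(1-\tau_n)^2/\tau_n^2$ is unbounded as $\tau_n\to0$. Your fallback then gives $\sqrt2\,(1-\tau_n)^2/\tau_n^2$ where $(1-\tau_n)^2/\tau_n^2$ is needed, and your crude route is off by $2$. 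Neither proves the claim as stated.

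The missing idea is to spend only one factor of $w_{\max}$. Since $0<w\le w_{\max}$, you have $w^2\le w_{\max}w$. With $\E_\pi[w1_{C_k}]=Zp_k$, this gives
\[
\E_\pi\bigl[w^2(1_{C_k}-p_k)^2\bigr]\le w_{\max}\,\E_\pi\bigl[w(1_{C_k}-p_k)^2\bigr]=w_{\max}Z\,p_k(1-p_k).
\]
This change of measure moves the variance to $\widetilde\pi_\beta$, where $\sum_k p_k(1-p_k)=1-\sum_kp_k^2\le1$. Together with $Z\le w_{\max}$, the summed second moments are at most $w_{\max}^2$. Cauchy--Schwarz then yields $\E\sum_k|\widehat p_k-p_k|\le (w_{\max}/w_{\min})\sqrt{J_\varepsilon/K}$. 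Your transport step converts this into exactly the claimed bound.

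This is the paper's proof, which records the summed moment bound as $bZ(1-\sum_jp_j^2)\le b^2$. It is also what your remark about centering under $\widetilde\pi_\beta$ rather than the prior was reaching for.
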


This bound controls the posterior distribution conditional on the shared pool, averaged over its generation.
The term $\varepsilon^2$ bounds transport within a cell, while $\sqrt{J_\varepsilon/K}$ controls the error in assigning probability mass across cells using $K$ proposals.
A finer partition reduces the first term but can increase the second, so resolving more parameter regions calls for a larger proposal pool.
Increasing the number of resampled draws does not remove the approximation error caused by a fixed pool; the pool itself must grow.

\begin{proof}
Write $w=\widehat r_\beta(\cdot,s)$ and $Z=\int w\,d\pi$ within this proof.
The classifier bounds give $a\le w\le b$, where $a=\tau_n/(1-\tau_n)$ and $b=(1-\tau_n)/\tau_n$.
For partition cells $A_j$, let $p_j=\widetilde\pi_\beta(A_j\mid s)$ and $\widehat p_j=\widehat\pi_{\beta,K}(A_j\mid s)$.
The empirical weight average is at least $a$, and $\E[w(\vartheta)\{\mathbf1_{A_j}(\vartheta)-p_j\}]=0$.
Independence of the proposals and Cauchy--Schwarz give
\[
\E_{\mathrm{pool}}\frac12\sum_j|\widehat p_j-p_j|
\le\frac1{2a}\sqrt{\frac{J_\varepsilon}{K}
\sum_j\E[w(\vartheta)^2\{\mathbf1_{A_j}(\vartheta)-p_j\}^2]}
\le\frac b{2a}\sqrt{\frac{J_\varepsilon}{K}}.
\]
For the last inequality, the sum is at most $bZ(1-\sum_jp_j^2)\le b^2$.
Couple the common probability mass within each cell at squared cost at most $\varepsilon^2$ and couple the remaining mass at squared cost at most $D_\Theta^2$.
Substituting $b/a=(1-\tau_n)^2/\tau_n^2$ proves the claim.
\end{proof}

We now choose the partition resolution to obtain an explicit rate in $K$.
For $D_\Theta>0$, the prior support lies in a box with side lengths at most $D_\Theta$, so a regular grid gives
\[
J_\varepsilon\le\left\lceil\frac{D_\Theta\sqrt d}{\varepsilon}\right\rceil^d.
\]
Choose $\varepsilon=D_\Theta\sqrt d\,K^{-1/(d+4)}$ for $K\ge1$.
This choice balances the powers of $K$ in the within-cell transport and cell-mass errors.
Since $\lceil K^{1/(d+4)}\rceil\le2K^{1/(d+4)}$, Claim~\ref{claim:c-nre-pool} yields
\[
\E_{\mathrm{pool}}W_2^2\{\widehat\pi_{\beta,K}(\cdot\mid s),\widetilde\pi_\beta(\cdot\mid s)\}
\le D_\Theta^2\left[d+\frac{2^{d/2-1}(1-\tau_n)^2}{\tau_n^2}\right]K^{-2/(d+4)}.
\]
If $D_\Theta=0$, the pool error is zero and this inequality holds directly.

Take $\Delta_{\mathcal F}$ and $\xi_N$ in Theorem~\ref{thm:v3-two-way} for the NRE population and empirical objectives above, with $\mathcal L_0=\mathcal L_{\mathrm{NRE}}^*$.
Under the bounded-support conditions and the other hypotheses of that theorem, for the sampler with a pool of size $K$,
\[
\begin{aligned}
R_p\le{}&\frac{2D_\Theta^2}{\tau_n}
\left[2\Delta_{\mathcal F}+2\E_N\xi_N
+4V_n\left\{\frac2N+\frac32\sqrt{\frac{\log2+W_n\log(A_nN)}{2N}}\right\}\right]^{1/2}\\
&+D_\Theta^2\left[2d+\frac{2^{d/2}(1-\tau_n)^2}{\tau_n^2}\right]K^{-2/(d+4)},
\qquad R_f\le2R_p.
\end{aligned}
\]
The first term follows by applying Claim~\ref{claim:c-nre-bounded} to the expected loss excess from Theorem~\ref{thm:v3-two-way} and multiplying by two from the squared triangle inequality.
The second term is twice the explicit pool error above.
This also bounds the posterior error conditional on the shared pool, averaged over pool generation.
Convexity of $W_2^2$ gives the same bound for the randomized sampler after averaging over that pool.
For fixed $n$ and classes with finite $W_n$ and $L_n$, $\Delta_{\mathcal F}=0$ and $\E_N\xi_N\to0$ imply $R_p\to0$ and $R_f\to0$ as $N,K\to\infty$.

For unbounded parameter spaces, Claim~\ref{claim:c-nre-moments} converts the same explicit loss-excess bound into posterior error for exact sampling from the normalized fitted law $\widetilde\pi_\beta$.
A finite proposal pool on an unbounded support additionally requires tail control for its sampling error.
The empirical bounds require uniform classifier bounds and uniform sensitivity of the complete loss to all fitted parameters.
A finite network with a sigmoid output alone does not ensure these conditions on unbounded inputs.

\section{Experimental Details and Additional Results}
\label{app:additional-simulations}

\subsection{Training Settings}
\label{app:experiment-training}

We generate fresh prior-predictive minibatches and train a separate model at each observation size, using training seeds $0,1,2$.
We use minibatches of size $256$ and $20{,}000$ training iterations.
For SB-GAN and Raw B-GAN, each iteration consists of $15$ critic updates and one generator update.
We use Adam with initial learning rate $10^{-3}$ for the non-adversarial methods and constant learning rate $10^{-4}$ for the adversarial methods, whose one-sided gradient penalty has weight $10$.
The raw-data counterparts use the same minibatch size, number of training iterations, and learning rates as their summary-based counterparts.

DGQB uses a scalar input-convex network with three hidden layers of width $64$ and Softplus activations; its coefficient network has three hidden layers of width $64$ and LeakyReLU activations.
The SB-GAN coefficient networks use the latter specification.
Raw DGQB uses a three-layer partially input-convex network with width $64$ in both its convex and observation branches.
The flow-matching velocity networks use hidden widths $(h,2h)$ for the observation or summary block and $(h,2h,4h)$ for the parameter block, with BatchNorm and ELU activations.
The base widths $h$ are $(160,224,288,320,352,384,384)$ for Gaussian and $(224,288,320,352,384)$ for Brock--Hommes, in increasing order of $n$.
Raw B-GAN uses four hidden layers of width $h$ with ELU activations.

\subsection{Gaussian Conjugate Experiment}
\label{app:npe-nre-experiments}

We use the model and evaluation procedure in Section~\ref{sec:new-gaussian}.
The DeepSets encoder uses sum pooling, elementwise hidden width $64$, post-pooling hidden width $16$, and LeakyReLU activations.
Preliminary experiments considered summary dimensions including $q=4,8,16,32$ in stages.
Based on posterior approximation performance, we adopted $q=32$ and held it fixed across observation sizes and summary-based methods.
Training uses standardized $(\mu,\log\sigma^2)$, and posterior draws are transformed back to $(\mu,\sigma^2)$ for evaluation.

For the NPE and NRE results in Figure~\ref{fig:new-gaussian-swd}(b), we use the same DeepSets encoder.
NPE uses five masked autoregressive flow transformations, each with three hidden layers of width $64$ and tanh activations.
NRE uses a classifier with three hidden layers of width $64$ and ELU activations.
Their objectives and posterior sampling procedures are given in Appendices~\ref{app:original-npe} and~\ref{app:original-nre}.

\subsection{Brock--Hommes Experiment}
\label{app:new-bh-contours}

We use the model and initial values in Section~\ref{sec:new-brock-hommes}, with no burn-in.
Observations undergo an elementwise $\operatorname{asinh}$ transformation followed by scalar standardization using prior-predictive simulations; parameters are standardized using their prior means and standard deviations.
The DeepSets branch applies mean pooling to four-entry lag windows and returns $48$ outputs.
The LSTM branch has one layer with $128$ hidden units and a linear head returning $16$ outputs.
Preliminary comparisons of summary dimensions and branch allocations led to this $q=64$ configuration, which is shared by all five summary-based methods across observation sizes.
For flow matching, we solve the ODEs using fourth-order Runge--Kutta on a $64$-point time grid.

Figure~\ref{fig:new-bh-contours} compares all five methods and their raw-data counterparts on the first observed series and training seed $0$.
The left and right panels show $(g_2,b_2)$ and $(g_3,b_3)$, respectively.
Solid and dashed curves bound highest-density regions containing $50\%$ and $90\%$ of each fitted bivariate marginal kernel density estimate (KDE).

\subsubsection*{Interpretation of posterior contours.}
We fit each Gaussian KDE using all $2{,}000$ draws, with Scott's bandwidth multiplied by $1.35$, and calibrate its probability-mass thresholds before restricting the display range.
For visibility, we retain common axes based on the central $99.4\%$ of pooled draws from DGQB, SFM, SB-GAN, and Raw B-GAN.
Some wider distributions, particularly Raw FM and Raw NRE, extend substantially beyond these axes; the visible contour fragments alone do not enclose the stated probability masses.

The plots illustrate changes in posterior location and spread, with clear summary/raw differences for SFM and NPE.
These changes are not uniform across methods, observation sizes, or parameter marginals.
In particular, SB-GAN retains visible displacement from the generating parameter in several panels, DGQB produces very concentrated regions that sometimes exclude it, and NRE does not display monotone contraction as $n$ increases.
A single marginal region excluding the generating parameter does not establish miscalibration, and concentration alone does not establish accurate uncertainty quantification.
The reported masses describe each fitted two-dimensional KDE.
From this experiment alone, we do not intend to establish coverage or the accuracy of the full four-dimensional posterior, and the exact posterior is unavailable in closed form.
We therefore use these visualizations as qualitative diagnostics, with conclusions about relative performance based on the energy-score summaries instead.

\begin{figure}[p]
    \centering
    \includegraphics[width=\textwidth,height=0.78\textheight,keepaspectratio]{\arxivdir 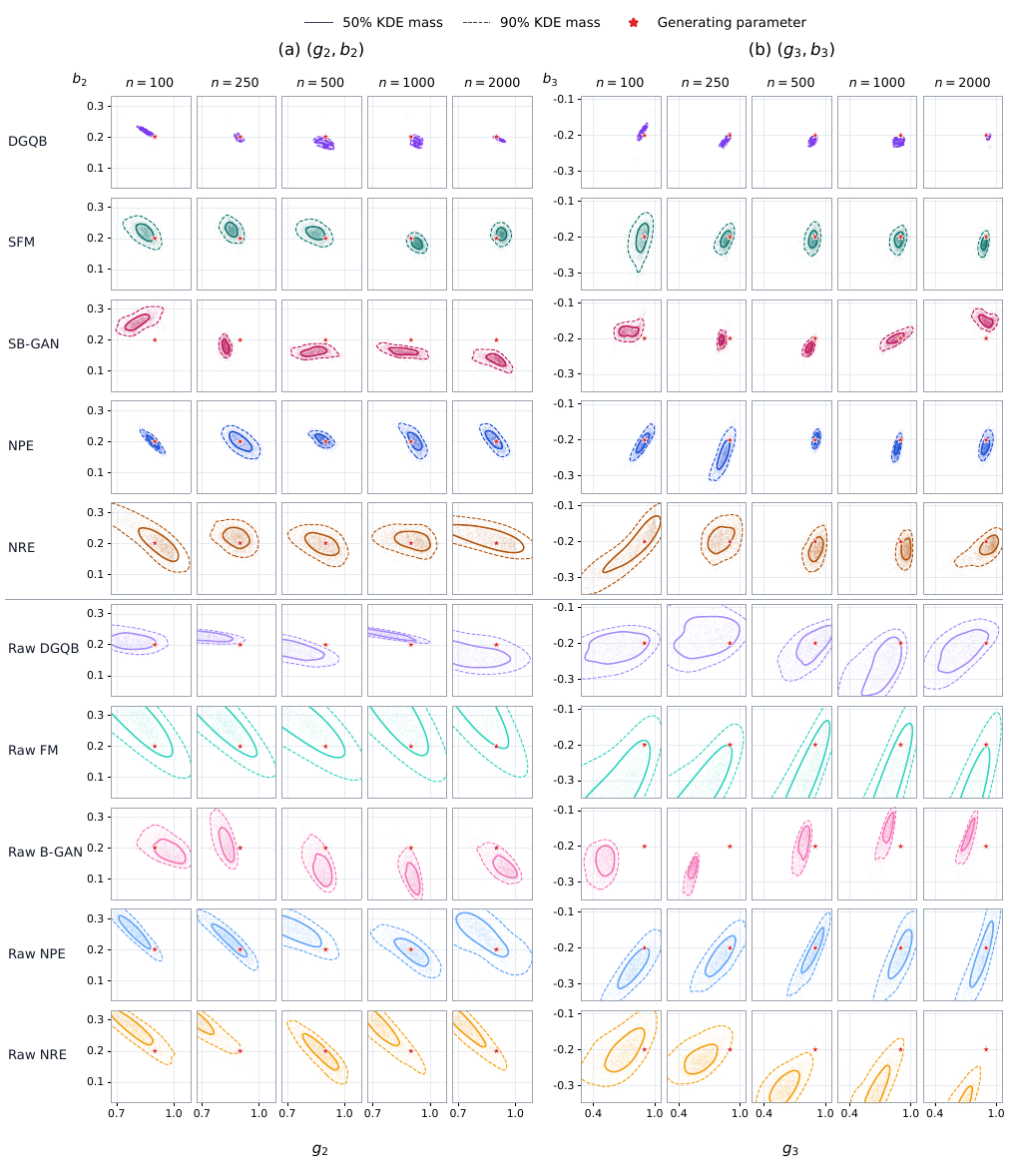}
    \caption{Brock--Hommes posterior draws and marginal KDE contours: (a) $(g_2,b_2)$ and (b) $(g_3,b_3)$.
    Rows show the five summary-based learners followed by their raw-data counterparts; columns within each half increase $n$.
    All panels use the first observed series, training seed $0$, and $2{,}000$ posterior draws.
    Solid and dashed contours enclose $50\%$ and $90\%$ of the full fitted marginal KDE; stars mark the generating parameter.
    Axes are shared within each half, and some contours extend outside the displayed range.}
    \label{fig:new-bh-contours}
\end{figure}
\clearpage

\subsubsection*{Additional NPE and NRE results.}
We jointly train NPE and NRE with the same $q=64$ DeepSets--LSTM encoder described above.
NPE uses five masked autoregressive transformations, each with three hidden layers of width $128$ and tanh activations; NRE uses a classifier with three hidden layers of width $128$ and ELU activations.
The raw versions use the full transformed observation vector as input.
NPE generates posterior draws directly, whereas the reported NRE draws use $64$ parallel Metropolis chains targeting the learned ratio times the known prior, with $1{,}500$ warm-up iterations and $2{,}000$ retained iterations per chain, followed by subsampling to $2{,}000$ draws.
Numerical convergence and effective-sample-size checks screen the NRE sampler and do not certify the accuracy of the learned posterior.

Figure~\ref{fig:new-bh-npe-nre-energy} reports the energy score in \eqref{eq:new-energy-score} for both methods and their raw-data counterparts.
For this additional comparison, all four learners are evaluated on the same first observed series, and we report the mean and sample standard deviation over training seeds $0,1,2$.
Thus, this figure uses one common test series, whereas Figure~\ref{fig:new-bh-energy} first averages over three test series within each seed.
Both NPE and NRE with jointly learned summaries achieve lower mean energy scores than their raw-data counterparts at every observation size considered.
NRE exhibits substantial seed-to-seed variation at $n=500$; all three seeds are retained in the summary.
These results support relative improvements under the reported energy-score criterion, without establishing calibrated uncertainty or consistent posterior recovery for every learner.

\begin{figure}[!ht]
    \centering
    \includegraphics[width=0.48\textwidth]{\arxivdir 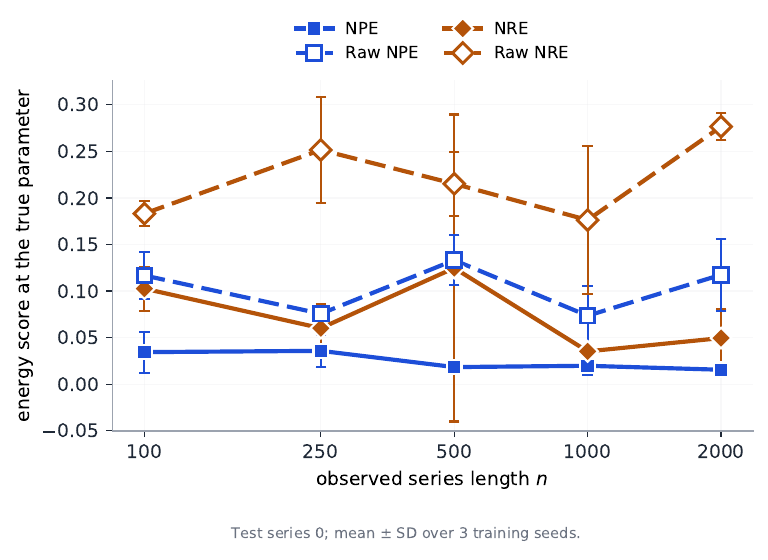}
    \caption{Brock--Hommes energy scores for NPE and NRE on the same first observed series: mean and sample SD across three training seeds, using $2{,}000$ posterior draws per run.
    Solid lines with filled markers denote learned summaries; dashed lines with hollow markers denote raw observations.
    Lower scores are better, and error bars show SD rather than confidence intervals.}
    \label{fig:new-bh-npe-nre-energy}
\end{figure}

\end{document}